\theoremstyle{plain}
\newtheorem{theorem}{Theorem}[section]
\newtheorem{proposition}[theorem]{Proposition}
\newtheorem{lemma}[theorem]{Lemma}
\newtheorem{corollary}[theorem]{Corollary}
\theoremstyle{definition}
\newtheorem{definition}[theorem]{Definition}
\theoremstyle{remark}
\newcommand{\kibitz}[2]{\ifnum\Comments=0\textcolor{#1}{#2}\fi}
\newcommand{\cready}[1]{#1}
\numberwithin{theorem}{section}
\def\conv{{\mathrm{conv}}}
\def\Rad{{\mathrm{Rad}}}
\def\eqref#1{equation~\ref{#1}}
\def\Eqref#1{Equation~\ref{#1}}
\def\1{\bm{1}}
\def\eps{{\epsilon}}
\def\vv{{\bm{v}}}
\def\vx{{\bm{x}}}
\def\vz{{\bm{z}}}
\DeclareMathAlphabet{\mathsfit}{\encodingdefault}{\sfdefault}{m}{sl}
\SetMathAlphabet{\mathsfit}{bold}{\encodingdefault}{\sfdefault}{bx}{n}
\def\gD{{\mathcal{D}}}
\def\gF{{\mathcal{F}}}
\def\gG{{\mathcal{G}}}
\def\gH{{\mathcal{H}}}
\def\gL{{\mathcal{L}}}
\def\gM{{\mathcal{M}}}
\def\gP{{\mathcal{P}}}
\def\gS{{\mathcal{S}}}
\def\gV{{\mathcal{V}}}
\def\gX{{\mathcal{X}}}
\def\gY{{\mathcal{Y}}}
\def\gZ{{\mathcal{Z}}}
\def\sP{{\mathbb{P}}}
\def\sR{{\mathbb{R}}}
\newcommand{\E}{\mathbb{E}}
\newcommand{\R}{\mathbb{R}}
\newcommand{\aobj}[1][\lambda]{\mathrm{Obj}^A_{#1}}
\def\1{\mathds{1}}
\newcommand{\norm}[1]{\left\lVert#1\right\rVert}
\DeclareMathOperator*{\argmin}{arg\,min}
\icmltitlerunning{Adversarially Robust Models may not Transfer Better: Domain Transferability from the View of Regularization}
\begin{document}

\twocolumn[
\icmltitle{Adversarially Robust Models may not Transfer Better: Sufficient Conditions for Domain Transferability from the View of Regularization}

\icmlsetsymbol{equal}{*}

\begin{icmlauthorlist}
\icmlauthor{Xiaojun Xu}{equal,uiuc}
\icmlauthor{Jacky Yibo Zhang}{equal,uiuc}
\icmlauthor{Evelyn Ma}{uiuc}
\icmlauthor{Danny Son}{uiuc}
\icmlauthor{Oluwasanmi Koyejo}{uiuc}
\icmlauthor{Bo Li}{uiuc}
\end{icmlauthorlist}

\icmlaffiliation{uiuc}{University of Illinois at Urbana-Champaign}

\icmlcorrespondingauthor{Xiaojun Xu}{xiaojun3@illinois.edu}
\icmlcorrespondingauthor{Jacky Yibo Zhang}{yiboz@illinois.edu}
\icmlcorrespondingauthor{Oluwasanmi Koyejo}{sanmi@illinois.edu}
\icmlcorrespondingauthor{Bo Li}{lbo@illinois.edu}

\icmlkeywords{Machine Learning, ICML}

\vskip 0.3in
]

\printAffiliationsAndNotice{\icmlEqualContribution} %

\begin{abstract}

Machine learning (ML) robustness and domain generalization are fundamentally correlated: they essentially concern data distribution shifts  under  adversarial and natural settings, respectively. On one hand, recent studies show that more robust (adversarially trained) models are more generalizable. On the other hand, there is a lack of theoretical understanding of their fundamental connections. In this paper, we explore the relationship between regularization and domain transferability considering different factors such as norm regularization and data augmentations (DA). We propose a general theoretical framework proving that factors involving the model function class regularization are sufficient conditions for \textit{relative} domain transferability. Our analysis implies that ``robustness" is  neither necessary nor sufficient for transferability; 
rather, regularization is a more fundamental perspective for understanding domain transferability. We then discuss popular DA protocols (including adversarial training) and show when they can be viewed as the function class regularization under certain conditions and therefore improve generalization. We conduct extensive experiments to verify our theoretical findings and show several counterexamples where robustness and generalization are negatively correlated on different datasets. 

\end{abstract}

\section{Introduction}%
Domain generalization (or domain transferability) is the task of training machine learning models with data from one or more \textit{source} domains that can be adapted to a \textit{target} domain, often via low-cost fine-tuning. 
Thus, domain generalization refers to approaches designed to address the \textit{natural data distribution shift} problem~\citep{muandet2013domain,rosenfeld2021online}.
A wide array of approaches have been proposed to address domain transferability, including fine-tuning the last layer of DNNs~\citep{huang2018domain}, invariant feature optimization~\citep{muandet2013domain}, efficient model selection for fine-tuning~\citep{you2019towards}, and optimal transport based domain adaptation~\citep{courty2016optimal}.
Understanding domain generalization has emerged as an important task in the machine learning community.

\begin{figure*}[t]
    \centering
    \includegraphics[width=0.98\linewidth]{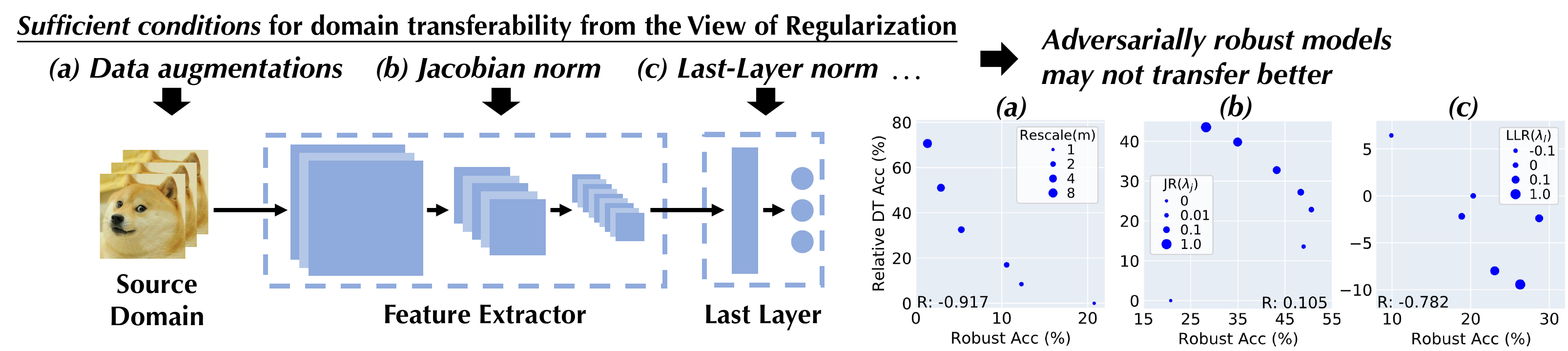}
    \caption{Illustration of robustness and domain transferability in different conditions. \cready{We study different augmentation and regularization techniques that can serve as sufficient conditions for domain transferability. We observe that adversarially robust models do not necessarily achieve a better performance in domain transferability and sometimes they are negatively correlated.} }
    \label{fig:counter-eg}
\end{figure*}

On the other hand, robust machine learning aims to tackle the problem of \textit{adversarial data distribution shift}. 
Both empirical and certified robust learning approaches have been proposed, such as empirical adversarial training~\citep{madry2018towards} and certified defenses based on both deterministic and probabilistic approaches~\citep{cohen2019certified,li2019robustra,li2021tss,li2020sok}.

Recent studies~\citep{salman2020adversarially, utrera2020adversarially} draw a connection between domain transferability and robustness, and suggest that adversarially robust models\cready{ (i.e., models with good accuracy under adversarial attacks)} are more domain transferable. However, a theoretical analysis of their fundamental connections is still lacking, and it is unclear whether robustness is necessary or sufficient. To fill in this gap, this paper aims to answer the following questions: \textit{Is model robustness sufficient or necessary for domain transferability? What are sufficient conditions for domain transferability? }

To answer the first question,  our analysis and experiments show that adversarial \textit{robustness is neither sufficient nor necessary} for domain transferability and they can even be negatively correlated. To answer the second question,  we first observe that domain transferability is  fundamentally a ``relative'' concept, as it by definition involves two domains, i.e., the source/target domain. With the observation, we propose a general theoretical framework that characterizes sufficient conditions for the \textit{relative} domain transferability from the view of function class regularization. The relative domain transferability, loosely speaking, is the performance of the fine-tuned source model on the target domain relative to the performance of the source model on the source domain. We then prove an inequality showing that stronger \textit{regularization} on the feature extractor (during the source model training process) implies a better relative domain transferability. %
We also discuss what data augmentations can be viewed as function class regularization generally. 
Since adversarial training can be viewed as a regularization under some conditions~\citep{roth2020reg, el1997robust, bertsimas2018characterization}, our work implies that the regularization effect of adversarial training is a better and more fundamental explanation for the connection between adversarial training and domain transferability.

To verify our theory, we conduct extensive experiments on ImageNet (CIFAR-10 as target domain) and CIFAR-10 (SVHN as target domain) based on different models. We show that regularizations such as norm regularization and  certain data augmentations can control the relative and absolute domain transferability, while the robustness and domain transferability can be even negatively correlated with the domain transferability, as illustrated in Fig.~\ref{fig:counter-eg}. 

{\textbf{Technical contributions.}} Our theoretical analysis and empirical findings show that, instead of robustness or adversarial training, regularization is a more fundamental perspective to understand domain transferability. Concretely,
\begin{itemize}[leftmargin=0.3cm, topsep=1pt,itemsep=1pt,partopsep=1pt, parsep=1pt]
    \item \cready{ We show that improving adversarial robustness is neither necessary nor sufficient for improving domain transferability without additional conditions, as shown in Section~\ref{subsec:example}. %
    \item We propose a theoretical framework to analyze the sufficient conditions for domain transferability from the view of function class regularization (Section~\ref{sec:up}\&\ref{subsec:generalized-ub}). We prove that shrinking the function class of feature extractors during training monotonically decreases a tight upper bound on the relative domain transferability loss. Therefore, {it is reasonable to expect} that imposing regularization on the feature extractor during training can lead to a better relative domain transferability. %
    \item We provide general analysis on when data augmentations (including adversarial training) can be viewed as regularization. In particular, we verify analysis based on the data augmentations of {Gaussian} noise, rotation, and translation, as discussed in Section~\ref{sec:DA}. 
    \item We conduct extensive experiments on different datasets and model architectures to verify our theoretical claims (Section~\ref{sec:exp}). We also show counterexamples where adversarial robustness is significantly negatively correlated with domain transferability. 
    }
    
\end{itemize}

\cready{Taken together, our results suggest a more nuanced explanation of the phenomenon that ``adversarially trained models transfer better,'' suggesting instead that adversarial training implies training with regularization, which, in turn, implies better transferability. 
As a consequence, although adversarial training implies better adversarial robustness, better adversarial robustness does not necessarily imply better transferability.
    }

\textbf{Related Work. }%
{Domain Transferability} has been analyzed in different settings. %
\citet{muandet2013domain} present a  generalization bound for classification tasks based on the properties of the assumed prior over training environments.
\citet{rosenfeld2021online} model domain transferability/generalization as an online game and show that generalizing beyond the  convex hull  {of training environments} is NP-hard.
Given the complexity of domain transferability analysis, recent empirical studies observe that adversarially trained models transfer better~\citep{salman2020adversarially, utrera2020adversarially}.

{Model robustness} is an important topic given recent diverse adversarial attacks~\citep{goodfellow2014explaining,carlini2017towards}. These attacks may be launched without access to model parameters~\citep{tu2019autozoom} or even with the model predictions alone~\citep{chen2020hopskipjumpattack}. Different approaches have been proposed to improve model robustness against adversarial attacks~\citep{yangli2021trs,ma2018characterizing,xiao2018characterizing}. Adversarial training has been shown to be effective empirically~\citep{madry2018towards,zhang2019theoretically,miyato2018virtual}.
Some studies have shown that robustness is related to other model characteristics, such as transferability and invertibility~\citep{engstrom2019adversarial,liang2020uncovering}.
A recent work \citep{deng2021adversarial} theoretically analyzes how adversarial training helps transfer learning. Although their proof implicitly depends on regularization, the authors only focus on adversarial training for linear models, while we directly focus on regularization for general models (e.g., DNNs).

\begin{figure*}[h!]
    \centering
    \includegraphics[width=0.7\linewidth]{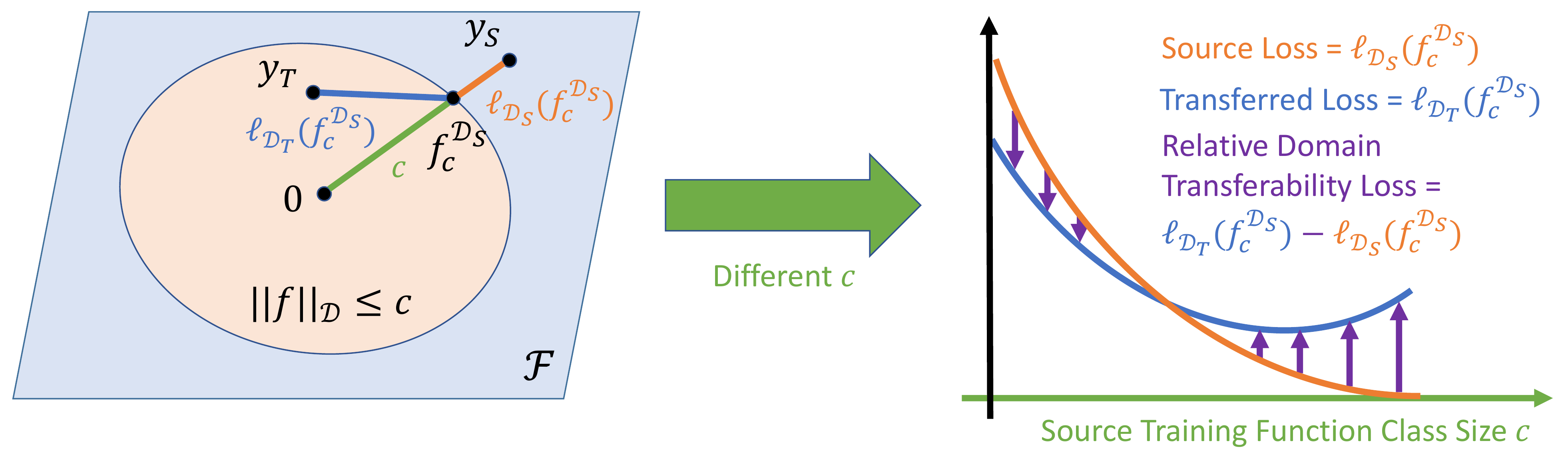}
    \caption{\small
    The left figure illustrates the example in the function space $\gF$ given a regularization parameter $c$. The right figure shows the relations between domain transferability and the $c$.
    In this example, the stronger the regularization effect (smaller $c$) is, the lower the relative domain transferability loss is (\textcolor{violet}{violet} arrow), and the better the relative domain transferability is.  }
    \label{fig:example-illustration}
\end{figure*}

\section{Sufficient Conditions for Domain Transferability}\label{sec:analysis} %

In this section, we theoretically analyze the problem of domain transferability from the view of regularization and discuss some sufficient conditions for  good  transferability. All of the proofs are provided in Section~\ref{sec:proofs} in the appendix.

\textbf{Notations.} We denote the input space as $\gX$; the feature space as $\gZ$ and the output space as $\gY$. Let the fine-tuning function class be $g\in \gG$. Given a feature extractor $f:\gX\to \gZ$ and a fine-tuning function $g:\gZ\to \gY$, the  full model is $g\circ f:\gX\to \gY$. We denote $\gP_{\gX\times \gY}$ as the set of distributions on $\gX\times \gY$. The loss function on $\gY$ is denoted by $\ell:\gY\times \gY \to \sR_+$. The population loss function based on data distribution $\gD\in \gP_{\gX\times \gY}$ and a model $g\circ f$ is defined as
\begin{align}
    \ell_{\gD}(g\circ f):= \E_{(x,y)\sim \gD} [\ell(g\circ f (x),y)].\label{eq:loss-def}
\end{align}
In the following, we first provide an example to show that the robustness can be irrelevant to domain transferability and to illustrate why one might investigate domain transferability from the view of regularization.

\subsection{A Toy Example: Motivation and Intuition} \label{subsec:example}

In this subsection, we construct a simple example where improving adversarial robustness is neither necessary nor sufficient for improving  (relative) domain transferability, yet stronger regularization sufficiently improves relative domain transferability. %
The settings introduced in this subsection are only applied in this subsection.

We consider the case that $\gX= \sR^m$ and $\gY=\sR^d$. Given an input $x\in \gX$, the ground truth target for the source domain is $y_S(x)$ generated by a function $y_S:\sR^m\to\sR^d$. Similarly, we define $y_T$ for the target domain. In this example, for simplicity, we neglect the fine-tuning process but directly consider learning a function $f:\sR^m\to \sR^d$ with a norm $\|\cdot\|$ on $\sR^d.$ We note that the analysis in this subsection holds with any choice of norm on $\sR^d$. 

Given the source and target distributions $\gD_S, \gD_T\in \sP_{\gX\times \gY}$, we consider the case that their marginal distributions on the input space $\gX$ are both $\gD$, while $y_S$ and $y_T$ could be different. 
Moreover, we consider the case that the support of the input data distribution $\gD$ lies on a low-dimensional manifold $\gM\subset \gX=\sR^m$ such that for $\forall x\in \gM$, any Euclidean ball centered at $x$ has non-empty intersection with $\sR^m\backslash \gM$. 
Given the distribution $\gD$, we define a norm for functions $f:\sR^m\to \sR^d$ as
$
    \|f\|_\gD:={\E_{x\sim \gD}[\|f(x)\|]},
$
where we view two functions $f_1,f_2$ as the same if $\|f_1-f_2\|_\gD=0$. 
Therefore, given a model $f$, for the source domain and the target domain we consider the respective loss functions as
\begin{align}
    \ell_{\gD_S}(f)&=\E_{x\sim \gD}[\|f(x)-y_S(x)\|]=\|f-y_S\|_{\gD},\label{eq:ex-1}\\
    \ell_{\gD_T}(f)&=\E_{x\sim \gD}[\|f(x)-y_T(x)\|]=\|f-y_T\|_{\gD}.
\end{align}

\cready{The toy example serves two purposes: (1) supporting the ``neither necessary nor sufficient'' claim; and (2) motivating the perspective of regularization. 
For the first purpose, the main intuition is that we can construct a setting where the domain transferability is only evaluated on a low-dimensional manifold while the adversarial robustness is only evaluated off the manifold. In such cases, a model having better adversarial robustness does not imply it has better domain transferability, and similarly a model having better domain transferability does not imply it has better  adversarial robustness. For the second purpose, as illustrated in Figure~\ref{fig:example-illustration}, regularization is related to the domain transferability in this toy example. This motivates the general study of the relationship between regularization and domain transferability in Section~\ref{sec:up}. 
}

\textbf{Robustness is neither necessary nor sufficient for domain transferability. } We may see the relation between adversarial robustness and domain transferability in this example as follows. Given a source model $f^{\gD_S}:\sR^m\to \sR^d$, we consider the adversarial loss on an input $x\in \gM$,  i.e.,
\begin{align}
    \label{eqn:adv-loss}
    \ell_{adv}(x; f^{\gD_S}):=\max_{\delta:\|\delta\|_2\leq \epsilon} \ell(f^{\gD_S}(x+\delta), y_S(x)),
\end{align}
as an indicator of its robustness on the input $x$ on the source domain. The lower the adversarial loss, the better the robustness. We can see that both the regular loss functions $\ell_{\gD_S}(f^{\gD_S})$ and $\ell_{\gD_T}(f^{\gD_S})$ only evaluate $f^{\gD_S}$ on the low-dimensional manifold $\gM$. Therefore, an adversarial perturbation $\delta\in \sR^m$ could make $x+\delta\notin  \gM$ if the loss value is sufficiently high in $\{x+\delta\mid \|\delta\|_2\leq \epsilon\}\backslash \gM$. As a result, in such cases the adversarial loss $\ell_{adv}(x; f^{\gD_S})$ could be arbitrarily high without affecting either the source domain performance $\ell_{\gD_S}(f^{\gD_S})$ or the target domain performance $\ell_{\gD_T}(f^{\gD_S})$, i.e., without affecting their transferability. This implies that improving adversarial robustness is neither necessary nor sufficient for improving domain transferability.  

The toy example illustrates that robustness can be irrelevant to domain transferability, and then the question one may naturally ask is ``what may have a stronger relevance to domain transferability?'' To provide the intuition that regularization may be the key, we make the following analysis using the same toy example. 

\textbf{Intuition on why regularization matters.} Denoting a function space $\gF=\{f:\sR^m\to \sR^d\mid \|f\|_\gD<\infty\}$, we assume $y_S, y_T\in \gF$ such that we can compare $f, y_S, y_T$ in the same space.  Therefore, given $c>0$ as a regularization parameter, we define the domain transferability problem as:
\begin{align}
    &\text{Learning a source model:}\\
    &\qquad f^{\gD_S}_{c}\in \argmin_{f\in \gF} \ell_{\gD_S}(f), \quad \text{s.t.}\ \ \|f\|_\gD\leq c;\label{eq:ex-2}\\
    &\text{Testing on a target domain:} \qquad \ell_{\gD_T}(f^{\gD_S}_{c}),
\end{align}
where the minimizer is $f^{\gD_S}_{c}:=y_S\min\{1, \tfrac{c}{\|y_S\|_\gD}\}$, the source domain loss is $\ell_{\gD_S}(f)=\|f-y_S\|_{\gD}$, and the  target domain loss is $\ell_{\gD_T}(f)=\|f-y_T\|_{\gD}$. We prove in Proposition~\ref{prop:example} that $f^{\gD_S}_{c}$ is indeed a minimizer of \eqref{eq:ex-2}.%

{

}
 
Considering the relation between (relative) domain transferability and the regularization parameter $c$, we have an interesting finding. 
An illustration of the finding is shown in Figure~\ref{fig:example-illustration}, and a more formal statement is provided in Proposition~\ref{prop:example}. As we can see, the relation between regularization and domain transferability is clear if we consider the domain transferability in a ``relative'' way, i.e., the loss value on the target domain minus the loss value on the source domain. A  formal definition of the relative transferability loss is deferred to Definition~\ref{def:relative-transf} in the next subsection.

\begin{proposition}\label{prop:example}
Given the toy example problem defined in Section~\ref{subsec:example}, $f^{\gD_S}_{c}$ is a minimizer of \eqref{eq:ex-2}. If $ c\geq c'\geq 0$, then the relative domain transferability loss $\ell_{\gD_T}(f^{\gD_S}_{c})-\ell_{\gD_S}(f^{\gD_S}_{c})\geq \ell_{\gD_T}(f^{\gD_S}_{c'})-\ell_{\gD_S}(f^{\gD_S}_{c'})$.
\end{proposition}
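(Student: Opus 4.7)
The plan is to handle the two claims of Proposition~\ref{prop:example} in sequence, and in both cases the only nontrivial tool is the triangle inequality for the norm $\|\cdot\|_{\mathcal{D}}$. Throughout, write $\lambda_c := \min\{1, c/\|y_S\|_{\mathcal{D}}\}$ so that $f_c^{\mathcal{D}_S} = \lambda_c y_S$, and observe that $\lambda_c$ is non-decreasing in $c$ and equals $1$ exactly when $c \geq \|y_S\|_{\mathcal{D}}$.

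For the first claim (that $f_c^{\mathcal{D}_S}$ minimizes \eqref{eq:ex-2}), the plan is to split on whether the constraint is slack. When $c \geq \|y_S\|_{\mathcal{D}}$, the point $f = y_S$ is feasible and achieves $\ell_{\mathcal{D}_S}(f) = 0$, so $y_S = f_c^{\mathcal{D}_S}$ is trivially optimal. When $c < \|y_S\|_{\mathcal{D}}$, any feasible $f$ satisfies $\|f\|_{\mathcal{D}} \leq c$, so the reverse triangle inequality gives $\|f - y_S\|_{\mathcal{D}} \geq \|y_S\|_{\mathcal{D}} - \|f\|_{\mathcal{D}} \geq \|y_S\|_{\mathcal{D}} - c$; and a direct computation shows that $f_c^{\mathcal{D}_S} = (c/\|y_S\|_{\mathcal{D}}) y_S$ attains this lower bound, so it is a minimizer.

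For the monotonicity claim, first express the relative transferability loss in closed form as
\begin{equation*}
g(c) \;:=\; \ell_{\mathcal{D}_T}(f_c^{\mathcal{D}_S}) - \ell_{\mathcal{D}_S}(f_c^{\mathcal{D}_S}) \;=\; \|\lambda_c y_S - y_T\|_{\mathcal{D}} \;-\; (1 - \lambda_c)\|y_S\|_{\mathcal{D}}.
\end{equation*}
Since $\lambda_c$ is non-decreasing in $c$, it suffices to show that the map $t \mapsto \|t y_S - y_T\|_{\mathcal{D}} - (1 - t)\|y_S\|_{\mathcal{D}}$ is non-decreasing for $t \in [0, 1]$. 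For $t \geq t'$, the triangle inequality yields $\|t y_S - y_T\|_{\mathcal{D}} \geq \|t' y_S - y_T\|_{\mathcal{D}} - (t - t')\|y_S\|_{\mathcal{D}}$, and adding the term $+(t - t')\|y_S\|_{\mathcal{D}}$ on both sides of the difference $g(t) - g(t')$ cancels this slack exactly, giving $g(t) \geq g(t')$. Applied to $t = \lambda_c$ and $t' = \lambda_{c'}$, this yields the desired inequality.

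The only mild subtlety will be stitching together the regime $c < \|y_S\|_{\mathcal{D}}$ (where $\lambda_c = c/\|y_S\|_{\mathcal{D}}$ strictly increases in $c$) with the regime $c \geq \|y_S\|_{\mathcal{D}}$ (where $\lambda_c$ saturates at $1$ and $g$ is constant, equal to $\|y_S - y_T\|_{\mathcal{D}}$); but since the function of $\lambda$ is monotone on all of $[0,1]$ and $\lambda_c$ is a non-decreasing function of $c$, composing them preserves monotonicity and no case split beyond this is needed. No other machinery from the paper is required—the example is deliberately small enough that triangle inequality on $\|\cdot\|_{\mathcal{D}}$ and the explicit form of $f_c^{\mathcal{D}_S}$ are sufficient.
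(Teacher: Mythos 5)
Your proposal is correct and follows essentially the same route as the paper's proof: the minimizer claim via the reverse triangle inequality in the two regimes of the constraint, and the monotonicity claim via the triangle inequality applied to $\|f^{\gD_S}_{c}-f^{\gD_S}_{c'}\|_\gD=(\lambda_c-\lambda_{c'})\|y_S\|_\gD$ against the distances to $y_T$. The only difference is cosmetic: you absorb the paper's case split at $c=\|y_S\|_\gD$ into the single non-decreasing parametrization $\lambda_c=\min\{1,c/\|y_S\|_\gD\}$, which handles the saturated regime (where the paper notes the loss is constant) by composition of monotone maps.
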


As we can see from this toy example, robustness is neither necessary nor sufficient to characterize domain transferability. However, there is a monotone relation between the regularization strength and the relative domain transferability loss. \cready{Although the above proposition is derived specifically for the toy example, similar behavior is also observed in our experiments.}
Naturally, these findings motivate the study of the connections between the regularization of the training process and domain transferability in general, as we consider next.

\subsection{Upper Bound of Relative Domain Transferability}  \label{sec:up}
In this subsection, we consider the general transferability problem with fine-tuning. 
We prove that there is a monotone decreasing relationship between the regularization strength and a tight upper bound on the relative domain transferability loss. %
Given a training algorithm $A$, it takes a data distribution $\gD$ and outputs a feature extractor $f_A^{\gD}\in \gF_A$ chosen from a function class $\gF_A$ as well as a fine-tuning function $g_A^{\gD}\in \gG$. First, we formally define the relative domain transferability loss.
\begin{definition}[Relative Domain Transferability Loss]\label{def:relative-transf}
Given the training algorithm $A$ and a pair of distributions $\gD_S,\gD_T\in \gP_{\gX\times \gY}$, the relative domain transferability loss between $\gD_S,\gD_T$ is defined to be the difference of fine-tuned losses, i.e.,
\begin{align}
    \tau(A;\gD_S,\gD_T):=\inf_{g\in \gG} \ell_{\gD_T}(g\circ f_A^{\gD_S})- \ell_{\gD_S}(g_A^{\gD_S} \circ f_A^{\gD_S}).
\end{align}
\end{definition}
\cready{As we can see, when $\ell_{\gD_S}(g_A^{\gD_S} \circ f_A^{\gD_S})$ is the same, smaller $\tau(A;\gD_S,\gD_T)$ means the better performance on the target domain.}

Another perspective of Definition~\ref{def:relative-transf} is that $\inf_{g\in \gG} \ell_{\gD_T}(g\circ f_A^{\gD_S})= \ell_{\gD_S}(g_A^{\gD_S} \circ f_A^{\gD_S})+\tau(A;\gD_S,\gD_T)$. From this perspective, the transferred loss is the source loss plus an additional term to be upper bounded by a certain distance metric between the source and target distributions -- as is common in the literature of domain adaptation (e.g., \cite{ben2007analysis,zhao2019learning}). The key question of the ``distance metric'' remains unanswered. To this end, we propose the following.
\begin{definition}[$(\gG,\gF)$-pseudometric]\label{def:pseudometric}
Given a fine-tuning function class $\gG$, a feature extractor function class $\gF$ and distributions $ \gD_S,\gD_T\in \gP_{\gX\times \gY}$, the $(\gG, \gF)$-pseudometric between $\gD_S,\gD_T$ is
\begin{align}
    d_{\gG,\gF}(\gD_S,\gD_T):=\sup_{f\in \gF}|\inf_{g\in \gG} \ell_{\gD_S}(g\circ f)-\inf_{g\in \gG} \ell_{\gD_T}(g\circ f)|.
\end{align}
Since the fine-tuning function class is usually simple and fixed, we will use $d_{\gF}$ as an abbreviation when $\gG$ is clear.
\end{definition}
It can be easily verified that $d_{\gG, \gF}$ is a pseudometric that measures the \textit{distance} between two distributions, as shown in the following proposition.
\begin{proposition}\label{prop:pseudometric}
$d_{\gG, \gF}(\cdot,\cdot):\gP_{\gX\times \gY}\times \gP_{\gX\times \gY}\to \sR_+$ satisfies the following properties.
\begin{enumerate}[leftmargin=0.5cm]
    \item (Symmetry) $d_{\gG, \gF}(\gD_S,\gD_T)=d_{\gG, \gF}(\gD_T,\gD_S)$.
    \item (Triangle Inequality) For $\forall \gD'\in \gP_{\gX\times \gY}$, we have  $d_{\gG, \gF}(\gD_S,\gD_T)\leq d_{\gG, \gF}(\gD_S,\gD')+d_{\gG, \gF}(\gD',\gD_T)$.
    \item (Weak Zero Property) For $\forall \gD\in \gP_{\gX\times \gY}$: $d_{\gG, \gF}(\gD,\gD)=0$.
\end{enumerate}
\end{proposition}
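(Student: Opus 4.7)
The plan is to verify each of the three properties directly from the definition of $d_{\gG,\gF}$, leveraging standard properties of $|\cdot|$ and $\sup$. All three parts reduce to pointwise (in $f$) statements about the scalar function $\phi_\gD(f):=\inf_{g\in\gG}\ell_\gD(g\circ f)$, which makes the argument quite short.

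First, for symmetry I would simply observe that $|\phi_{\gD_S}(f)-\phi_{\gD_T}(f)|=|\phi_{\gD_T}(f)-\phi_{\gD_S}(f)|$ for every $f\in\gF$; taking $\sup_{f\in\gF}$ on both sides yields $d_{\gG,\gF}(\gD_S,\gD_T)=d_{\gG,\gF}(\gD_T,\gD_S)$. For the weak zero property, I would note that $|\phi_\gD(f)-\phi_\gD(f)|=0$ for every $f$, and hence the supremum is $0$.

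For the triangle inequality, I would fix an arbitrary $\gD'\in\gP_{\gX\times\gY}$ and, for each $f\in\gF$, apply the triangle inequality for absolute values to the real numbers $\phi_{\gD_S}(f)$, $\phi_{\gD'}(f)$, $\phi_{\gD_T}(f)$, obtaining
\begin{equation*}
|\phi_{\gD_S}(f)-\phi_{\gD_T}(f)|\leq |\phi_{\gD_S}(f)-\phi_{\gD'}(f)|+|\phi_{\gD'}(f)-\phi_{\gD_T}(f)|.
\end{equation*}
I would then take $\sup_{f\in\gF}$ on the left-hand side, and on the right-hand side use the standard bound $\sup_f(a(f)+b(f))\leq \sup_f a(f)+\sup_f b(f)$ with $a(f)=|\phi_{\gD_S}(f)-\phi_{\gD'}(f)|$ and $b(f)=|\phi_{\gD'}(f)-\phi_{\gD_T}(f)|$, yielding $d_{\gG,\gF}(\gD_S,\gD_T)\leq d_{\gG,\gF}(\gD_S,\gD')+d_{\gG,\gF}(\gD',\gD_T)$.

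No step is really difficult; the only mild subtlety is that $\phi_\gD(f)$ is defined by an infimum, so I should briefly justify that $\phi_\gD(f)\in[0,\infty]$ is well-defined (using $\ell\geq 0$) and, to be safe, assume $\phi_\gD(f)<\infty$ for the $f$ appearing in the supremum (or allow values in the extended reals and interpret $\infty-\infty$ via the supremum being attained by finite terms). Aside from that bookkeeping, the proposition is an immediate consequence of the pointwise $|\cdot|$ inequalities and monotonicity of $\sup$.
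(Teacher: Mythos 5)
Your proof is correct and follows essentially the same route as the paper's: symmetry and the weak zero property are immediate from the definition, and the triangle inequality comes from inserting the intermediate term $\inf_{g\in\gG}\ell_{\gD'}(g\circ f)$, applying the triangle inequality for absolute values pointwise in $f$, and then using subadditivity of the supremum. Your extra remark on the well-definedness of the infimum is harmless bookkeeping that the paper omits.
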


The motivation of the $(\gG, \gF)$-pseudometric comes from the following observations. We want to study what factors affect how a source model transfers to the target domain. The obvious factor is the difference between the two domains. But the function class where the model is trained from is also an important factor (e.g., the example in Section~\ref{subsec:example}). 
Note that the proposed $(\gG, \gF)$-pseudometric is both a complexity measure of the model function class and a distance measure of two distributions. Given a certain fixed function class,  the $(\gG, \gF)$-pseudometric can serve as a distance measure related to the Wasserstein distance or the total variance distance.  
In proposition~\ref{prop:wasserstein} in the appendix, we show that, if the loss function class is Lipschitz, then the $(\gG, \gF)$-pseudometric between $\mathcal{D}_S$ and $\mathcal{D}_T$ is upper bounded by the product of the Lipschitz constant and the Wasserstein distance between  $\mathcal{D}_S$ and $\mathcal{D}_T$.
Moreover, in proposition~\ref{prop:d-TV} in the appendix, we show that the total variation distance upper bounds the $(\gG, \gF)$-pseudometric if we are working in the realm of multi-class classification and the loss function is the 0-1 loss.

The major difference of the $(\gG, \gF)$-pseudometric with existing metrics for domain transfer~\citep{ben2010theory, mansour2009domain, Acuna2021fDomainAdversarialLT, zhao2019learning} is that the proposed $(\gG, \gF)$-pseudometric is more general. Concretely, the aforementioned work only considers the distributions on the input space $\mathcal{X}$, while we consider both the input space and the output space, i.e, $\mathcal{X}\times\mathcal{Y}$. This difference enables us to consider the fine-tuning process, which is important and widely applied in practice.

In this section, we consider a fixed fine-tuning function class $\gG$ and feature extractor function class $\gF_A$ given by the training algorithm $A$. Thus, we denote $d_{\gG, \gF}$  as $d_{\gF_A}$ for the remainder of the paper. With the definition of $d_{\gF_A}$, we can derive the following result which provides justification for the regularization perspective.
\begin{theorem}\label{thm-bound}
Given a training algorithm $A$, for $\forall \gD_S,\gD_T\in \gP_{\gX\times \gY}$ we have
\begin{align}
    \tau(A;\gD_S,\gD_T) &\leq d_{\gF_A}(\gD_S,\gD_T), \ \text{ or equivalently,}\\
    \inf_{g\in \gG} \ell_{\gD_T}(g\circ f_A^{\gD_S})&\leq \ell_{\gD_S}(g_A^{\gD_S} \circ f_A^{\gD_S}) + d_{\gF_A}(\gD_S,\gD_T).
\end{align}
\end{theorem}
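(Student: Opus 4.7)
The plan is to reduce the claim to a definitional chase: unfold $\tau$, drop the algorithmic fine-tuner in favor of an infimum (which only weakens the negative term), and then recognize the resulting expression as a single evaluation of the supremum that defines $d_{\gF_A}$. No additional regularity of the loss, the feature extractor class, or the fine-tuning class is required, which is what makes the bound attractive as a ``complexity-aware'' substitute for an $\gX$-only discrepancy.

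First, I would observe that, regardless of how the algorithm $A$ selects the source-trained head $g_A^{\gD_S}$, one always has the trivial lower bound
\begin{align*}
\ell_{\gD_S}(g_A^{\gD_S}\circ f_A^{\gD_S}) \;\geq\; \inf_{g\in\gG}\ell_{\gD_S}(g\circ f_A^{\gD_S}),
\end{align*}
simply because $g_A^{\gD_S}\in\gG$ and the right-hand side is the infimum over $\gG$. Plugging this into Definition~\ref{def:relative-transf} gives
\begin{align*}
\tau(A;\gD_S,\gD_T) \;\leq\; \inf_{g\in\gG}\ell_{\gD_T}(g\circ f_A^{\gD_S}) \;-\; \inf_{g\in\gG}\ell_{\gD_S}(g\circ f_A^{\gD_S}).
\end{align*}

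Next, I would upper-bound this signed difference by its absolute value and then invoke the $(\gG,\gF_A)$-pseudometric. Since $f_A^{\gD_S}\in\gF_A$, the supremum in Definition~\ref{def:pseudometric} dominates any particular evaluation at $f_A^{\gD_S}$, yielding
\begin{align*}
\tau(A;\gD_S,\gD_T)
&\leq \Bigl|\inf_{g\in\gG}\ell_{\gD_T}(g\circ f_A^{\gD_S}) - \inf_{g\in\gG}\ell_{\gD_S}(g\circ f_A^{\gD_S})\Bigr| \\
&\leq \sup_{f\in\gF_A}\Bigl|\inf_{g\in\gG}\ell_{\gD_T}(g\circ f) - \inf_{g\in\gG}\ell_{\gD_S}(g\circ f)\Bigr|
 \;=\; d_{\gF_A}(\gD_S,\gD_T).
\end{align*}
The equivalent form $\inf_{g\in\gG}\ell_{\gD_T}(g\circ f_A^{\gD_S}) \leq \ell_{\gD_S}(g_A^{\gD_S}\circ f_A^{\gD_S}) + d_{\gF_A}(\gD_S,\gD_T)$ then follows by moving $\ell_{\gD_S}(g_A^{\gD_S}\circ f_A^{\gD_S})$ to the right.

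I do not anticipate a substantive obstacle here: the only conceptual point is realizing that one should \emph{not} insist that $g_A^{\gD_S}$ be the exact minimizer of the source-side loss, because the infimum-substitution step handles this for free and makes the bound hold for any training algorithm. The benefit of routing the argument through the two infima (rather than, say, a uniform bound $\sup_{g\in\gG}|\ell_{\gD_T}(g\circ f)-\ell_{\gD_S}(g\circ f)|$) is precisely that it keeps the right-hand side tight and interpretable as a complexity measure of $\gF_A$, which is what the regularization narrative in Section~\ref{sec:up} requires.
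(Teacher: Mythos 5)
Your proof is correct and follows exactly the same route as the paper's: lower-bound the algorithmic source term by the infimum over $\gG$, pass to the absolute value, and dominate by the supremum over $f\in\gF_A$ defining $d_{\gF_A}$. Nothing to add.
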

\textbf{Interpretation:} As we can see, the above theorem provides sufficient conditions for good domain transferability. There is a monotone relation between the regularization strength and $ d_{\gF_A}(\gD_S,\gD_T)$, i.e., the upper bound on the relative domain transferability loss $\tau(A;\gD_S,\gD_T)$. More explicitly, if a training algorithm $A'$ has $\gF_{A'}\subseteq \gF_{A}$, then $d_{\gF_{A'}}(\gD_S,\gD_T)\leq d_{\gF_{A}}(\gD_S,\gD_T)$. Moreover, small $ d_{\gF_{A}}(\gD_S,\gD_T)$ implies good relative domain transferability. From this perspective, we can see that we need both small $ d_{\gF_{A}}(\gD_S,\gD_T)$ and small source loss $\ell_{\gD_S}(g_A^{\gD_S} \circ f_A^{\gD_S})$ to guarantee good absolute domain transferability. Note that there is a possible trade-off, i.e., with $\gF_{A}$ being smaller, $d_{\gF_A}(\gD_S,\gD_T)$ decreases but possibly $\ell_{\gD_S}(g_A^{\gD_S} \circ f_A^{\gD_S})$ increases due to the limited power of $\gF_A$.  On the other hand, there may not be such trade-off if $\gD_S$ and $\gD_T$ are close enough such that $d_{\gF_A}(\gD_S,\gD_T)$ is small. %

To make the upper bound more meaningful, we need to study its tightness. 
\begin{theorem}\label{thm-tightness}
Given any source distribution $\gD_S\in \gP_{\gX\times \sR^d}$, any fine-tuning function class $\gG$ where $\gG$ includes the zero function, we assume the training algorithm $A$ is optimal, i.e.,
$
    \ell_{\gD_S}(g_A^{\gD_S} \circ f_A^{\gD_S})= \inf_{g\in \gG, f\in \gF_{A}}\ell_{\gD_S}(g \circ f).
$
We assume some properties of the loss function $\ell:\sR^d\times \sR^d\to \sR_+$: it is differentiable and strictly convex w.r.t. its first argument; $\ell(y,y)=0$ for any $y\in  \sR^d$; and $\lim_{r\to \infty} \inf_{y:\|y\|_2=r} \ell({\vec{0}}, y)=\infty$, where $\vec{0}$ is the zero vector.
Then, given any distribution $\gD^\gX$ on $\gX$, there exist some distributions $\gD_T\in \gP_{\gX\times \sR^d}$ with its marginal on $\gX$ being $\gD^\gX$ such that
\begin{align}
    \tau(A;\gD_S,\gD_T)&=d_{\gF_A}(\gD_S,\gD_T),  \ \text{ or equivalently,}\\
    \inf_{g\in \gG} \ell_{\gD_T}(g\circ f_A^{\gD_S})&= \ell_{\gD_S}(g_A^{\gD_S} \circ f_A^{\gD_S}) + d_{\gF_A}(\gD_S,\gD_T).
\end{align}
\end{theorem}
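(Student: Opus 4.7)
First I would introduce shorthand: let $f^\star:=f_A^{\gD_S}$, $g^\star:=g_A^{\gD_S}$, and $L_S^\star:=\ell_{\gD_S}(g^\star\circ f^\star)$, and for each $f\in\gF_A$ set $L_S(f):=\inf_{g\in\gG}\ell_{\gD_S}(g\circ f)$ and $L_T(f):=\inf_{g\in\gG}\ell_{\gD_T}(g\circ f)$. Optimality of $A$ yields $L_S^\star=\inf_{g,f}\ell_{\gD_S}(g\circ f)=L_S(f^\star)=\inf_{f\in\gF_A}L_S(f)$, so the equality I want to prove reads $L_T(f^\star)-L_S^\star=\sup_{f\in\gF_A}|L_S(f)-L_T(f)|$. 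My plan is to reduce the theorem to constructing a single $\gD_T$ with marginal $\gD^\gX$ satisfying (i) $L_T(f)\geq L_S(f)$ for every $f\in\gF_A$ and (ii) the map $f\mapsto L_T(f)-L_S(f)$ attains its supremum at $f^\star$. Conditions (i) and (ii) together collapse the supremum to $L_T(f^\star)-L_S^\star$, which by Definition~\ref{def:relative-transf} equals $\tau(A;\gD_S,\gD_T)$.

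\textbf{Construction of $\gD_T$.} Next I would take $\gD_T$ to have marginal $\gD^\gX$ and conditional $y_T\mid x$ a Dirac at a deterministic function $y^\dagger(x)\in\sR^d$ yet to be chosen. Since $0\in\gG$, using $g\equiv 0$ produces the uniform bound $L_T(f)\leq\E_{x\sim\gD^\gX}[\ell(\vec{0},y^\dagger(x))]$ for every $f$, and the coercivity hypothesis $\lim_{r\to\infty}\inf_{\|y\|_2=r}\ell(\vec{0},y)=\infty$ lets me scale $\|y^\dagger\|$ so that this quantity is as large as I need. Two natural candidates I would try are: (a) a constant $y^\dagger(x)\equiv t u$ for a fixed unit vector $u$ and scalar $t>0$; or (b) a residual construction $y^\dagger(x)=g^\star(f^\star(x))+t\cdot u(x)$, where $u(\cdot)$ points in a direction that $\gG\circ f^\star$ cannot easily represent on $\gD^\gX$. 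The scalar $t$ serves as a tuning knob: at $t=0$ the two sides of the bound collapse degenerately, while as $t\uparrow\infty$ both $\tau$ and $d_{\gF_A}$ diverge. I would then apply an intermediate-value/continuity argument to the non-negative function $t\mapsto d_{\gF_A}(\gD_S,\gD_T^{(t)})-\tau(A;\gD_S,\gD_T^{(t)})$, using the strict convexity and differentiability of $\ell$ (via first-order conditions) to characterise the unique optimal $g$ for each $f$ and to ensure continuous dependence on $t$, locating a $t$ for which equality holds.

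\textbf{The main obstacle.} The hard part will be verifying condition (ii), i.e.\ that the supremum of $|L_S(f)-L_T(f)|$ is attained at $f^\star$ with the correct sign. Because $\gF_A$ is otherwise unrestricted, this cannot be argued by inspecting the feature-extractor class. My strategy is to lean on two facts simultaneously: first, that $f^\star$ minimises $L_S$ on $\gF_A$; and second, that a $y^\dagger$ aligned with what $f^\star$ already represents (the residual construction (b) above) should control $L_T(f)$ more tightly across $f\in\gF_A$, pinning the maximiser of $L_T-L_S$ at $f^\star$. Strict convexity of $\ell$ is essential to rule out degenerate tangencies at the optimum, and differentiability supplies the first-order conditions on the optimal $g$; without these the same $\gD_T$ might tie many feature extractors at the supremum and destroy the precise equality.
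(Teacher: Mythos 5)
Your reduction is sound as far as it goes: since $L_S(f^\star)=\inf_{f\in\gF_A}L_S(f)$ under the optimality assumption, it suffices to build a $\gD_T$ with marginal $\gD^\gX$ such that $L_T(f)\geq L_S(f)$ for all $f$ and such that $f\mapsto L_T(f)-L_S(f)$ is maximized at $f^\star$. But the construction you propose does not deliver this, and the missing idea is precisely the crux of the theorem. With a deterministic conditional $y^\dagger(x)$ (either the constant $tu$ or the residual $g^\star(f^\star(x))+tu(x)$), the quantity $L_T(f)=\inf_{g\in\gG}\E_x[\ell(g(f(x)),y^\dagger(x))]$ depends on how well $\gG\circ f$ can fit $y^\dagger$, which varies with $f$ in a way you cannot control: the theorem must hold for an arbitrary $\gG$ containing the zero function, and if $\gG$ happens to contain (or approximate) the constant map $tu$, then $L_T(f)=0$ for every $f$, giving $\tau=-L_S^\star$ while $d_{\gF_A}=\sup_f L_S(f)$, so equality fails. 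Your intermediate-value argument also has no usable endpoint: $d_{\gF_A}-\tau\geq 0$ always holds by Theorem~\ref{thm-bound}, and nothing in your one-parameter family forces this gap to reach $0$; at $t=0$ the residual construction gives $d_{\gF_A}-\tau\geq 2L_S^\star$, and you offer no reason it decreases to zero as $t$ grows. You correctly identify condition (ii) as "the main obstacle," but the argument you sketch for it ("should control $L_T(f)$ more tightly") is a hope, not a proof.

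The paper's construction resolves exactly this obstacle by making $L_T(f)$ \emph{constant in $f$}. It takes the conditional law of $y$ given $x$ to be independent of $x$: a finite mixture $\sum_i c_i\delta_{y_i}$ with the $y_i$ on a sphere $\gS_r^{d-1}$ of large radius $r$ (chosen via coercivity so that $\ell(\vec{0},y_i)\geq \ell_{\gD_S}(\vec{0})$) and weights $c_i$ chosen so that $\sum_i c_i\nabla_1\ell(\vec{0},y_i)=\vec{0}$. The existence of such weights follows from a separating-hyperplane lemma applied to the gradient set $\{\nabla_1\ell(\vec{0},y):y\in\gS_r^{d-1}\}$, using strict convexity and $\ell(y,y)=0$ to show $\langle\nabla_1\ell(\vec{0},y),y\rangle<0$ on the sphere, hence $\vec{0}\in\conv(\cdot)$. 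Then $h(\cdot)=\sum_i c_i\ell(\cdot,y_i)$ is strictly convex with a stationary point at $\vec{0}$, so the pointwise-optimal prediction is the zero function, which lies in $\gG$; consequently $\inf_{g\in\gG}\ell_{\gD_T}(g\circ f)=\sum_i c_i\ell(\vec{0},y_i)$ for \emph{every} $f$, the supremum defining $d_{\gF_A}$ collapses onto $\inf_{g,f}\ell_{\gD_S}(g\circ f)$, and equality follows. This is where strict convexity, differentiability, $\ell(y,y)=0$, and coercivity are actually consumed; your proposal invokes them but never puts them to work.
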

\textbf{Interpretation:}
In the above theorem, we show that given any $A, \gD_S$, and the marginal $\gD^\gX$, there exist some conditional distributions of $y| x$ such that by composing it with the given $\gD^\gX$ we have a distribution $\gD_T$ where the equality holds in Theorem~\ref{thm-bound}. The optimality assumption on the training algorithm is mild, as it is common for modern neural networks to achieve considerably low loss. Nonetheless, a generalized version of the theorem is provided as Theorem~\ref{thm-tightness-g} in the appendix which works with \textit{any} training algorithm. Alternative form of the tightness analysis is discussed immediately after the proof of Theorem~\ref{thm-tightness-g}. %

Therefore, we prove that stronger regularization on the feature extractor implies a decreased tight upper bound on the {relative} transferability loss. For a cleaner presentation, the analysis so far does not consider the potential influence from finite samples which for sure affects domain generalization. In the next subsection,  we investigate the proposed theory on relative transferability with finite samples. 

\subsection{Generalization Upper Bound of the Relative Domain Transferability} \label{subsec:generalized-ub}
For a distribution $\gD\in \gP_{\gX\times \gY}$, we denote its empirical distribution with $n$ samples as $\widehat\gD^n$. That being said,
\begin{align}
    \ell_{\widehat\gD^n}(g\circ f)&= \E_{(x,y)\sim \widehat\gD^n}[\ell(g\circ f(x), y)]\\
    &=\tfrac{1}{n}\textstyle\sum_{i=1}^n\ell(g\circ f(x_i), y_i),\label{eq:emp-def}
\end{align}
where $(x_i,y_i)$ are i.i.d. samples from $\gD$. Therefore, given two distributions $ \gD_S,\gD_T\in \gP_{\gX\times \gY}$, the empirical $(\gG, \gF)$-pseudometric between them is $d_{\gG, \gF}(\widehat\gD^n_S,\widehat\gD^n_T)$. 

Note that $d_{\gG, \gF}$ is not only a pseudometric of distributions, but also a complexity measure, and we will first connect it with the Rademacher complexity.
\begin{definition}[Empirical Rademacher Complexity~\citep{bartlett2002rademacher, koltchinskii2001rademacher}]\label{def:rad}
Denote the loss function class induced by $\gG, \gF$ as
\begin{align}
    \gL_{\gG, \gF}:=\{h_{g, f}:\gX\times \gY\to \sR_+ \mid g\in \gG, f\in \gF\}, 
\end{align} 
where $h_{g, f}(x, y):=\ell(g\circ f(x), y)$.

Given an empirical distribution $\widehat\gD^n$ (i.e., $n$ data samples), the Rademacher complexity of it is
\begin{align}
    \Rad_{\widehat\gD^n}(\gL_{\gG, \gF}):=\frac{1}{n}\E_{\bm{\xi}}\left[\sup_{h\in \gL_{\gG, \gF}}\sum_{i=1}^n \xi_i h(x_i, y_i)\right],
\end{align}
where $\bm{\xi}\in \sR^n$ are Rademacher variables, i.e., each $\xi_i$ is i.i.d. uniformly distributed on $\{-1, 1\}$.
\end{definition}
We can see that if there is a $\gF'\subseteq \gF$, then $\Rad_{\widehat\gD^n}(\gL_{\gG, \gF'})\leq \Rad_{\widehat\gD^n}(\gL_{\gG, \gF})$. 
With the above definitions, we have the following lemma connecting the $(\gG,\gF)$-pseudometric to Rademacher complexity.
\begin{lemma}\label{lemma:emp-finite-dist}
Assuming the  loss function $\ell:\gY\times\gY\to [0, c]$, given any distribution $\gD\in \gP_{\gX\times \gY}$ and $\forall \delta>0$, with probability $\geq 1-\delta$ we have
\begin{align}
    d_{\gG, \gF}(\gD, \widehat\gD^n)\leq 2\Rad_{\widehat\gD^n}(\gL_{\gG, \gF})+3c\sqrt{\frac{\ln(4/\delta)}{2n}}.
\end{align}
\end{lemma}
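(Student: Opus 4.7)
The plan is to reduce the pseudometric $d_{\gG, \gF}(\gD, \widehat\gD^n)$ to a two-sided uniform deviation over the bounded induced loss class $\gL_{\gG, \gF}$, and then invoke the textbook Rademacher generalization bound (symmetrization combined with two applications of McDiarmid's inequality).

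First I would dispose of the nested infimum in the definition of $d_{\gG, \gF}$ via the elementary inequality $|\inf_{a} F(a) - \inf_{a} G(a)| \leq \sup_{a} |F(a) - G(a)|$, applied with $F(g) = \ell_\gD(g \circ f)$ and $G(g) = \ell_{\widehat\gD^n}(g \circ f)$. Taking the outer supremum over $f \in \gF$ and unfolding Definition~\ref{def:rad},
\begin{align*}
d_{\gG, \gF}(\gD, \widehat\gD^n)
&\leq \sup_{g \in \gG,\, f \in \gF}\bigl| \ell_\gD(g \circ f) - \ell_{\widehat\gD^n}(g \circ f) \bigr|\\
&= \sup_{h \in \gL_{\gG, \gF}} \bigl| \E_\gD[h] - \E_{\widehat\gD^n}[h] \bigr|.
\end{align*}
This step is what makes Rademacher complexity the right tool, because the right-hand side is exactly the two-sided uniform deviation over the $[0,c]$-bounded function class $\gL_{\gG, \gF}$.

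Next I would apply the classical one-sided Rademacher bound in each direction at confidence $1 - \delta/2$ and union-bound. Writing $\Phi(S) := \sup_{h \in \gL_{\gG, \gF}}\bigl(\E_\gD[h] - \E_{\widehat\gD^n}[h]\bigr)$, the fact that $\ell \in [0, c]$ implies $\Phi$ has bounded differences $c/n$ in each sample coordinate, so McDiarmid gives $\Phi(S) \leq \E[\Phi(S)] + c\sqrt{\ln(4/\delta)/(2n)}$ with probability $\geq 1 - \delta/4$. Symmetrization via Rademacher variables gives $\E[\Phi(S)] \leq 2\,\E[\Rad_{\widehat\gD^n}(\gL_{\gG, \gF})]$, and a second McDiarmid application---the empirical Rademacher complexity also has bounded differences $c/n$---yields $\E[\Rad_{\widehat\gD^n}(\gL_{\gG, \gF})] \leq \Rad_{\widehat\gD^n}(\gL_{\gG, \gF}) + c\sqrt{\ln(4/\delta)/(2n)}$ with probability $\geq 1 - \delta/4$. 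Adding the McDiarmid slacks produces the one-sided bound $\Phi(S) \leq 2\Rad_{\widehat\gD^n}(\gL_{\gG, \gF}) + 3c\sqrt{\ln(4/\delta)/(2n)}$ with probability $\geq 1 - \delta/2$. Repeating the argument verbatim with the roles of $\E_\gD[h]$ and $\E_{\widehat\gD^n}[h]$ reversed and taking a union bound delivers the two-sided bound on $\sup_h |\E_\gD[h] - \E_{\widehat\gD^n}[h]|$ stated in the lemma.

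There is no deep obstacle here: the bulk of the work is bookkeeping of the standard symmetrization--McDiarmid chain, with care taken so that the two halving steps (one-sided to two-sided, and then symmetrization plus Rademacher concentration within each one-sided bound) produce exactly the constant $3c\sqrt{\ln(4/\delta)/(2n)}$. The only non-routine observation is the inf-to-sup reduction, which is essential for bridging the pseudometric $d_{\gG, \gF}$ (whose definition involves an inner infimum over the fine-tuning class $\gG$, and thus is not obviously a uniform deviation) and the Rademacher complexity, which is formulated for a plain hypothesis class. The reduction works because $\gL_{\gG, \gF}$ already absorbs all $g \in \gG$, so once one has replaced the $|\inf - \inf|$ by a supremum over $f$ and $g$ jointly, the standard machinery applies without modification.
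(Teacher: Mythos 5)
Your proposal is correct and follows essentially the same route as the paper: both reduce the pseudometric to a two-sided uniform deviation over the bounded class $\gL_{\gG,\gF}$ and then invoke the standard symmetrization/McDiarmid Rademacher bound, with the same constant bookkeeping ($\delta/4$ per McDiarmid application) yielding $3c\sqrt{\ln(4/\delta)/(2n)}$. The only cosmetic difference is that you dispatch the $|\inf-\inf|$ up front via the elementary inequality $|\inf_a F(a)-\inf_a G(a)|\leq \sup_a|F(a)-G(a)|$, whereas the paper first establishes the uniform deviation bound and then re-derives that same inequality through an explicit $\epsilon$-near-minimizer argument before taking the supremum over $f$.
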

Therefore, denoting again $d_{\gF_A}$ as $d_{\gG, \gF_A}$,  the empirical version of Theorem~\ref{thm-bound} is as follows.
\begin{theorem}\label{thm:emp-bound}
Assuming the  loss function $\ell:\gY\times\gY\to [0, c]$, given $\forall \gD_S, \gD_T\in \gP_{\gX\times \gY}$, for $\forall \delta>0$ with probability $\geq 1-\delta$ we have
\begin{align}
    \tau(A; \widehat\gD^n_S, \gD_T)&\leq d_{\gF_A}(\widehat\gD^n_S, \widehat\gD^n_T) +2\Rad_{\widehat\gD_T^n}(\gL_{\gG, \gF_A})\\
    &+ 4\Rad_{\widehat\gD_S^n}(\gL_{\gG, \gF_A})+9c\sqrt{\frac{\ln(8/\delta)}{2n}}.
\end{align}
\end{theorem}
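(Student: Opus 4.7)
The plan is to chain two tools already established in the paper: the population pseudometric bound from Theorem~\ref{thm-bound} (with its underlying Definition~\ref{def:pseudometric}) and the Rademacher concentration of Lemma~\ref{lemma:emp-finite-dist}, which transfers a population distance to its empirical counterpart. Writing $f = f_A^{\widehat\gD^n_S}$ for brevity, the first step is to replace the trained fine-tuning loss by the infimum: since $g_A^{\widehat\gD^n_S} \in \gG$, we have $\ell_{\widehat\gD^n_S}(g_A^{\widehat\gD^n_S} \circ f) \geq \inf_{g \in \gG} \ell_{\widehat\gD^n_S}(g \circ f)$, so
\begin{align*}
\tau(A; \widehat\gD^n_S, \gD_T) \leq \inf_{g} \ell_{\gD_T}(g \circ f) - \inf_{g} \ell_{\widehat\gD^n_S}(g \circ f).
\end{align*}

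The key decomposition is now to telescope through the population source distribution $\gD_S$ and upper bound each of the two resulting differences using Definition~\ref{def:pseudometric} with the fixed extractor $f \in \gF_A$:
\begin{align*}
\tau(A; \widehat\gD^n_S, \gD_T) \leq d_{\gF_A}(\gD_S, \gD_T) + d_{\gF_A}(\gD_S, \widehat\gD^n_S).
\end{align*}
I would then invoke the triangle inequality from Proposition~\ref{prop:pseudometric} to route the first (population) pseudometric through the empirical distributions, $d_{\gF_A}(\gD_S, \gD_T) \leq d_{\gF_A}(\gD_S, \widehat\gD^n_S) + d_{\gF_A}(\widehat\gD^n_S, \widehat\gD^n_T) + d_{\gF_A}(\widehat\gD^n_T, \gD_T)$, obtaining
\begin{align*}
\tau(A; \widehat\gD^n_S, \gD_T) \leq d_{\gF_A}(\widehat\gD^n_S, \widehat\gD^n_T) + 2\, d_{\gF_A}(\gD_S, \widehat\gD^n_S) + d_{\gF_A}(\widehat\gD^n_T, \gD_T).
\end{align*}
Notice that $d_{\gF_A}(\gD_S, \widehat\gD^n_S)$ appears with coefficient $2$ because it is produced by both the telescoping step and the triangle inequality; this is precisely why $\Rad_{\widehat\gD_S^n}$ carries a coefficient of $4$ rather than $2$ in the final bound.

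The final step is to apply Lemma~\ref{lemma:emp-finite-dist} separately to the two population-vs-empirical distances, each at confidence $\delta/2$ (so that $\ln(4/(\delta/2)) = \ln(8/\delta)$), and to combine them by a union bound so that both hold simultaneously with probability at least $1 - \delta$. Plugging in yields the coefficient $2 \cdot 2 = 4$ in front of $\Rad_{\widehat\gD_S^n}(\gL_{\gG, \gF_A})$, the coefficient $2$ in front of $\Rad_{\widehat\gD_T^n}(\gL_{\gG, \gF_A})$, and the coefficient $2 \cdot 3 + 3 = 9$ in front of $c\sqrt{\ln(8/\delta)/(2n)}$, matching the statement exactly. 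The main subtlety is the choice of decomposition: a more direct argument via Theorem~\ref{thm-bound} applied to the pair $(\widehat\gD^n_S, \gD_T)$ would leave $d_{\gF_A}(\widehat\gD^n_S, \gD_T)$ on the right-hand side, which is not computable from data. Routing instead through both empirical distributions keeps the bound fully data-dependent at the cost of double-counting the source concentration term, and this is exactly what produces the slightly loose factor $4\,\Rad_{\widehat\gD_S^n}$.
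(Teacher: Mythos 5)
Your proof is correct and follows essentially the same route as the paper: bound $\tau$ by the difference of infima, telescope through the population source distribution $\gD_S$ to get $d_{\gF_A}(\gD_S,\gD_T)+d_{\gF_A}(\gD_S,\widehat\gD^n_S)$, route the first term through the empirical distributions via the triangle inequality of Proposition~\ref{prop:pseudometric}, and apply Lemma~\ref{lemma:emp-finite-dist} twice at confidence $\delta/2$ with a union bound. The only cosmetic difference is that the paper packages the triangle-inequality-plus-concentration step as a separate lemma (Lemma~\ref{lemma:emp-dist}), whereas you inline it; the constants $4$, $2$, and $9$ come out identically.
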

\textbf{Interpretation:} We can see that a smaller feature extractor function class $\gF_A$ implies both a smaller $d_{\gF_A}$ and the Rademacher complexity. Therefore, the monotone relation between the regularization strength and the upper bound on the relative domain transferability loss also holds for the empirical settings.

The proposed theoretical analysis suggests that regularization may be a fundamental perspective to understand domain transferability. Other than explicit regularization, empirically we find that the transferability is also related to the use of certain data augmentation and adversarial training. Can we explain such phenomena from the view of regularization again? We discuss this question in the next section.

\section{When Can Data Augmentation be Viewed as Regularization?}
\label{sec:DA}

In this section, we discuss the connections between data augmentation (DA) and regularization. We present the results and their interpretation in this section, while deferring the detailed discussion and comparisons with related work to Section~\ref{sec:da} in the appendix.

\textbf{General settings.} We consider the fine-tuning function $g:\mathbb{R}^d\rightarrow \mathbb{R}$ as a linear layer, which will be concatenated to the feature extractor $f:\mathbb{R}^m\rightarrow \mathbb{R}^d$. Given a model $g\circ f$, we use the squared loss $\ell(g\circ f(x), y)=(g\circ f(x)-y)^2$, and accordingly apply second-order Taylor expansion to the objective function to study the effect of data augmentation.

\textbf{DA categories.} We discuss two categories of DA, the {\textit{feature-level DA}} and the {\textit{data-level DA}}. The feature-level DA~\citep{wong2016understanding, devries2017dataset} requires the transformation to be performed in the learned feature space: given a data sample $x\in \sR^m$ and a feature extractor $f$, the augmented feature is $W_\star f(x)+b_\star$ where $W_\star\in \sR^{d\times d}, b_\star\in \sR^d$ are sampled from a distribution. On the other hand, the data-level DA requires the transformation to be performed in the input space: given a data sample $x$, the augmented sample is $W_\star x + b_\star$ where $W_\star\in \sR^{m\times m}, b_\star\in \sR^m$ are sampled from a distribution.

\textbf{Intuition on sufficient conditions}. For either the feature-level or the data-level DA, the intuitions given by our analysis are similar. Our results (Theorem~\ref{apdx:FLDA}\&\ref{apdx:DLDA}) suggest that the following conditions indicate regularization effects of a data augmentation: 
1) $\E_{W_\star}[W_\star] = \mathbb{I}$; 
2) $\E_{b_\star}[b_\star] = \vec{0}$; 
3) $W_\star$ and $b_\star$ are independent, where $\mathbb{I}$ is the identity matrix and $\vec{0}$ is the zero vector;
4) $W_\star$ is not a constant if it is the feature-level DA;
5) DA is of a small magnitude if it is the data-level DA.

\textbf{Empirical verification.} Combining with Theorem \ref{thm:emp-bound}, it suggests that DA satisfying the conditions above may improve the relative domain transferability. In fact, it matches the empirical observations in Section~\ref{sec:exp}. Concretely, 
\textbf{\textit{1) Gaussian noise}} {\textit{satisfies}} the four conditions, and empirically the Gaussian noise improves domain transferability while robustness decreases a bit (Figure~\ref{fig:result-da}); \textbf{\textit{2) Rotation}}, which rotates input image with a predefined fixed angle with predefined fixed probability, {\textit{violates}} $\E_{W_\star}[W_\star] = \mathbb{I}$, and empirically the rotation barely affects domain transferability (Figure~\ref{fig:result-rotate}); 
\textbf{\textit{3) Translation}}, which moves the input image for a predefined  distance along a pre-selected axis with  fixed probability, {\textit{violates}} $\E_{b_\star}[b_\star] = \vec{0}$, and empirically the translation distance  barely co-relates to the domain transferability (Figure~\ref{fig:result-rotate}).

\textbf{Adversarial training.} It is known that adversarial training, a special kind of data augmentation, can be viewed as regularization in some scenarios~\citep{roth2020reg}. We further prove that, under certain conditions, adversarial training reduces the size of the feature extractors function class during training (see Section~\ref{sec:adv-reg} for details). Therefore, our theoretical analysis implies that adversarial training helps domain transferability from its regularization effect.

\section{Evaluation}\label{sec:exp} %
\subsection{Experimental Setting}
\label{sec:exp-setting}

\textbf{Source model training.} We train our model on two source domains: CIFAR-10 and ImageNet. Unless specified, we will use the training settings as follows\footnote{These settings are inherited from the standard training algorithms for CIFAR-10 (\url{https://github.com/kuangliu/pytorch-cifar}) and ImageNet (\url{https://github.com/pytorch/examples/tree/master/imagenet}).}. For CIFAR-10, we train the model with 200 epochs using the momentum SGD optimizer with momentum 0.9, weight decay 0.0005, an initial learning rate 0.1 which decays by a factor of 10 at the 100-th and 150-th epoch. For ImageNet, we train the model with 90 epochs using the momentum SGD optimizer with momentum 0.9, weight decay 0.0001, an initial learning rate 0.1 which decays by a factor of 10 at the 30-th and 60-th epoch. We use the standard cross-entropy loss denote as $L_{CE}(h_s,x,y)$, where $h_s=g_s\circ f$ is the trained model and $x,y$ are the input and label respectively. For both tasks, we use ResNet-18 as the model 
architecture. We provide results of other model structures in Appendix~\ref{sec:app-wrncnn-result}.

\textbf{Model robustness evaluation.}  To evaluate the model robustness on the source domain, we will show the model accuracy under adversarial attack. We follow the evaluation setting in \cite{ilyas2019adversarial} and perform the PGD attack with 20 steps using $\epsilon=0.25$. This empirical robust accuracy reflects how well the model performs under adversarial attack, which is the adversarial loss as in \eqref{eqn:adv-loss} if we view $\ell(\cdot,\cdot)$ as the 0-1 loss between prediction and ground truth. We also provide robustness evaluation with AutoAttack in Appendix~\ref{sec:app-autoatk}.

\textbf{Domain transferability.} We evaluate the transferability from CIFAR-10 to SVHN and from ImageNet to CIFAR-10. 
For the ImageNet, we focus on CIFAR as the target domain, since it is the domain that is the most positively correlated with robustness as shown in \cite{salman2020adversarially}. We evaluate the fixed-feature transfer where only the last fully-connected layer is fine-tuned following our theoretical framework. We fine-tune the last layer with 40 epochs using SGD  with momentum 0.9, weight decay 0.0005, an initial learning rate 0.01 which decays by a factor of 10 at the 20-th and 30-th epoch. To mitigate the impact of benign accuracy, we evaluate the \textit{relative domain transfer accuracy} (DT Acc) as follows. Let $acc_{src}$ and $acc_{tgt}$ be the accuracy of the fine-tuned model on the source and target domain, and $acc_{src}^{v}$ and $acc_{tgt}^{v}$ be the accuracy of vanilla model (\textit{i.e.}, models trained with standard settings) on source and target domain, then the {relative} DT accuracy is defined as:
$$
    \text{DT Acc} = (acc_{tgt} - acc_{src}) - (acc_{tgt}^v - acc_{src}^v).
$$
\cready{Note that by definition, we can directly use $acc_{tgt} - acc_{src}$ as the relative accuracy. We use a relative score ($acc_{tgt}^v - acc_{src}^v$) so that the positive/negative values reflect the comparison with the vanilla-trained model.}
We also provide the results of absolute DT accuracy in Appendix~\ref{sec:app-abs-result}. 

\begin{figure}[t]
    \centering
    \includegraphics[height=1.3in]{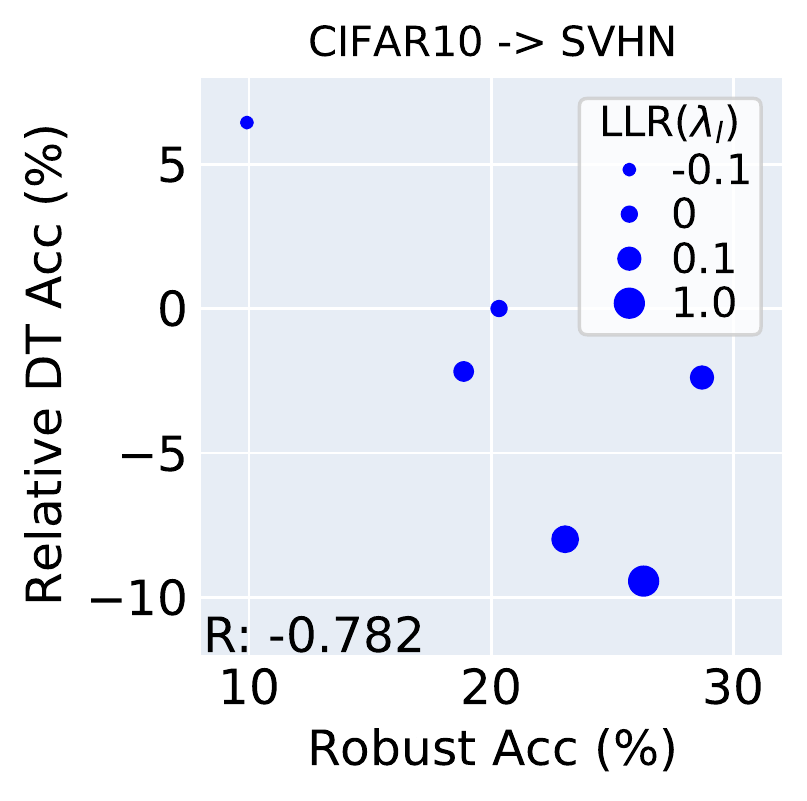}
    \includegraphics[height=1.3in]{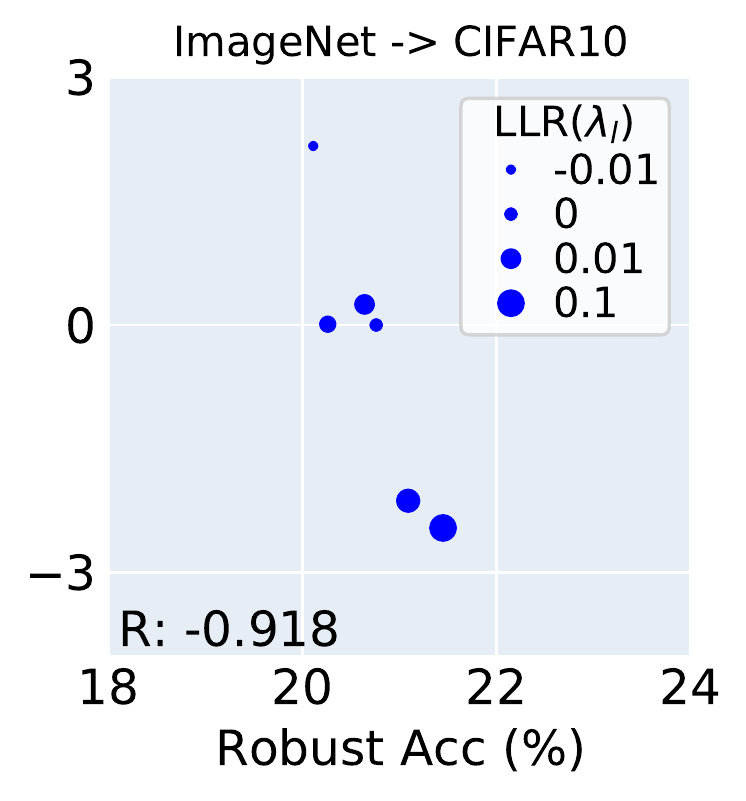}
    \includegraphics[height=1.3in]{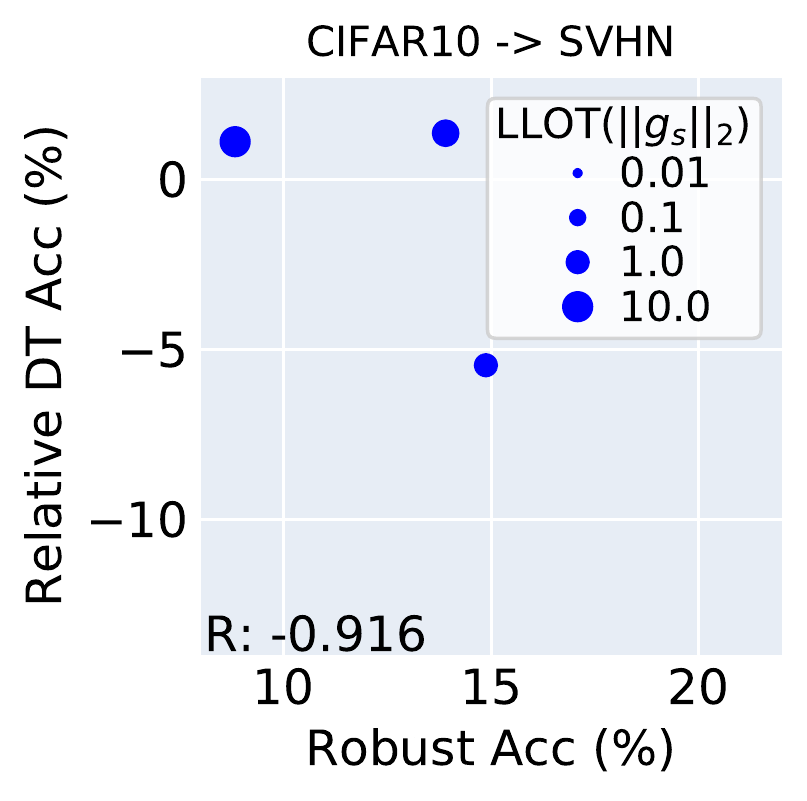}
    \includegraphics[height=1.3in]{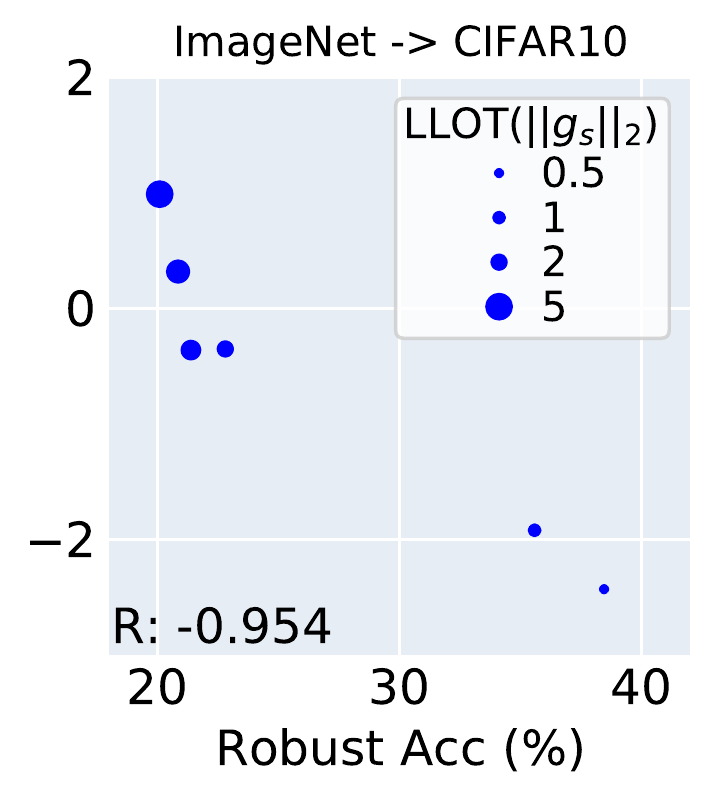}
    \caption{Relationship between robustness and transferability under different norms of last layer, via training with last-layer regularization (LLR) and last-layer orthogonalization (LLOT) %
    }
    \label{fig:result-ll}
\end{figure}
\begin{figure}[t]
    \centering
    \includegraphics[height=1.4in]{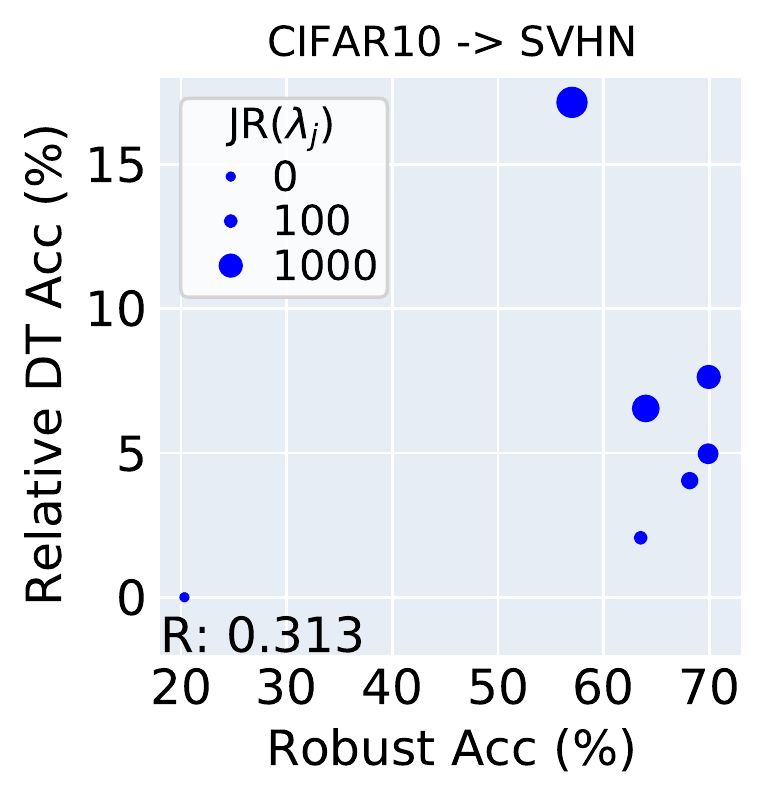}
    \includegraphics[height=1.4in]{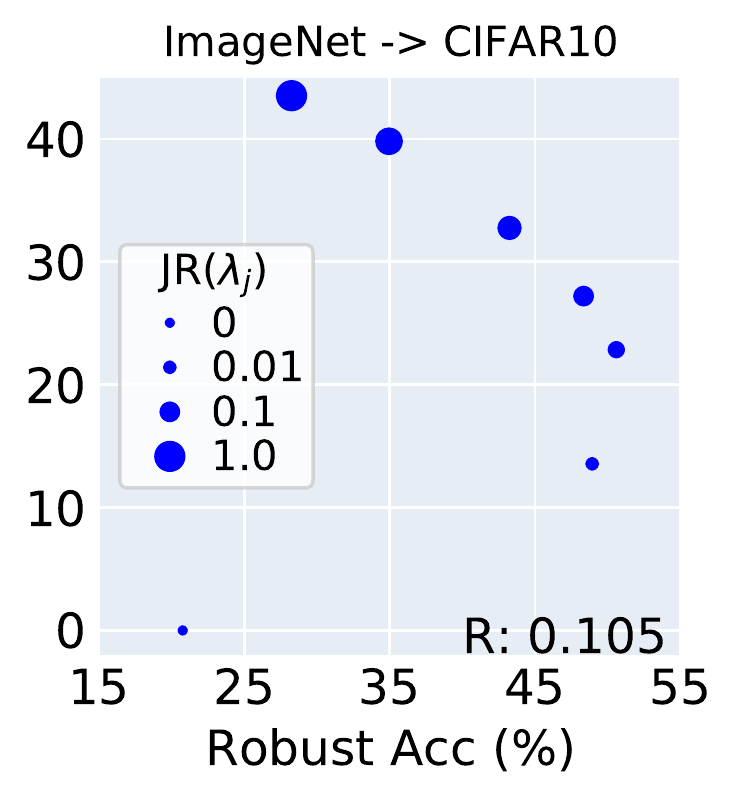}
    \includegraphics[height=1.4in]{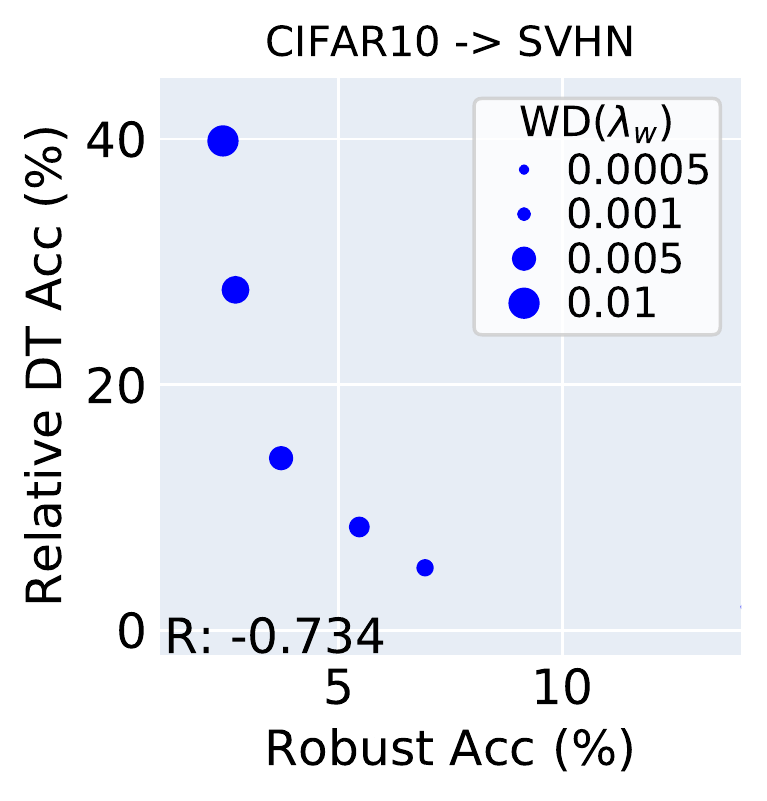}
    \includegraphics[height=1.4in]{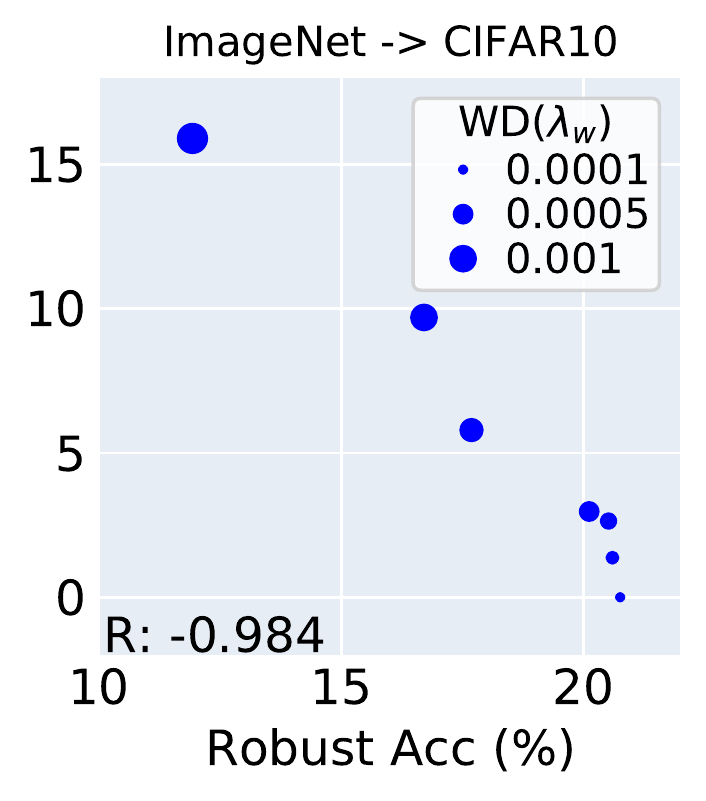}
    \caption{Relationship between robustness and transferability when we regularize the feature extractor with Jacobian Regularization (JR) and weight decay (WD).}
    \label{fig:result-jr}
\end{figure}
\subsection{Relationship between Robustness and Transferability Under Controllable Conditions}
We train the model under different controllable conditions to validate our analysis. In particular, we train the methods by controlling different regularization or data augmentations to evaluate the model robustness and transferability. We emphasize that our goal is to  identify conditions for domain transferability, rather than proposing methods to achieve the state-of-the-art transferable models. Nevertheless, we do show in Appendix~\ref{sec:app-cmp-result} that with basic regularization the model can achieve better absolute transferability than vanilla trained or  adversarially trained models in some cases.

\textbf{Controlling the last-layer norm.}
As shown in our theory, (relative) domain transferability is related to the regularization of feature extractors. Here we regularize the  transferability by controlling the last-layer norm  $g_s$. Intuitively, when we force the norm of $g_s$ to be big during training, the corresponding norm of $f$ will be regularized to be small. We use two approaches to control the last-layer norm:
\begin{itemize}[leftmargin=*, topsep=1pt,itemsep=1pt,partopsep=1pt, parsep=1pt]
    \item Last-layer regularization (LLR): we impose a strong l2-regularizer with parameter $\lambda_l$ specifically on the weight of $g_s$ and therefore our training loss becomes: $L_{LLR}(h_s,x,y) = L_{CE}(h_s,x,y) + \lambda_l \cdot ||g_s||_F$,
    where $||g_s||_F$ is the frobenius norm of the weight matrix of $g_s$.
    \item Last-layer orthogonal training (LLOT): we directly control the l2-norm of $g_s$ with orthogonal training (\cite{huang2020controllable}). The orthogonal training will enforce the weight to become a 1-norm matrix and we multiply a constant to obtain the desired norm $||g_s||_2$.
\end{itemize}

The result of LLR and LLOT are shown in Figure \ref{fig:result-ll}. We observe that when we regularize the norm of the last layer to be large (i.e. smaller $\lambda$ in LLR and larger $||g_s||_2$ in LLOT), the relative domain transferability will increase while the model robustness will decrease (their negative correlation is significant with Pearson's coefficient around $-0.9$). This is because the larger last layer norm will produce a feature extractor $f$ with a smaller norm, which, according to our analysis, leads to a better relative domain transferability. On the other hand, the model $g_s\circ f$ will have a larger norm and therefore becomes less robust under adversarial attacks.

\begin{figure}[t]
    \centering
    \includegraphics[height=1.4in]{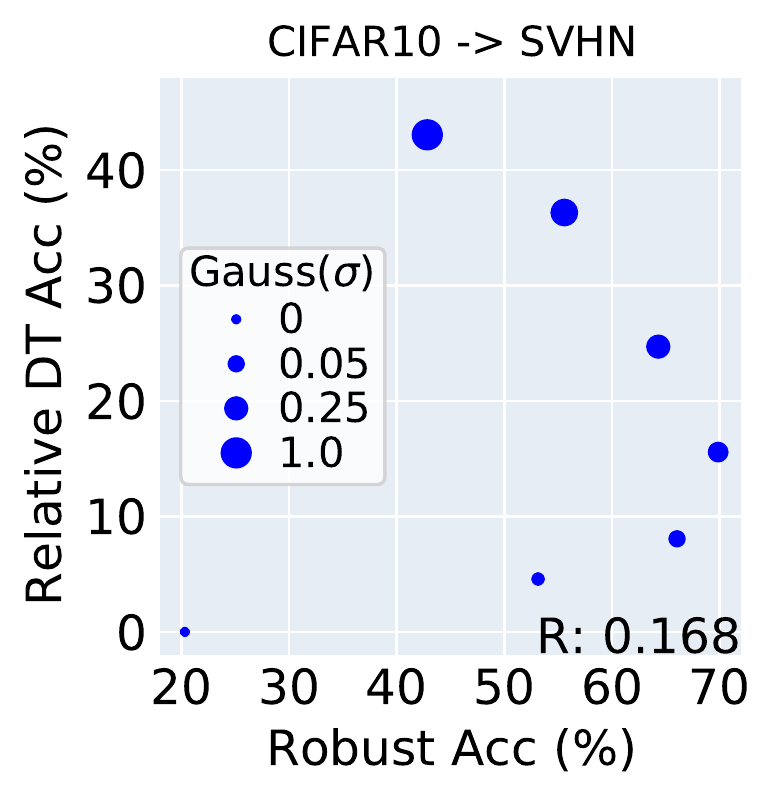}
    \includegraphics[height=1.4in]{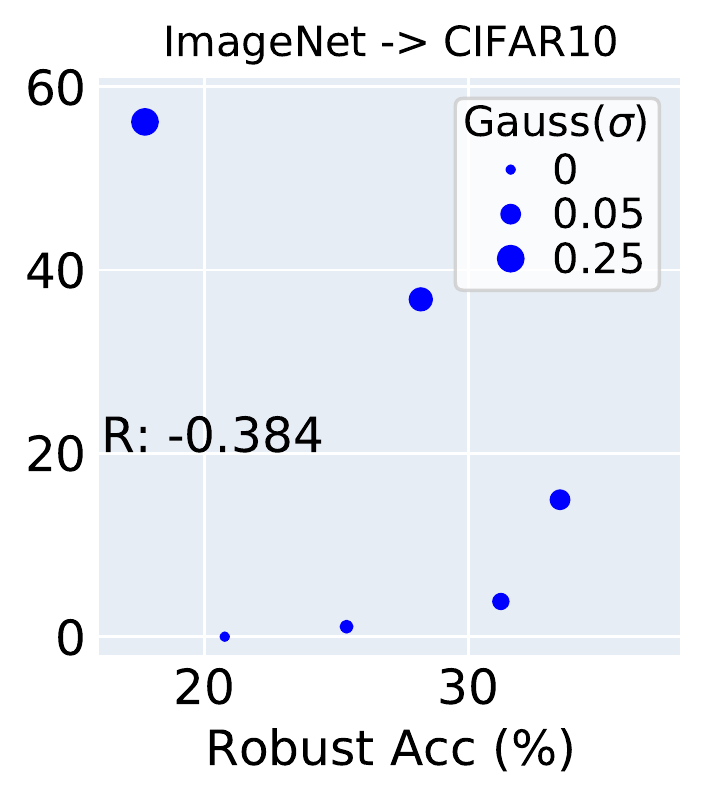}
    \includegraphics[height=1.4in]{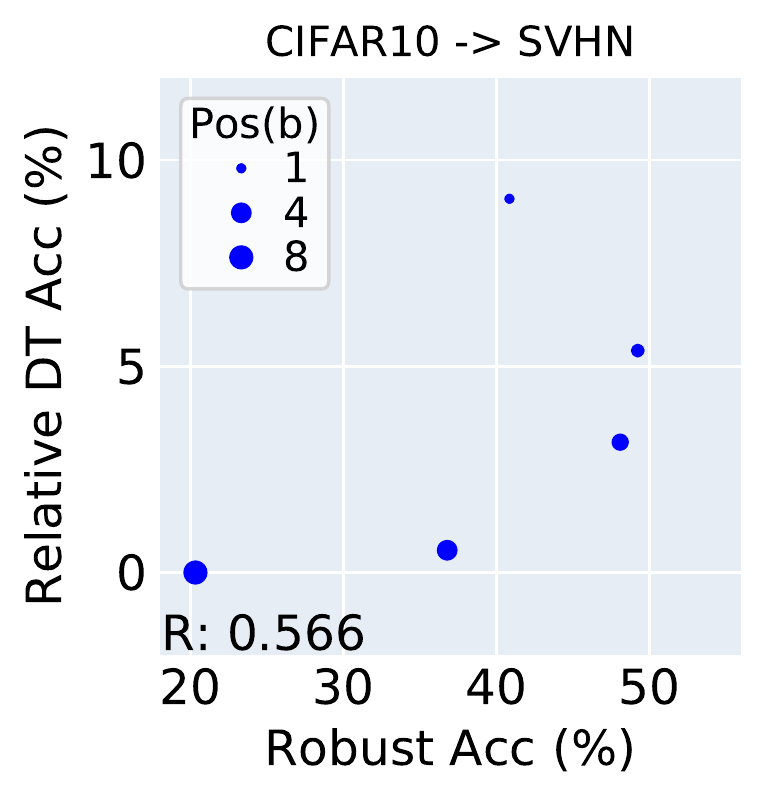}
    \includegraphics[height=1.4in]{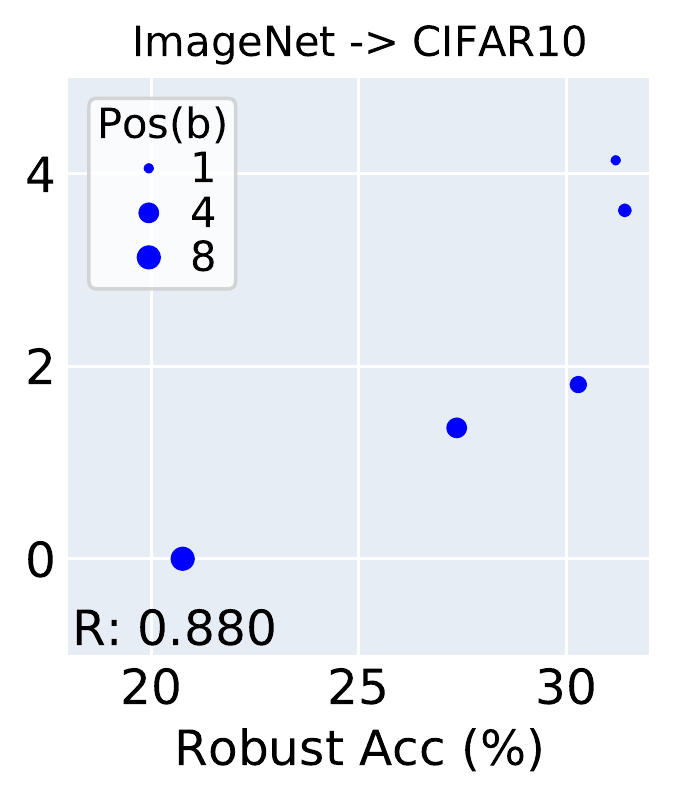}
    \caption{Relationship between robustness and transferability when we use Gaussian noise (\textit{Gauss}) and posterize (\textit{Pos}) as data augmentations.}
    \label{fig:result-da}
\end{figure}

\textbf{Controlling the norm of feature extractor.}
We directly regularize the feature extractor $f$ and check the impact on the (relative) domain transferability. We implement two regularization as follows:
\begin{itemize}[leftmargin=*, topsep=1pt,itemsep=1pt,partopsep=1pt, parsep=1pt]
    \item Jacobian regularization (JR): we follow the approach in \cite{hoffman2019robust} to apply JR on the feature extractor. Given model $h_s=g_s\circ f$, the training loss becomes: $L_{JR}(g_s\circ f,x,y) = L_{CE}(g_s\circ f,x,y) + \lambda_j \cdot ||J(f,x)||_F^2$,
    where $J(f,x)$ denotes the Jacobian matrix of $f$ on $x$ and $||\cdot||_F$ is the frobenius norm.
    \item Weight Decay (WD): we impose  weight decay with factor $\lambda_w$ on the feature extractor $f$ during training. This is equivalent to imposing l2-regularizer with factor $\lambda_w$ on the feature extractor (excluding the last layer).
\end{itemize}

The results under JR and WD are shown in Figure~\ref{fig:result-jr}. We observe that with larger regularization on the feature extractor, the model shows higher relative domain transferability, which matches our analysis. Meanwhile, the robustness decreases significantly with a  large regularizer. This is because a large regularization will harm the model performance on the source domain and lead to low model robustness.

\textbf{Noise-dependent data augmentation.} As shown in Section~\ref{sec:DA}, certain data augmentation can be viewed as a type of regularization during training and thus affects the (relative) domain transferability.
Here we consider both noise dependent and independent data augmentations. For the noise-dependent case, We include two augmentations:
\begin{itemize}[leftmargin=*, topsep=1pt,itemsep=1pt,partopsep=1pt, parsep=1pt]
    \item Gaussian Noise data augmentation (\textit{Gauss}): we add zero-mean Gaussian noise with variance $\sigma^2$ to the input image.
    \item Posterize (\textit{Pos}): we truncate each channel of one pixel value into $b$ bits (originally they are 8 bits).
\end{itemize}
\begin{figure}[t]
    \centering
    \includegraphics[height=1.4in]{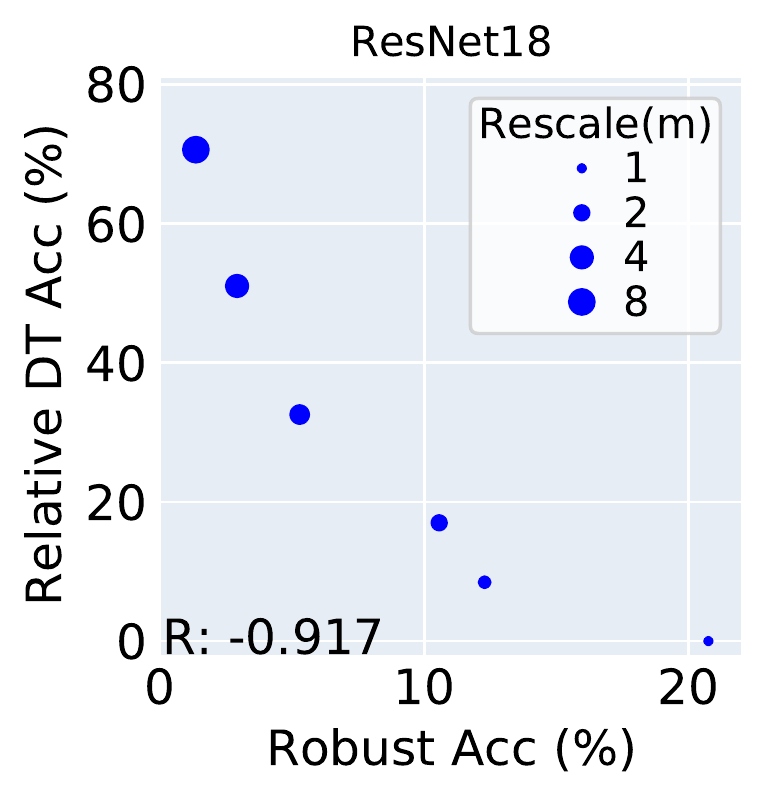}
    \includegraphics[height=1.4in]{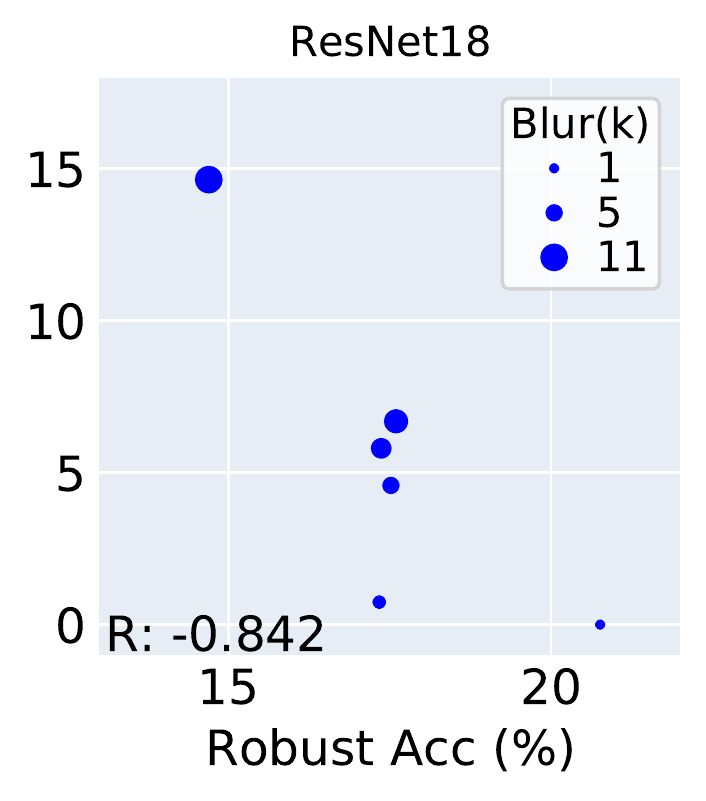}
    \includegraphics[height=1.4in]{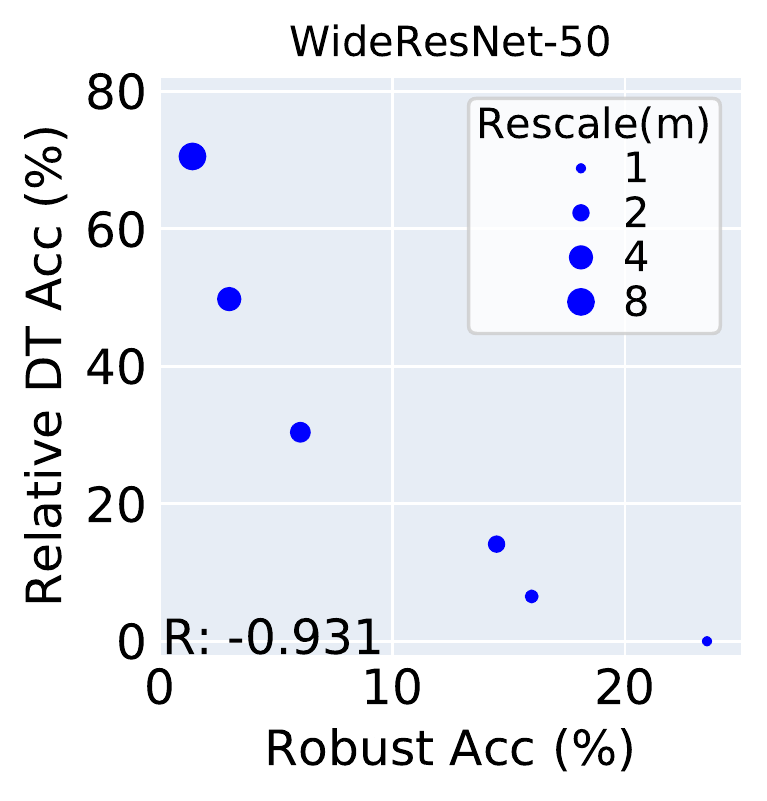}
    \includegraphics[height=1.4in]{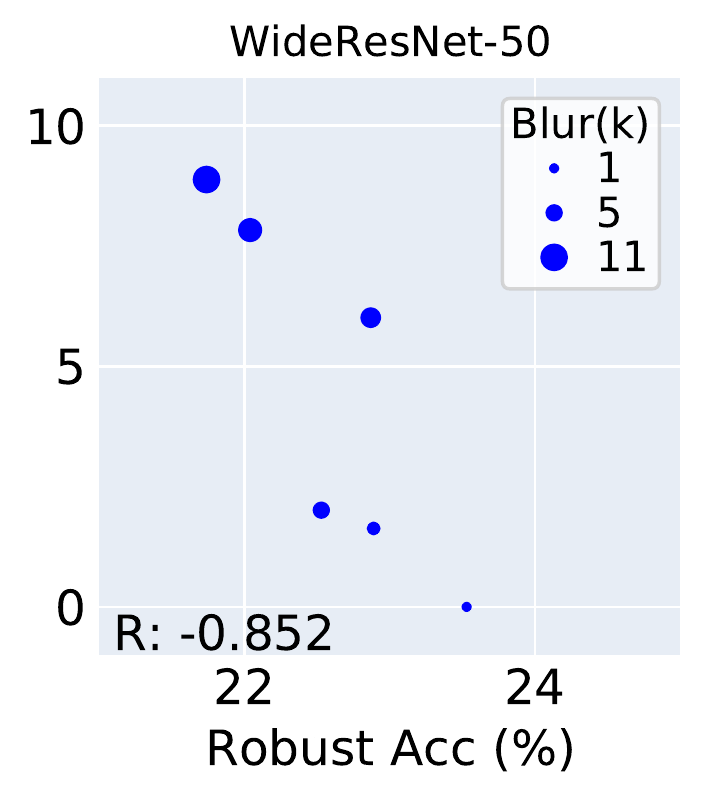}
    \caption{Relationship between robustness and transferability on ImageNet when we use rescale and blur as data augmentations.}
    \label{fig:result-resolution}
\end{figure}

The results of \textit{Gauss} and \textit{Pos} are shown in Figure~\ref{fig:result-da}. We observe that the relative domain transferability of the trained models improves with greater data augmentation, matching our theory. The robustness also benefits from a small data augmentation but decreases when it becomes large.

\textbf{Resolution-related (noise-independent) data augmentation.}
Specifically, for ImageNet to CIFAR-10 transferability, we consider two resolution-related data augmentations. The intuition is that when the target domain has a lower resolution than the source domain (ImageNet is $224\times 224$ while CIFAR-10 is $32\times 32$), the data augmentations that down-sample the inputs during the training on the source domain will help  transferability. We consider the below resolution-related augmentations:
\begin{itemize}[leftmargin=*, topsep=1pt,itemsep=1pt,partopsep=1pt, parsep=1pt]
    \item Rescale: we rescale the input to be $m$ times smaller (\textit{i.e.},  shape ImageNet as $(224/m)\times(224/m)$) and then rescale them back to the original size.
    \item Blur: we apply Gaussian blurring with kernel size $k$ on the input. The Gaussian kernel is created with a standard deviation randomly sampled from $[0.1,2.0]$.
\end{itemize}
The corresponding results are shown in Figure~\ref{fig:result-resolution}. The experiments are evaluated only for ImageNet to CIFAR-10, and we include the results of both ResNet18 (the default model) and WideResNet50. We can see that the data augmentations help with relative domain transferability to the target domain, although the robustness on the source domain decreases since these augmentations do not relate to robustness operations

\subsection{Other Data Augmentations}
\label{sec:badda}

\cready{
In addition, we study rotation and translation, the two data augmentations that violate the sufficient condition for regularization as we discussed in Section~\ref{sec:DA}. The result is shown in Figure~\ref{fig:result-rotate}. We observe that these augmentations do not have an obvious impact on domain transferability, which is consistent with our theoretical analysis.
}
\begin{figure}[tb]
    \centering
    \includegraphics[height=1.4in]{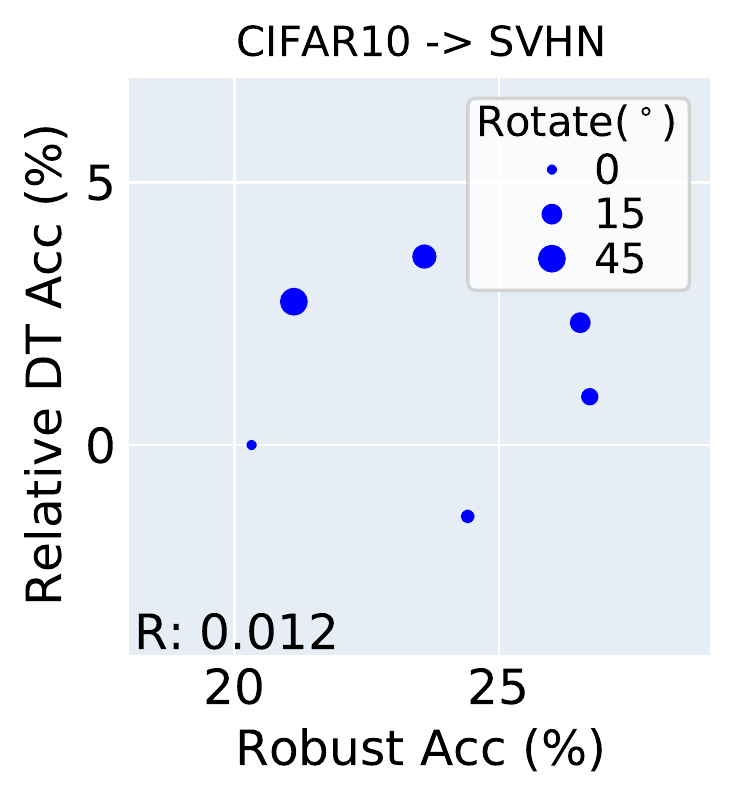}
    \includegraphics[height=1.4in]{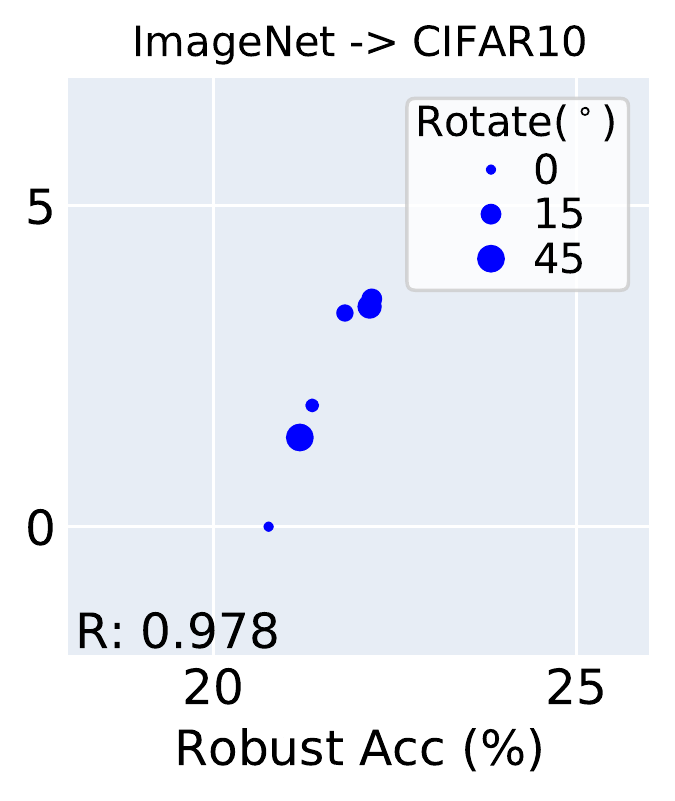}
    
    \includegraphics[height=1.4in]{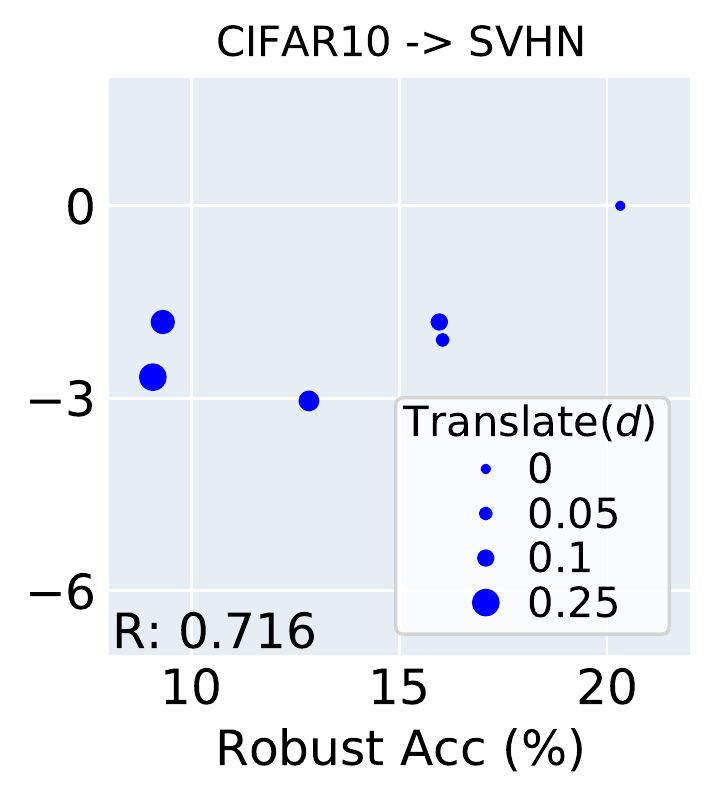}
    \includegraphics[height=1.4in]{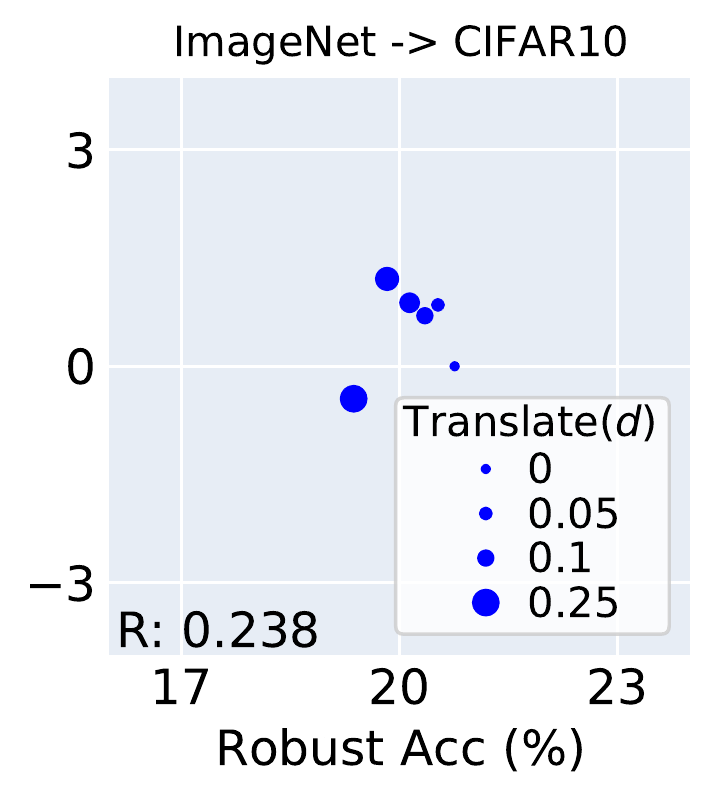}
    \caption{\cready{Relationship between robustness and transferability when we use rotation and translation as data augmentations, which violate the sufficient condition for regularization. We cannot see any obvious trend for such augmentations.}}
    \label{fig:result-rotate}
\end{figure}

\section{Conclusions} %
In this work, we theoretically analyze the sufficient conditions for (relative) domain transferability based on the view of function class regularization. We also conduct experiments to verify our claims and observe some counterexamples that show negative correlations between robustness and domain transferability. These results would contribute to a better understanding of the domain generalization.

\section*{Acknowledgement}
This work is partially supported by
NSF 1910100, NSF 2046795, NSF 1909577, NSF 1934986, NSF CNS 2046726, NIFA award 2020-67021-32799, C3 AI, and the Alfred P. Sloan Foundation.

\clearpage
\bibliography{ref}
\bibliographystyle{icml2022}

\newpage
\appendix
\onecolumn
\begin{center}
\LARGE

\textbf{Appendix}
\end{center}

\section{Proofs}\label{sec:proofs}
\begin{proposition}[Proposition~\ref{prop:example} Restated]
Given the problem defined in subsection~\ref{subsec:example}, $f^{\gD_S}_{c}$ is a minimizer of \eqref{eq:ex-2}. Moreover, if $ c\geq c'\geq 0$, then the relative domain transfer loss $\ell_{\gD_T}(f^{\gD_S}_{c})-\ell_{\gD_S}(f^{\gD_S}_{c})\geq \ell_{\gD_T}(f^{\gD_S}_{c'})-\ell_{\gD_S}(f^{\gD_S}_{c'})$.
\end{proposition}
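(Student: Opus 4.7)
The proof naturally splits into two parts: (i) verifying that $f^{\gD_S}_c = \alpha_c\, y_S$ with $\alpha_c := \min\{1, c/\|y_S\|_\gD\}$ solves the constrained problem \eqref{eq:ex-2}, and (ii) showing the 1D monotonicity in the regularization parameter $c$. Throughout I would use that $\|\cdot\|_\gD$ is a genuine norm (positively homogeneous and satisfying the triangle inequality) on the quotient space $\gF$, so all the usual norm manipulations are legal. The degenerate case $\|y_S\|_\gD = 0$ (where $y_S = 0$ in $\gF$) reduces to $f^{\gD_S}_c = 0$ for every $c\geq 0$, making the claim trivial; I would dispose of it in one line and assume $\|y_S\|_\gD > 0$ in the rest of the argument.

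For part (i), I would split on whether $c \geq \|y_S\|_\gD$. If so, $\alpha_c = 1$ and $f^{\gD_S}_c = y_S$ gives $\ell_{\gD_S}(f^{\gD_S}_c)=0$ with $\|y_S\|_\gD \leq c$, so feasibility and optimality are immediate. If $c < \|y_S\|_\gD$, then $\alpha_c = c/\|y_S\|_\gD \in (0,1)$ and $\|f^{\gD_S}_c\|_\gD = \alpha_c\|y_S\|_\gD = c$, so the constraint is tight and
\[
\ell_{\gD_S}(f^{\gD_S}_c) = \|f^{\gD_S}_c - y_S\|_\gD = (1-\alpha_c)\|y_S\|_\gD = \|y_S\|_\gD - c.
\]
For any feasible $f$ with $\|f\|_\gD \leq c$, the reverse triangle inequality gives $\|f - y_S\|_\gD \geq \|y_S\|_\gD - \|f\|_\gD \geq \|y_S\|_\gD - c$, matching the value achieved by $f^{\gD_S}_c$.

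For part (ii), I would parametrize by $\alpha := \alpha_c \in [0,1]$, noting that $c \mapsto \alpha_c$ is non-decreasing, so it suffices to show that
\[
F(\alpha) := \ell_{\gD_T}(\alpha y_S) - \ell_{\gD_S}(\alpha y_S) = \|\alpha y_S - y_T\|_\gD - (1-\alpha)\|y_S\|_\gD
\]
is non-decreasing on $[0,1]$. Given $1 \geq a \geq a' \geq 0$, the triangle inequality applied to $a y_S - y_T = (a'y_S - y_T) + (a-a')y_S$ yields $\|a y_S - y_T\|_\gD \geq \|a'y_S - y_T\|_\gD - (a-a')\|y_S\|_\gD$, so
\[
F(a) - F(a') \geq -(a-a')\|y_S\|_\gD + (a-a')\|y_S\|_\gD = 0.
\]
Combining the two steps finishes the proof: $c \geq c' \Rightarrow \alpha_c \geq \alpha_{c'} \Rightarrow F(\alpha_c) \geq F(\alpha_{c'})$, which is exactly the stated inequality.

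\textbf{Main obstacle.} The proof is essentially mechanical once the reformulation $F(\alpha)$ is in hand; the only mildly subtle point is that $F$ need not be smooth or convex-monotone in general, so I would avoid any derivative-based argument and instead rely on the direct reverse-triangle-inequality step above, which works uniformly over $[0,1]$ and is robust to the specific geometry of $y_S$ and $y_T$. A minor bookkeeping obstacle is handling the boundary $\alpha_c = 1$ (saturated regularization), which is already absorbed by the $\min\{1,\cdot\}$ definition and does not require separate treatment.
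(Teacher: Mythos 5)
Your proposal is correct and follows essentially the same route as the paper: reverse triangle inequality for optimality of $f^{\gD_S}_c$, and the triangle inequality applied to the difference $f^{\gD_S}_{c}-f^{\gD_S}_{c'}=(\alpha_c-\alpha_{c'})y_S$ for monotonicity. Your reparametrization by $\alpha_c=\min\{1,c/\|y_S\|_\gD\}$ merely unifies the saturated and unsaturated cases that the paper treats separately (and you additionally note the degenerate case $\|y_S\|_\gD=0$), but the underlying argument is identical.
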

\begin{proof}
Recall that $\ell_{\gD_S}(f)=\|f-y_S\|_{\gD}$, $\ell_{\gD_T}(f)=\|f-y_T\|_{\gD}$ and $f^{\gD_S}_{c}:=y_S\min\{1, \tfrac{c}{\|y_S\|_\gD}\}$. First, let's verify that 
\begin{align}
    f^{\gD_S}_{c}\in \argmin_{f\in \gF} \ \ell_{\gD_S}(f), \quad \text{s.t.}\ \ \|f\|_\gD\leq c.
\end{align}
If $c\geq\|y_S\|_\gD$, then $f^{\gD_S}_{c}=y_S$ minimizes $\ell_{\gD_S}(f)=\|f-y_S\|_\gD$ to achieve $0$. 

If $c<\|y_S\|_\gD$, then we can show that $f^{\gD_S}_{c}=\tfrac{c}{\|y_S\|_\gD}y_S$ achieves the minimum. For any $f\in \gF:\|f\|_\gD\leq c$, we have 
\begin{align}
    \ell_{\gD_S}(f)&=\|f-y_S\|_{\gD}\geq \|y_S\|_{\gD}-\|f\|_{\gD}\geq \|y_S\|_\gD-c\\
    &=\|(1-\tfrac{c}{\|y_S\|_\gD})y_S\|_\gD=\|y_S-\tfrac{c}{\|y_S\|_\gD}y_S\|_\gD=\ell_{\gD_S}(f^{\gD_S}_{c}).
\end{align}
Therefore, $ f^{\gD_S}_{c}$ indeed achieves the minimum. 

Now, let's prove the proposition. For any $c\geq \|y_S\|_\gD$, we have $\ell_{\gD_S}(f^{\gD_S}_{c})=0$ and $\ell_{\gD_T}(f^{\gD_S}_{c})$ is a constant. Therefore, there is no difference for all $c\geq \|y_S\|_\gD$, and the proposition holds for $c\geq c'\geq \|y_S\|_\gD$. Then, We only need to verify the case for $\|y_S\|\geq c\geq c'$:
\begin{align}
    \ell_{\gD_S}(f^{\gD_S}_{c'})-\ell_{\gD_S}(f^{\gD_S}_{c})&=c-c'=\|\tfrac{c}{\|y_S\|_\gD}y_S-\tfrac{c'}{\|y_S\|_\gD}y_S\|_\gD\\
    &=\|f^{\gD_S}_{c'}-f^{\gD_S}_{c}\|_\gD=\|f^{\gD_S}_{c'}-y_T+y_T-f^{\gD_S}_{c}\|_\gD\\
    &\geq |\|f^{\gD_S}_{c'}-y_T\|_\gD-\|y_T-f^{\gD_S}_{c}\|_\gD|\\
    &\geq \|f^{\gD_S}_{c'}-y_T\|_\gD-\|y_T-f^{\gD_S}_{c}\|_\gD\\
    &= \ell_{\gD_T}(f^{\gD_S}_{c'})-\ell_{\gD_T}(f^{\gD_S}_{c}).
\end{align}
Rearranging the above inequality gives the proposition. 
\end{proof}

\begin{proposition}[Proposition~\ref{prop:pseudometric} Restated]
$d_{\gG, \gF}(\cdot,\cdot):\gP_{\gX\times \gY}\times \gP_{\gX\times \gY}\to \sR_+$ satisfies the following three properties. 
\begin{enumerate}
    \item (Symmetry) $d_{\gG, \gF}(\gD_S,\gD_T)=d_{\gG, \gF}(\gD_T,\gD_S)$.
    \item (Triangle Inequality) For $\forall \gD'\in \gP_{\gX\times \gY}$:  $d_{\gG, \gF}(\gD_S,\gD_T)\leq d_{\gG, \gF}(\gD_S,\gD')+d_{\gG, \gF}(\gD',\gD_T)$.
    \item (Weak Zero Property) For $\forall \gD\in \gP_{\gX\times \gY}$: $d_{\gG, \gF}(\gD,\gD)=0$.
\end{enumerate}
\end{proposition}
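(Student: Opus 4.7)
The proof of all three properties is essentially bookkeeping on the supremum-of-absolute-value structure in the definition of $d_{\gG,\gF}$, so the plan is to handle each property in turn, in the same order as stated.

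For symmetry, my plan is to simply invoke $|a-b| = |b-a|$ pointwise for each $f \in \gF$ applied to $a = \inf_{g\in\gG} \ell_{\gD_S}(g\circ f)$ and $b = \inf_{g\in\gG} \ell_{\gD_T}(g\circ f)$, and then observe that taking the supremum over $f$ preserves this equality. For the weak zero property, the plan is to substitute $\gD_S = \gD_T = \gD$ into the definition and note that the two infima coincide for every $f$, making the integrand of the supremum identically zero.

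The triangle inequality is the only step that is slightly more than symbol-pushing. The plan is to introduce the shorthand $a_f := \inf_{g\in\gG} \ell_{\gD_S}(g\circ f)$, $b_f := \inf_{g\in\gG} \ell_{\gD'}(g\circ f)$, $c_f := \inf_{g\in\gG} \ell_{\gD_T}(g\circ f)$ for each $f\in\gF$. The scalar triangle inequality yields $|a_f - c_f| \leq |a_f - b_f| + |b_f - c_f|$ for every $f$. I then take the supremum over $f \in \gF$ on both sides and use the subadditivity of $\sup$, namely $\sup_f (u_f + v_f) \leq \sup_f u_f + \sup_f v_f$, to conclude $d_{\gG,\gF}(\gD_S,\gD_T) \leq d_{\gG,\gF}(\gD_S,\gD') + d_{\gG,\gF}(\gD',\gD_T)$.

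There is no real obstacle here; the mildest subtlety is making sure the subadditivity of the supremum is applied in the correct direction (one cannot split the sup of a sum into a sum of sups as an equality, only as an upper bound, which is exactly what we need). Non-negativity of $d_{\gG,\gF}$, which makes the claim that its codomain is $\sR_+$ well-formed, follows immediately from the absolute value inside the definition, so no separate argument is required.
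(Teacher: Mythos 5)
Your proposal is correct and follows essentially the same route as the paper: symmetry and the weak zero property by direct inspection of the definition, and the triangle inequality by inserting the intermediate term $\inf_{g\in\gG}\ell_{\gD'}(g\circ f)$, applying the scalar triangle inequality pointwise in $f$, and then using the subadditivity of the supremum. Nothing is missing.
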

\begin{proof}
Recall that 
\begin{align}
    d_{\gG,\gF}(\gD_S,\gD_T):=\sup_{f\in \gF}|\inf_{g\in \gG} \ell_{\gD_S}(g\circ f)-\inf_{g\in \gG} \ell_{\gD_T}(g\circ f)|.
\end{align}
We can see that the symmetry and weak zero property are obvious. For triangle inequality, given $\forall \gD'\in \gP_{\gX\times \gY}$:
\begin{align}
    d_{\gG,\gF}(\gD_S,\gD_T)&=\sup_{f\in \gF}|\inf_{g\in \gG} \ell_{\gD_S}(g\circ f)-\inf_{g\in \gG} \ell_{\gD_T}(g\circ f)|\\
    &=\sup_{f\in \gF}|\inf_{g\in \gG} \ell_{\gD_S}(g\circ f)-\inf_{g\in \gG} \ell_{\gD'}(g\circ f)+\inf_{g\in \gG} \ell_{\gD'}(g\circ f)-\inf_{g\in \gG} \ell_{\gD_T}(g\circ f)|\\
    &\leq \sup_{f\in \gF} ( |\inf_{g\in \gG} \ell_{\gD_S}(g\circ f)-\inf_{g\in \gG} \ell_{\gD'}(g\circ f)|+|\inf_{g\in \gG} \ell_{\gD'}(g\circ f)-\inf_{g\in \gG} \ell_{\gD_T}(g\circ f)|)\\
    &\leq \sup_{f\in \gF}  |\inf_{g\in \gG} \ell_{\gD_S}(g\circ f)-\inf_{g\in \gG} \ell_{\gD'}(g\circ f)|+\sup_{f\in \gF}  |\inf_{g\in \gG} \ell_{\gD'}(g\circ f)-\inf_{g\in \gG} \ell_{\gD_T}(g\circ f)|\\
    &= d_{\gG, \gF}(\gD_S,\gD')+d_{\gG, \gF}(\gD',\gD_T).
\end{align}
\end{proof}

{

\begin{proposition}\label{prop:wasserstein}
Denote the function class 
\begin{align}
    \gL_{\gG, \gF}:=\{h_{g, f}:\gX\times \gY\to \sR_+ \mid g\in \gG, f\in \gF\}, \quad \text{where} \ \ h_{g, f}(x, y):=\ell(g\circ f(x), y).
\end{align}
Let $d:\gX\times \gY\to \sR_+$ be a metric on $\gX\times \gY$, and assume $\forall h\in \gL_{\gG, \gF}$ is $L$-Lipschitz continuous with respect to the metric $d$. Then, we have 
\begin{align}
    d_{\gF_A}(\gD_S,\gD_T)\leq L \cdot W(\gD_S, \gD_T),
\end{align}
where $W(\gD_S, \gD_T)$ is the Wasserstein distance:
\begin{align}
    W(\gD_S, \gD_T)=\sup_{\phi:\gX\times \gY \to \sR}\ \E_{(x, y)\sim \gD_S}[\phi(x, y)]-\E_{(x, y)\sim \gD_T}[\phi(x, y)]\quad \text{s.t. $\phi$ is 1-Lipschitz}.  \label{def:wasserstein}
\end{align}
\end{proposition}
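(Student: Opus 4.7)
The plan is to reduce the supremum in the definition of $d_{\gF_A}(\gD_S,\gD_T)$ to a supremum of expectation differences over a $1$-Lipschitz class, then invoke the Kantorovich--Rubinstein dual form of the Wasserstein distance given in~\eqref{def:wasserstein}. Concretely, I first would peel off the infima over $g$: for any fixed $f\in \gF$, the elementary inequality
\begin{align}
\bigl|\inf_{g\in \gG}\ell_{\gD_S}(g\circ f) - \inf_{g\in \gG}\ell_{\gD_T}(g\circ f)\bigr| \leq \sup_{g\in \gG}\bigl|\ell_{\gD_S}(g\circ f) - \ell_{\gD_T}(g\circ f)\bigr|
\end{align}
converts the nested inf-difference into a uniform bound; taking $\sup_{f\in \gF}$ on both sides gives $d_{\gF_A}(\gD_S,\gD_T)\leq \sup_{h\in \gL_{\gG,\gF}}|\E_{\gD_S}[h]-\E_{\gD_T}[h]|$.

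Next I would exploit the $L$-Lipschitz assumption on every $h\in \gL_{\gG,\gF}$. Rescaling, $h/L$ is $1$-Lipschitz, so applying the Kantorovich--Rubinstein identity~\eqref{def:wasserstein} with $\phi := h/L$ yields $\E_{\gD_S}[h/L]-\E_{\gD_T}[h/L]\leq W(\gD_S,\gD_T)$. Since $-h/L$ is also $1$-Lipschitz, substituting it handles the other sign, giving $|\E_{\gD_S}[h]-\E_{\gD_T}[h]|\leq L\cdot W(\gD_S,\gD_T)$. Taking the supremum over $h\in \gL_{\gG,\gF}$ completes the bound.

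The only subtle step will be verifying that the $1$-Lipschitz test functions $\phi$ admitted in~\eqref{def:wasserstein} are a large enough class to contain $\pm h/L$ for all $h\in \gL_{\gG,\gF}$; this is immediate from the hypothesis since $h$ is assumed $L$-Lipschitz with respect to the same metric $d$ that underlies the Wasserstein distance. Handling the absolute value and the inf-to-sup reduction are routine, so I expect no real obstacle. The whole argument should fit in a few lines once the inf-to-sup inequality is stated and Kantorovich--Rubinstein is invoked.
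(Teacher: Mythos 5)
Your proposal is correct and follows essentially the same route as the paper: reduce the difference of infima over $g$ to a uniform bound over the loss class $\gL_{\gG,\gF}$, then apply the Kantorovich--Rubinstein dual form with $\pm h/L$ as test functions. The only cosmetic difference is that you invoke the elementary inequality $|\inf_g a_g - \inf_g b_g|\leq \sup_g|a_g-b_g|$ directly, whereas the paper proves the same reduction inline via $\epsilon$-approximate minimizers $g_{S,\epsilon},g_{T,\epsilon}$ and lets $\epsilon\to 0$.
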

} %

\begin{proof}
Recall that
\begin{align}
    d_{\gG,\gF}(\gD_S,\gD_T):=\sup_{f\in \gF}|\inf_{g\in \gG} \ell_{\gD_S}(g\circ f)-\inf_{g\in \gG} \ell_{\gD_T}(g\circ f)|.
\end{align}
By the definition of $\inf$,  for $\forall \epsilon>0$ there exist $g_{S, \epsilon}, g_{T,\epsilon}\in \gG$ such that
\begin{align}
    \inf_{g\in \gG} \ell_{\gD_S}(g\circ f_\epsilon)&\geq \ell_{\gD_S}(g_{S,\epsilon}\circ f_\epsilon)-\epsilon\\
    \inf_{g\in \gG} \ell_{\gD_T}(g\circ f_\epsilon)&\geq \ell_{\gD_T}(g_{T,\epsilon}\circ f_\epsilon)-\epsilon.
\end{align}
By the definition of $\sup$, there exists $f_\epsilon\in \gF$ such that
\begin{align}
    d_{\gG,\gF}(\gD_S,\gD_T) &\leq |\inf_{g\in \gG} \ell_{\gD_S}(g\circ f_\epsilon)-\inf_{g\in \gG} \ell_{\gD_T}(g\circ f_\epsilon)| +\epsilon\\
    &=\max\{\inf_{g\in \gG} \ell_{\gD_S}(g\circ f_\epsilon)-\inf_{g\in \gG} \ell_{\gD_T}(g\circ f_\epsilon),\inf_{g\in \gG} \ell_{\gD_T}(g\circ f_\epsilon)-\inf_{g\in \gG} \ell_{\gD_S}(g\circ f_\epsilon)\}+\epsilon\\
    &\leq \max\{\ell_{\gD_S}(g_{T, \epsilon}\circ f_\epsilon)-\inf_{g\in \gG} \ell_{\gD_T}(g\circ f_\epsilon),\ell_{\gD_T}(g_{S, \epsilon}\circ f_\epsilon)-\inf_{g\in \gG} \ell_{\gD_S}(g\circ f_\epsilon)\}+\epsilon\\
    &\leq \max\{\ell_{\gD_S}(g_{T, \epsilon}\circ f_\epsilon)- \ell_{\gD_T}(g_{T, \epsilon}\circ f_\epsilon),\ell_{\gD_T}(g_{S, \epsilon}\circ f_\epsilon)- \ell_{\gD_S}(g_{S, \epsilon}\circ f_\epsilon)\}+2\epsilon. \label{eq:prop-w-0}
\end{align}
Let's first consider the first term in the $\max\{\cdot, \cdot\}$ above. 
\begin{align}
    \ell_{\gD_S}&(g_{T,\epsilon}\circ f_\epsilon)-\ell_{\gD_T}(g_{T,\epsilon}\circ f_\epsilon) \\
    &=L\cdot(\tfrac{1}{L}\ell_{\gD_S}(g_{T,\epsilon}\circ f_\epsilon)-\tfrac{1}{L}\ell_{\gD_T}(g_{T,\epsilon}\circ f_\epsilon)) \\
    &=L\cdot(\E_{(x,y)\sim \gD_S}[\tfrac{1}{L}\ell(g_{T,\epsilon}\circ f_\epsilon(x), y)]-\E_{(x,y)\sim \gD_T}[\tfrac{1}{L}\ell(g_{T,\epsilon}\circ f_\epsilon(x), y)])\\
    &\leq L \cdot W(\gD_S, \gD_T),
\end{align}
where the inequality is due to that both $\tfrac{1}{L}\ell(g_{S,\epsilon}\circ f_\epsilon(x), y)$ and $\tfrac{1}{L}\ell(g_{T,\epsilon}\circ f_\epsilon(x), y)$ are 1-Lipschitz w.r.t. $(x, y)$ and the metric $d$. 

Similarly, we also have
\begin{align}
    \ell_{\gD_T}&(g_{S,\epsilon}\circ f_\epsilon)-\ell_{\gD_S}(g_{S,\epsilon}\circ f_\epsilon)\leq L \cdot W(\gD_S, \gD_T).
\end{align}
Therefore, \eqref{eq:prop-w-0} implies
\begin{align}
    d_{\gG,\gF}(\gD_S,\gD_T)\leq L\cdot W(\gD_S, \gD_T)+2\epsilon.
\end{align}
Letting $\epsilon\to 0$ completes the proof.

\end{proof}

{

\begin{proposition}\label{prop:d-TV}
Consider multi-class classification where $\gY = [k]$ for some $k \geq 2$. Define the loss function $\ell$ as
\begin{align}
    \ell (g\circ f(x), y) = \1\{\arg\max_{j \in [k]} (g\circ f(x))_j \neq y \}
\end{align}
Let $\delta_{TV}(D_S,D_T)$ denote the total variation distance.
Then we have
\begin{align}
    d_{\gF_A}(\gD_S,\gD_T)\leq \delta_{TV}(\gD_S,\gD_T)
\end{align}
\end{proposition}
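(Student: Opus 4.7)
My approach is to fix the feature extractor $f$ and reduce the statement to a pointwise (in $g$) bound using the fact that the $0$-$1$ loss is the indicator of a measurable set, for which total variation is the sharp control on expectations. Concretely, for any fixed $g,f$ the integrand $(x,y)\mapsto \1\{\arg\max_j (g\circ f(x))_j\neq y\}$ is the indicator of the set
\begin{align}
    A_{g,f}:=\{(x,y)\in\gX\times \gY:\arg\max_{j\in[k]}(g\circ f(x))_j\neq y\},
\end{align}
so $\ell_{\gD}(g\circ f)=\gD(A_{g,f})$. Hence by the usual characterization of $\delta_{TV}$ as a supremum over measurable sets,
\begin{align}
    |\ell_{\gD_S}(g\circ f)-\ell_{\gD_T}(g\circ f)|=|\gD_S(A_{g,f})-\gD_T(A_{g,f})|\leq \delta_{TV}(\gD_S,\gD_T).
\end{align}

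Next I would invoke the elementary inequality $|\inf_x A(x)-\inf_x B(x)|\leq \sup_x |A(x)-B(x)|$, which is immediate from $A(x_0)\leq B(x_0)+\sup_x|A(x)-B(x)|$, taking the infimum over $x_0$, and then symmetrizing. Applying this with $x$ ranging over $g\in\gG$ and $A,B$ being $\ell_{\gD_S}(g\circ f),\ell_{\gD_T}(g\circ f)$ gives, for every fixed $f\in\gF$,
\begin{align}
    \Bigl|\inf_{g\in\gG}\ell_{\gD_S}(g\circ f)-\inf_{g\in\gG}\ell_{\gD_T}(g\circ f)\Bigr|
    \leq \sup_{g\in\gG}\bigl|\ell_{\gD_S}(g\circ f)-\ell_{\gD_T}(g\circ f)\bigr|
    \leq \delta_{TV}(\gD_S,\gD_T),
\end{align}
where the last step uses the pointwise bound established above, uniformly in $g$.

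Finally, since the right-hand side $\delta_{TV}(\gD_S,\gD_T)$ does not depend on $f$, taking the supremum over $f\in\gF_A$ on the left yields
\begin{align}
    d_{\gG,\gF_A}(\gD_S,\gD_T)=\sup_{f\in\gF_A}\Bigl|\inf_{g\in\gG}\ell_{\gD_S}(g\circ f)-\inf_{g\in\gG}\ell_{\gD_T}(g\circ f)\Bigr|\leq \delta_{TV}(\gD_S,\gD_T),
\end{align}
which is the claim. I do not anticipate a main obstacle here: the only modest point is recognizing that the $0$-$1$ loss collapses the integrand to a set indicator so that the TV-distance bound applies without any Lipschitz or boundedness assumption beyond what the problem already provides. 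If one wanted to avoid the $|\inf-\inf|\leq \sup|{\cdot}|$ lemma, an $\epsilon$-optimal-$g$ argument (pick $g^\epsilon_S$ nearly minimizing $\ell_{\gD_S}(g\circ f)$, compare its target loss against $\inf_g \ell_{\gD_T}(g\circ f)$, send $\epsilon\to 0$, then symmetrize) reproduces the same bound verbatim.
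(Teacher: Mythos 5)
Your proof is correct and follows essentially the same route as the paper's: reduce to a fixed $g$ via the fact that the difference of infima is bounded by the supremum of differences (the paper does this with an $\epsilon$-optimal $g_{T,\epsilon}$, exactly the alternative you mention at the end), then observe that the 0-1 loss makes $\ell_{\gD}(g\circ f)$ the measure of an event, so the total variation distance bounds the difference uniformly in $g$ and $f$.
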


} %

\begin{proof}
Fix $f \in \gF$. By the definition of inf, there exists $g_{T,\epsilon}$ such that 
\begin{align}
    &|\inf_{g\in \gG} \ell_{\gD_S}(g\circ f)-\inf_{g\in \gG} \ell_{\gD_T}(g\circ f)| \\
    &\leq |\inf_{g\in \gG} \ell_{\gD_S}(g\circ f)-\ell_{\gD_T}(g_{T,\epsilon} \circ f)| + \eps\\
    &\leq |\ell_{\gD_S}( g_{T,\epsilon} \circ f)-\ell_{\gD_T}(g_{T,\epsilon} \circ f)| + \eps\\
    &= |\E_{(x,y)\sim \gD_S}[\ell(g_{T,\epsilon}\circ f(x), y)]-\E_{(x,y)\sim \gD_T}[\ell(g_{T,\epsilon}\circ f(x), y)]| + \eps\\
    &= |\sP_{(x,y)\sim \gD_S}[\1\{\arg\max_{j \in [k]} (g_{T,\epsilon}\circ f(x))_j \neq y \}]-\sP_{(x,y)\sim \gD_T}[\1\{\arg\max_{j \in [k]} (g_{T,\epsilon}\circ f(x))_j \neq y \}]| + \eps\\ \label{eq:event}
\end{align}
Let $A$ be the event such that $A = \{(x,y): \arg\max_{j \in [k]} (g_{T,\epsilon}\circ f(x))_j \neq y \}$. Then we can write  \eqref{eq:event} as 
\begin{align}
    (\ref{eq:event}) &=  |\sP_{(x,y)\sim \gD_S}[A]-\sP_{(x,y)\sim \gD_T}[A]| + \eps \\
    &\leq \sup_{B}  |\sP_{(x,y)\sim \gD_S}[B]-\sP_{(x,y)\sim \gD_T}[B]| + \eps \\
    &= \delta_{TV}(\gD_S,\gD_T) + \eps 
\end{align}
Send $\eps \rightarrow 0$. Noting that $f \in \gF$ was arbitrary, apply $\sup$ to both sides gives us the desired inequality.
\end{proof}

Theorem~\ref{thm-bound} can be proved easily by definition.
\begin{theorem}[Theorem~\ref{thm-bound} Restated]
Given a training algorithm $A$, for $\forall \gD_S,\gD_T\in \gP_{\gX\times \gY}$ we have
\begin{align}
    \tau(A;\gD_S,\gD_T) &\leq d_{\gF_A}(\gD_S,\gD_T),\\
    \text{or equivalently}, \qquad \inf_{g\in \gG} \ell_{\gD_T}(g\circ f_A^{\gD_S})&\leq \ell_{\gD_S}(g_A^{\gD_S} \circ f_A^{\gD_S}) + d_{\gF_A}(\gD_S,\gD_T).
\end{align}
\end{theorem}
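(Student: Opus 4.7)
The plan is to prove this by direct definition-chasing, starting from the definition of $\tau$ and relating it to $d_{\gF_A}$ through a single key inequality that lower-bounds the source-domain loss achieved by the training algorithm with the infimum over the fine-tuning class.

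First, I would write out $\tau(A;\gD_S,\gD_T)$ explicitly using Definition~\ref{def:relative-transf}, giving
\[
\tau(A;\gD_S,\gD_T) = \inf_{g\in \gG} \ell_{\gD_T}(g\circ f_A^{\gD_S}) - \ell_{\gD_S}(g_A^{\gD_S} \circ f_A^{\gD_S}).
\]
The key observation is that $g_A^{\gD_S}\in \gG$, so by the definition of infimum,
\[
\ell_{\gD_S}(g_A^{\gD_S} \circ f_A^{\gD_S}) \geq \inf_{g\in \gG} \ell_{\gD_S}(g \circ f_A^{\gD_S}).
\]
Substituting this lower bound into the expression for $\tau$ yields
\[
\tau(A;\gD_S,\gD_T) \leq \inf_{g\in \gG} \ell_{\gD_T}(g\circ f_A^{\gD_S}) - \inf_{g\in \gG} \ell_{\gD_S}(g\circ f_A^{\gD_S}).
\]

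Next I would bound the right-hand side by its absolute value, and then use the fact that $f_A^{\gD_S} \in \gF_A$ (by definition of the training algorithm) to pass from a pointwise quantity to the supremum over $f\in \gF_A$:
\[
\tau(A;\gD_S,\gD_T) \leq \Bigl|\inf_{g\in \gG} \ell_{\gD_T}(g\circ f_A^{\gD_S}) - \inf_{g\in \gG} \ell_{\gD_S}(g\circ f_A^{\gD_S})\Bigr| \leq \sup_{f\in \gF_A} \Bigl|\inf_{g\in \gG} \ell_{\gD_S}(g\circ f) - \inf_{g\in \gG} \ell_{\gD_T}(g\circ f)\Bigr|.
\]
By Definition~\ref{def:pseudometric}, this last supremum is exactly $d_{\gF_A}(\gD_S,\gD_T)$, completing the proof. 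The equivalent form in the statement follows by rearranging the definition of $\tau$.

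There is no real obstacle here; the result is essentially a direct consequence of the definitions, with the only nontrivial step being the recognition that the source training loss $\ell_{\gD_S}(g_A^{\gD_S}\circ f_A^{\gD_S})$ is at least $\inf_{g\in \gG}\ell_{\gD_S}(g\circ f_A^{\gD_S})$, which allows the source and target infima to be paired up. This pairing is what creates a quantity that can be controlled by the $(\gG,\gF_A)$-pseudometric, once we use membership $f_A^{\gD_S}\in \gF_A$ to lift to the supremum.
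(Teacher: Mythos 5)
Your proof is correct and follows essentially the same route as the paper's: lower-bound the trained source loss $\ell_{\gD_S}(g_A^{\gD_S}\circ f_A^{\gD_S})$ by $\inf_{g\in\gG}\ell_{\gD_S}(g\circ f_A^{\gD_S})$ using $g_A^{\gD_S}\in\gG$, pass to the absolute value, and take the supremum over $f\in\gF_A$ to recognize the $(\gG,\gF_A)$-pseudometric. No gaps.
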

\begin{proof}
By definition, 
\begin{align}
    \tau(A;\gD_S,\gD_T)&=\inf_{g\in \gG} \ell_{\gD_T}(g\circ f_A^{\gD_S})- \ell_{\gD_S}(g_A^{\gD_S} \circ f_A^{\gD_S})\\
    &\leq \inf_{g\in \gG} \ell_{\gD_T}(g\circ f_A^{\gD_S})- \inf_{g\in \gG} \ell_{\gD_S}(g\circ f_A^{\gD_S})\\
    &\leq |\inf_{g\in \gG} \ell_{\gD_T}(g\circ f_A^{\gD_S})- \inf_{g\in \gG} \ell_{\gD_S}(g\circ f_A^{\gD_S})|\\
    &\leq \sup_{f\in \gF_A}|\inf_{g\in \gG} \ell_{\gD_T}(g\circ f)- \inf_{g\in \gG} \ell_{\gD_S}(g\circ f)|\\
    &=d_{\gF_A}(\gD_S,\gD_T).
\end{align}
\end{proof}

To prove Theorem~\ref{thm-tightness}, we first prove the following interesting lemma. 
\begin{lemma}\label{lemma-convex}
Let $\gS_r^{d-1}:=\{y\in \sR^d\mid \|y\|_2=r\}$ denotes the $(d-1)$-dimensional sphere in $\sR^d$ with radius $r>0$. If a function $h:\gS_r^{d-1}\to \sR^d$ satisfies
\begin{align}
     \forall y\in \gS_r^{d-1}:\quad \langle h(y),y \rangle <0, \label{eq:convex-2}
\end{align}
then we have 
\begin{align}
    \vec{0}\in \conv(h(\gS_r^{d-1})),
\end{align}
i.e., $\vec{0}$ is in the convex hull of $\{h(y)\mid y\in \gS_r^{d-1}\}$.
\end{lemma}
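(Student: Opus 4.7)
\textbf{Proof plan for Lemma~\ref{lemma-convex}.} The plan is to argue by contradiction using the separating hyperplane theorem in finite dimensions, and then to evaluate $h$ at a cleverly chosen point of the sphere to contradict the assumed inner-product inequality.

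First, suppose for the sake of contradiction that $\vec{0} \notin \conv(h(\gS_r^{d-1}))$. Then $\{\vec{0}\}$ and $\conv(h(\gS_r^{d-1}))$ are two disjoint (nonempty) convex subsets of $\sR^d$. Since we are in finite dimensions, the standard separating hyperplane theorem (which does not require either set to be closed) guarantees the existence of a nonzero $v \in \sR^d$ and a scalar $c \in \sR$ such that $\langle v, \vec{0} \rangle \leq c \leq \langle v, w \rangle$ for every $w \in \conv(h(\gS_r^{d-1}))$. In particular, $c \geq 0$, so
\begin{align}
    \langle v, h(y) \rangle \;\geq\; 0 \qquad \text{for all } y \in \gS_r^{d-1}.
\end{align}

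Next, I would pick the specific point $y^{\star} := r v / \|v\|_2$, which lies on $\gS_r^{d-1}$ because $\|y^{\star}\|_2 = r$. Evaluating the hypothesis at $y^{\star}$ on one hand gives $\langle h(y^{\star}), y^{\star}\rangle < 0$ by \eqref{eq:convex-2}. On the other hand, using the separation inequality,
\begin{align}
    \langle h(y^{\star}), y^{\star}\rangle \;=\; \frac{r}{\|v\|_2}\, \langle h(y^{\star}), v\rangle \;\geq\; 0,
\end{align}
which is the desired contradiction. Therefore $\vec{0} \in \conv(h(\gS_r^{d-1}))$, completing the proof.

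The only delicate point is invoking the correct version of the separating hyperplane theorem: since $\conv(h(\gS_r^{d-1}))$ need not be closed (we have made no continuity assumption on $h$), we cannot in general hope for strict separation. The weak separation ($c \geq 0$, inner products $\geq 0$) is however already sufficient, because the strict inequality $< 0$ in the hypothesis supplies the required slack to derive a contradiction. No compactness, continuity, or measurability of $h$ is needed.
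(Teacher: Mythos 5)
Your proposal is correct and follows essentially the same route as the paper's proof: contradiction via the separating hyperplane theorem, followed by evaluating the hypothesis at $y = rv/\|v\|_2$ to contradict the separation inequality. Your added remark that only weak separation is available (since $\conv(h(\gS_r^{d-1}))$ need not be closed) but that the strict inequality in \eqref{eq:convex-2} supplies the needed slack is a careful touch the paper leaves implicit.
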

\begin{proof}
We assume that $\vec{0}\notin \conv(h(\gS_r^{d-1}))$ and prove by contradiction. 
Since $\vec{0}\notin \conv(h(\gS_r^{d-1}))$, we can find a hyperplane that separates $\vec{0}$ and the convex set $ \conv(h(\gS_r^{d-1}))$. By the separating hyperplane theorem there exists a nonzero vector $\vv\in \sR^d$ and $c\geq 0$ such that
\begin{align}
    \forall \vz\in \conv(h(\gS_r^{d-1})):\quad \langle \vz, \vv\rangle \geq c\geq 0.\label{eq:convex-1}
\end{align}
We choose $y=r\vv/\|\vv\|_2$ and observe that $h(y)\in \conv(h(\gS_r^{d-1}))$. Hence, by \eqref{eq:convex-1} we have
\begin{align}
    \langle h(y), y\rangle \geq 0,
\end{align}
which contradicts to the condition of \eqref{eq:convex-2}.  Therefore, it must be that $\vec{0}\in \conv(h(\gS_r^{d-1}))$.

\end{proof}

We first prove a generalized version of Theorem~\ref{thm-tightness} as shown below, and then we can see that Theorem~\ref{thm-tightness} is exactly the following theorem but with $\epsilon=0$.

\begin{theorem}[Generalized Version of Theorem~\ref{thm-tightness}]\label{thm-tightness-g}
Given any source distribution $\gD_S\in \gP_{\gX\times \sR^d}$, any fine-tuning function class $\gG$ where $\gG$ includes the zero function, and any training algorithm $A$, denote
\begin{align}
    \epsilon:=\ell_{\gD_S}(g_A^{\gD_S} \circ f_A^{\gD_S})- \inf_{g\in \gG, f\in \gF_{A}}\ell_{\gD_S}(g \circ f).
\end{align} 
We assume some properties of the sample individual loss function $\ell:\sR^d\times \sR^d\to \sR_+$: it is differentiable and strictly convex w.r.t. its first argument; $\ell(y,y)=0$ for any $y\in  \sR^d$; and $\lim_{r\to \infty} \inf_{y:\|y\|_2=r} \ell({\vec{0}}, y)=\infty$.
Then, for any distribution $\gD^\gX$ on $\gX$, there exist some distributions $\gD_T\in \gP_{\gX\times \gY}$ with its marginal on $\gX$ being $\gD^\gX$ such that
\begin{align}
    \tau(A;\gD_S,\gD_T)\leq d_{\gF_A}(\gD_S,\gD_T)\leq \tau(A;\gD_S,\gD_T)+\epsilon.
\end{align}
\end{theorem}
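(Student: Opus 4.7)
The left inequality $\tau(A;\gD_S,\gD_T)\le d_{\gF_A}(\gD_S,\gD_T)$ is immediate from Theorem~\ref{thm-bound}, so the task is to exhibit, for the prescribed marginal $\gD^\gX$, a target distribution $\gD_T\in\gP_{\gX\times\sR^d}$ whose $\gX$-marginal is $\gD^\gX$ and for which $d_{\gF_A}(\gD_S,\gD_T)\le \tau(A;\gD_S,\gD_T)+\epsilon$. The governing idea is to rig $\gD_T$ so that $\inf_{g\in\gG}\ell_{\gD_T}(g\circ f)$ takes a single common value $V$ for every $f\in\gF_A$, with the infimum attained by the zero function already assumed to lie in $\gG$. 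Once this is done, a direct computation yields $d_{\gF_A}(\gD_S,\gD_T)=V-\inf_{g\in\gG,f\in\gF_A}\ell_{\gD_S}(g\circ f)$ and $\tau(A;\gD_S,\gD_T)=V-\ell_{\gD_S}(g_A^{\gD_S}\circ f_A^{\gD_S})$, whose difference is exactly $\epsilon$ by its defining equation.

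To build such a $\gD_T$ I will pick a single distribution $P$ on $\sR^d$ (independent of $x$) and take the conditional law of $y$ given $x$ to be $P$. The essential requirement on $P$ is the first-order condition $\E_{y\sim P}[\nabla_1\ell(\vec{0},y)]=\vec{0}$. Combined with strict convexity of $\ell$ in its first argument, this makes $\hat y=\vec{0}$ the unique pointwise minimizer of $\hat y\mapsto\E_{y\sim P}[\ell(\hat y,y)]$, so the constant-zero prediction is optimal \emph{among all measurable prediction functions}. To construct $P$ I will apply Lemma~\ref{lemma-convex} to the map $h(y):=\nabla_1\ell(\vec{0},y)$ restricted to a sphere $\gS_r^{d-1}$; its hypothesis $\langle h(y),y\rangle<0$ follows from the tangent-line inequality $\ell(y,y)\ge\ell(\vec{0},y)+\langle\nabla_1\ell(\vec{0},y),y\rangle$ together with $\ell(y,y)=0$ and $\ell(\vec{0},y)>0$ for $y\ne\vec{0}$ (the latter being a consequence of strict convexity, since $\vec{0}$ is not the unique minimizer of $\ell(\cdot,y)$). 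The lemma then yields finitely many $y_1,\dots,y_k\in\gS_r^{d-1}$ and non-negative weights summing to one such that $P=\sum_{i=1}^k\lambda_i\delta_{y_i}$ has $\sum_{i=1}^k\lambda_i\nabla_1\ell(\vec{0},y_i)=\vec{0}$.

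Set $V:=\E_{y\sim P}[\ell(\vec{0},y)]$. For any $f\in\gF_A$ the value $V$ is simultaneously a lower bound on $\inf_{g\in\gG}\ell_{\gD_T}(g\circ f)$ (since $V$ is the unrestricted infimum over measurable predictors) and an upper bound (since the zero function in $\gG$ achieves it), whence $\inf_{g\in\gG}\ell_{\gD_T}(g\circ f)=V$ uniformly in $f$. The growth hypothesis $\lim_{r\to\infty}\inf_{\|y\|_2=r}\ell(\vec{0},y)=\infty$ lets me choose $r$ large enough that $V\ge \E_{(x,y)\sim\gD_S}[\ell(\vec{0},y)]$, and this last quantity dominates $\inf_{g\in\gG}\ell_{\gD_S}(g\circ f)$ for every $f\in\gF_A$ (again witnessed by $g\equiv\vec{0}\in\gG$). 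Consequently the absolute value inside the definition of $d_{\gF_A}$ collapses to a positive one, giving $d_{\gF_A}(\gD_S,\gD_T)=V-\inf_{g\in\gG,f\in\gF_A}\ell_{\gD_S}(g\circ f)$, and the identity $d_{\gF_A}=\tau+\epsilon$ follows from the definitions of $\tau$ and $\epsilon$.

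The technical heart of the argument is producing the measure $P$ via Lemma~\ref{lemma-convex}; here the \emph{strict} convexity (rather than mere convexity) of $\ell$ in its first argument is indispensable for the strict sign condition $\langle\nabla_1\ell(\vec{0},y),y\rangle<0$ on the sphere. The remaining work is bookkeeping: confirming that the finitely-supported $P$ yields finite expectations, that a single $x$-independent $P$ accommodates any $\gD^\gX$, and that the radius $r$ can be taken large enough to make the absolute value collapse in the intended direction. Specializing to $\epsilon=0$ (the setting of Theorem~\ref{thm-tightness}) turns the inequality into equality.
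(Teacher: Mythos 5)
Your proposal is correct and follows essentially the same route as the paper's proof: the left inequality from Theorem~\ref{thm-bound}, the sign condition $\langle\nabla_1\ell(\vec{0},y),y\rangle<0$ on a sphere chosen via the growth hypothesis, Lemma~\ref{lemma-convex} to place $\vec{0}$ in the convex hull of the gradients, the resulting $x$-independent finitely-supported conditional law making the zero predictor uniquely optimal, and the collapse of the absolute value in $d_{\gF_A}$. The only cosmetic slip is the phrase ``$\vec{0}$ is not the unique minimizer of $\ell(\cdot,y)$,'' which should read that $y\neq\vec{0}$ is the unique minimizer, hence $\ell(\vec{0},y)>0$; the argument itself is sound.
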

\begin{proof}
The $\tau(A;\gD_S,\gD_T)\leq d_{\gF_A}(\gD_S,\gD_T)$ is proved by Theorem~\ref{thm-bound}, we only need to prove that there exists some  $\gD_T\in \gP_{\gX\times \gY}$ with its marginal on $\gX$ being $\gD^\gX$ such that
\begin{align}
    d_{\gF_A}(\gD_S,\gD_T)\leq \tau(A;\gD_S,\gD_T)+\epsilon=\inf_{g\in \gG} \ell_{\gD_T}(g\circ f_A^{\gD_S})-\inf_{g\in \gG, f\in \gF_{A}}\ell_{\gD_S}(g \circ f).
\end{align}
We begin by observing that $\lim_{r\to \infty} \inf_{y:\|y\|_2=r} \ell({\vec{0}}, y)=\infty$, and thus there exists $r>0$ such that
\begin{align}
    \forall y \in \gS_r^{d-1}:\quad\ell(\vec{0},y)\geq \ell_{\gD_S}(\vec{0})=\E_{(\vx,y)\sim \gD_S}[\ell(\vec{0}, y)] , \label{eq:thm-2-3}
\end{align}
where $\gS_r^{d-1}:=\{y\in \sR^d\mid \|y\|_2=r\}$ denotes the $(d-1)$-dimensional sphere with radius $r$. Note the we abuse the notion a bit to let $\vec{0}$ also denotes the zero function (i.e., maps all input to zero).
Now, let us define at the following set
\begin{align}
    \gV:= \{ \nabla_1 \ell(\vec{0}, y) \mid y\in \gS_r^{d-1}\} ,
\end{align}
where $\nabla_1$ is taking the gradient w.r.t. the first argument of $\ell(\cdot, \cdot)$. By the strict convexity of $\ell(\cdot, y)$, we have
\begin{align}
    \ell(y, y)-\ell(\vec{0},y)>\langle  \nabla_1 \ell(\vec{0}, y) , y \rangle.
\end{align}
Noting that $\ell(y, y)=0$ is the unique minimum of $\ell(\cdot, y)$, we have $\ell(\vec{0},y)>0$. Accordingly, 
\begin{align}
   \forall y\in \gS_r^{d-1}:\quad 0 >-\ell(\vec{0},y)>\langle  \nabla_1 \ell(\vec{0}, y) , y \rangle.
\end{align}
Having the above property, and noting that $\nabla_1\ell(\vec{0}, \cdot):\gS_r^{d-1}\to \sR^d$, we can invoke Lemma~\ref{lemma-convex} to see that 
\begin{align}
    \vec{0}\in \conv(\gV).
\end{align}
Therefore, there exists $n$ points $\{y_i\}_{i=1}^n\subset \gS^{d-1}_r$ such that
\begin{align}
    \vec{0}=\sum_{i=1}^n c_i\nabla_1 \ell(\vec{0}, y_i), \label{eq:thm-2-1}
\end{align}
where $c_i>0$ and $\sum_{i=1}^n c_i=1$. 

Therefore, we can define the target distribution $\gD_T$ as the following. Given any $\vx\sim \gD^{\gX}$, the distribution of $y$ conditioned on $\vx$ is: $y=y_i$ with probability $c_i$. Now we verify the distribution $\gD_T$ indeed makes the bound $\epsilon$-tight. Denote a strictly convex function $h:\sR^d\to \sR_+$ as the following
\begin{align}
    h(\cdot):=\sum_{i=1}^n c_i \ell(\cdot, y_i).
\end{align}
Since $h$ is strictly convex and $\nabla h(\vec{0})=\vec{0}$ (\eqref{eq:thm-2-1}), we can see that $h(\vec{0})$ achieves the unique global minimum of $h$ on $\sR^d$.

Therefore, given the $\gD_T$, for any $\forall f\in \gF_A$ we have
\begin{align}
    \inf_{g\in \gG}\ell_{\gD_T}(g\circ f)&=\inf_{g\in \gG} \E_{(\vx,y)\sim \gD_T} [\ell(g\circ f(\vx),y)]\\
    &=\inf_{g\in \gG} \E_{\vx\sim \gD^{\gX}} \left[\sum_{i=1}^n c_i \ell(g\circ f(\vx),y_i) \right]\\
    &=\inf_{g\in \gG} \E_{\vx\sim \gD^{\gX}} \left[h(g\circ f(\vx)) \right]\\
    &=h(\vec{0})\tag{$\gG$ contains the zero function}\\
    &=\sum_{i=1}^n c_i \ell(\vec{0}, y_i). \label{eq:thm-2-4}
\end{align}
Recall that $d_{\gF_A}(\gD_S, \gD_T)=\sup_{f\in \gF_A}|\inf_{g\in \gG} \ell_{\gD_T}(g\circ f)- \inf_{g\in \gG} \ell_{\gD_S}(g\circ f)|$, we can see that
\begin{align}
    d_{\gF_A}(\gD_S, \gD_T)=\sup_{f\in \gF_A}|\sum_{i=1}^n c_i \ell(\vec{0}, y_i)- \inf_{g\in \gG} \ell_{\gD_S}(g\circ f)| \label{eq:thm-2-2}
\end{align}
By \eqref{eq:thm-2-3}, for $\forall f\in \gF_A$, we have
\begin{align}
    \sum_{i=1}^n c_i \ell(\vec{0}, y_i)\geq \ell_{\gD_S}(\vec{0})=\ell_{\gD_S}(\vec{0}\circ f)\geq \inf_{g\in \gG} \ell_{\gD_S}(g\circ f).
\end{align}
Hence, we can continue as
\begin{align}
    (\ref{eq:thm-2-2})&=\sup_{f\in \gF_A}\left(\sum_{i=1}^n c_i \ell(\vec{0}, y_i)- \inf_{g\in \gG} \ell_{\gD_S}(g\circ f)\right)=\sum_{i=1}^n c_i \ell(\vec{0}, y_i)- \inf_{g\in \gG,f\in \gF_A} \ell_{\gD_S}(g\circ f)\\
    &=\inf_{g\in \gG} \ell_{\gD_T}(g\circ f_A^{\gD_S})-\inf_{g\in \gG, f\in \gF_{A}}\ell_{\gD_S}(g \circ f)\tag{by \eqref{eq:thm-2-4}}\\
    &=\inf_{g\in \gG} \ell_{\gD_T}(g\circ f_A^{\gD_S})-\ell_{\gD_S}(g_A^{\gD_S} \circ f_A^{\gD_S})+\ell_{\gD_S}(g_A^{\gD_S} \circ f_A^{\gD_S})-\inf_{g\in \gG, f\in \gF_{A}}\ell_{\gD_S}(g \circ f)\\
    &=\tau(A;\gD_S,\gD_T)+\epsilon.
\end{align}
Therefore, it holds that $d_{\gF_A}(\gD_S,\gD_T)\leq \tau(A;\gD_S,\gD_T)+\epsilon$, and thus the theorem.

\end{proof}

An alternative form of the tightness bound may be derived from the above theorem, and the alternative forms present the tightness result from a different perspective. Noting that the upper bound in Theorem~\ref{thm-bound} states that the following inequality always hold:
$$ \tau(A; \gD_S, \gD_T) - d_{\gF_A} (\gD_S,\gD_T)\leq 0,$$ 
The tightness of this upper bound depends on whether we can derive a lower bound for $\tau(A; \gD_S, \gD_T) - d_{\gF_A} (\gD_S,\gD_T)$. 

We show in the following that, under the same settings as Theorem~\ref{thm-tightness-g},  
\begin{align}
    \inf_{\gD_S, \gD^\gX} \sup_{\gD_T\in \Gamma(\gD^\gX)} (\tau(A; \gD_S, \gD_T) - d_{\gF_A} (\gD_S,\gD_T)) \geq -\epsilon, \label{eq:tightness-equiv}
\end{align}
where $\Gamma(\gD^\gX)$ denotes the distributions whose marginal distribution on $\gX$ is $\gD^\gX$. Theorem~\ref{thm-tightness-g} implies that there exists a $\gD_T$ whose marginal on $\gX$ is $\gD^\gX$ such that $\tau(A; \gD_S, \gD_T) - d_{\gF_A} (\gD_S,\gD_T) \geq -\epsilon$. Therefore, $\sup_{\gD_T\in \Gamma(\gD^\gX)} (\tau(A; \gD_S, \gD_T) - d_{\gF_A} (\gD_S,\gD_T)) \geq -\epsilon$. Then, taking the infimum over all pairs of source distributions and target marginal distribution gives the min-max type of lower bound as shown above. \Eqref{eq:tightness-equiv} is essentially an equivalent statement as the statement of Theorem~\ref{thm-tightness-g}, but its perspective may be more clear and more interesting to some of the readers.  %

\begin{lemma}[Lemma \ref{lemma:emp-finite-dist} Restated]
Assuming the individual loss function $\ell:\gY\times\gY\to [0, c]$, given any distribution $\gD\in \gP_{\gX\times \gY}$ and $\forall \delta>0$, with probability $\geq 1-\delta$ we have
\begin{align}
    d_{\gG, \gF}(\gD, \widehat\gD^n)\leq 2\Rad_{\widehat\gD^n}(\gL_{\gG, \gF})+3c\sqrt{\frac{\ln(4/\delta)}{2n}}.
\end{align}
\end{lemma}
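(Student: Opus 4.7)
The plan is to reduce the $(\gG,\gF)$-pseudometric to a standard uniform-convergence quantity over the loss class $\gL_{\gG,\gF}$, and then apply a textbook Rademacher-complexity argument (symmetrization plus McDiarmid's inequality, invoked twice).

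The first step is an inf-to-sup reduction. For a fixed feature extractor $f \in \gF$, the elementary inequality $|\inf_g A(g) - \inf_g B(g)| \leq \sup_g |A(g) - B(g)|$ yields
\begin{align*}
\bigl|\inf_{g\in\gG}\ell_\gD(g\circ f) - \inf_{g\in\gG}\ell_{\widehat\gD^n}(g\circ f)\bigr| \leq \sup_{g\in\gG}\bigl|\E_\gD[h_{g,f}] - \E_{\widehat\gD^n}[h_{g,f}]\bigr|.
\end{align*}
Taking the supremum over $f\in\gF$ and recalling the definition of $\gL_{\gG,\gF}$ gives $d_{\gG,\gF}(\gD,\widehat\gD^n) \leq \sup_{h \in \gL_{\gG,\gF}}|\E_\gD[h] - \E_{\widehat\gD^n}[h]|$. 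This is a purely mechanical manipulation, and after this step the problem is the familiar one of uniformly bounding empirical-expected deviations over a bounded function class.

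Next I would bound $\Phi(S) := \sup_{h\in\gL_{\gG,\gF}}|\E_\gD[h] - \tfrac{1}{n}\sum_{i=1}^n h(x_i,y_i)|$ in high probability. Since $\ell$ is $[0,c]$-valued, replacing any single sample changes $\Phi$ by at most $c/n$, so McDiarmid's inequality gives $\Phi(S) \leq \E_S[\Phi(S)] + c\sqrt{\ln(2/\delta')/(2n)}$ with probability $\geq 1-\delta'$. The expectation $\E_S[\Phi(S)]$ is then bounded by a standard ghost-sample symmetrization: one splits $|\cdot|$ into $\sup_h(\E_\gD[h]-\E_{\widehat\gD^n}[h])$ and $\sup_h(\E_{\widehat\gD^n}[h]-\E_\gD[h])$, each of which symmetrizes to the (non-absolute) Rademacher expectation, yielding $\E_S[\Phi(S)] \leq 2\,\E_S[\Rad_{\widehat\gD^n}(\gL_{\gG,\gF})]$.

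Finally, to pass from the expected Rademacher complexity to the empirical one, I would apply McDiarmid a second time to the mapping $S \mapsto \Rad_{\widehat\gD^n}(\gL_{\gG,\gF})$, which also has bounded differences $c/n$, giving $\E_S[\Rad_{\widehat\gD^n}(\gL_{\gG,\gF})] \leq \Rad_{\widehat\gD^n}(\gL_{\gG,\gF}) + c\sqrt{\ln(2/\delta'')/(2n)}$ with probability $\geq 1-\delta''$. Choosing $\delta'=\delta''=\delta/2$ and applying a union bound merges the two failure events; the two tails contribute $c\sqrt{\ln(4/\delta)/(2n)} + 2c\sqrt{\ln(4/\delta)/(2n)} = 3c\sqrt{\ln(4/\delta)/(2n)}$ once the factor of $2$ from symmetrization is propagated through the Rademacher-concentration term.

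None of the individual steps is genuinely hard; the only real care is bookkeeping the constants so that the final tail matches $3c\sqrt{\ln(4/\delta)/(2n)}$ exactly. The slight subtlety is that $\Phi$ involves an absolute value, which must either be handled by symmetrizing both sign-directions separately and union-bounding (splitting $\delta$ further) or by absorbing the absolute value into the Rademacher supremum via the classical observation that $\sup_{h\in\gL}|\cdot|$ symmetrizes to $\E_{\bm\xi}\sup_{h\in\gL}|\sum_i\xi_i h(x_i,y_i)|$ and then dominating by the unsigned Rademacher quantity at no cost.
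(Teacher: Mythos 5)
Your proposal is correct and follows essentially the same route as the paper: both reduce $d_{\gG,\gF}(\gD,\widehat\gD^n)$ to the uniform deviation $\sup_{h\in\gL_{\gG,\gF}}|\E_\gD[h]-\E_{\widehat\gD^n}[h]|$ (the paper does the $\inf$-to-$\sup$ step via $\epsilon$-approximate minimizers rather than your cleaner $|\inf A-\inf B|\le\sup|A-B|$, but the content is identical) and then invoke the standard two-sided Rademacher bound, with the $\ln(4/\delta)$ arising exactly from the $\delta/2$ union bound over the two sign directions (equivalently, over $\gL_{\gG,\gF}$ and $-\gL_{\gG,\gF}$, using that their empirical Rademacher complexities coincide). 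One small caveat: of the two fixes you offer for the absolute value, only the sign-splitting union bound is right --- dominating $\E_{\bm{\xi}}\sup_h|\sum_i\xi_i h(x_i,y_i)|$ by the unsigned Rademacher quantity is not free in general --- but since that is the route you (and the paper) actually take to get the stated constant, the proof stands.
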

\begin{proof}
Given any $\delta>0$, $f\in \gF, g\in \gG$, $\gD\in \gP_{\gX\times \gY}$, and taking any $h_{g, f}\in \gL_{\gG, \gF}$ (Definition~\ref{def:rad}), with probability $\geq 1-\delta$ we have
\begin{align}
\ell_{\gD}(g\circ f)-\ell_{\widehat\gD^n}(g\circ f) &= \E_{(x, y)\sim \gD}[h_{g, f}(x, y)]-\frac{1}{n}\sum_{i=1}^n h_{g,f}(x_i, y_i)\\
&\leq 2\Rad_{\widehat\gD^n}(\gL_{\gG, \gF})+3c\sqrt{\frac{\ln(2/\delta)}{2n}},\label{eq:emp-finite-dist-1}
\end{align}
where the inequality is by the well-known Rademacher complexity uniform bound. Similarly,
\begin{align}
    \ell_{\widehat\gD^n}(g\circ f) - \ell_{\gD}(g\circ f)&= \E_{(x, y)\sim \gD}[-h_{g, f}(x, y)]-\frac{1}{n}\sum_{i=1}^n -h_{g,f}(x_i, y_i)\\
&\leq 2\Rad_{\widehat\gD^n}(-\gL_{\gG, \gF})+3c\sqrt{\frac{\ln(2/\delta)}{2n}}\\
&=2\Rad_{\widehat\gD^n}(\gL_{\gG, \gF})+3c\sqrt{\frac{\ln(2/\delta)}{2n}}. \label{eq:emp-finite-dist-2}
\end{align}
The probability that both events \eqref{eq:emp-finite-dist-1} and \eqref{eq:emp-finite-dist-2} happen can be upper bounded by union bound, i.e.,
\begin{align}
    \Pr((\ref{eq:emp-finite-dist-1})\wedge(\ref{eq:emp-finite-dist-2}))=1-\Pr((\ref{eq:emp-finite-dist-1})^c\vee(\ref{eq:emp-finite-dist-2})^c)\geq 1-(\Pr((\ref{eq:emp-finite-dist-1})^c)+\Pr((\ref{eq:emp-finite-dist-2})^c))\geq 1-2\delta.
\end{align}
Therefore, combining the above with probability $\geq 1-\delta$ we have
\begin{align}
    |\ell_{\gD}(g\circ f)-\ell_{\widehat\gD^n}(g\circ f)|\leq 2\Rad_{\widehat\gD^n}(\gL_{\gG, \gF})+3c\sqrt{\frac{\ln(4/\delta)}{2n}}. \label{eq:emp-finite-dist-3}
\end{align}
With \eqref{eq:emp-finite-dist-3}, we can prove the lemma as the following. Given $\forall \epsilon>0$, by the definition of infimum there exists a $g_\epsilon\in \gG$ such that
\begin{align}
    \ell_{\gD}(g_\epsilon\circ f)<\inf_{g\in\gG}\ell_{\gD}(g\circ f)+\epsilon.
\end{align}
By \eqref{eq:emp-finite-dist-3},  with probability $\geq 1-\delta$ we have
\begin{align}
    \ell_{\widehat\gD^n}(g_\epsilon\circ f)\leq \ell_{\gD}(g_\epsilon\circ f)+2\Rad_{\widehat\gD^n}(\gL_{\gG, \gF})+3c\sqrt{\frac{\ln(4/\delta)}{2n}}.
\end{align}
Moreover, by definition
\begin{align}
    \inf_{g\in \gG}\ell_{\widehat\gD^n}(g\circ f)\leq \ell_{\widehat\gD^n}(g_\epsilon\circ f).
\end{align} 
Combining the above three inequalities we have
\begin{align}
    \inf_{g\in \gG}\ell_{\widehat\gD^n}(g\circ f)< \inf_{g\in\gG}\ell_{\gD}(g\circ f)+\epsilon+2\Rad_{\widehat\gD^n}(\gL_{\gG, \gF})+3c\sqrt{\frac{\ln(4/\delta)}{2n}}.
\end{align}
Letting $\epsilon\to 0$, we can see that
\begin{align}
    \inf_{g\in \gG}\ell_{\widehat\gD^n}(g\circ f)\leq \inf_{g\in\gG}\ell_{\gD}(g\circ f)+2\Rad_{\widehat\gD^n}(\gL_{\gG, \gF})+3c\sqrt{\frac{\ln(4/\delta)}{2n}}.
\end{align}
Similarly, we can derive the above inequality again but with $\gD$ and $\widehat\gD^n$ switched. Therefore, 
\begin{align}
    |\inf_{g\in \gG}\ell_{\widehat\gD^n}(g\circ f)- \inf_{g\in\gG}\ell_{\gD}(g\circ f)|\leq 2\Rad_{\widehat\gD^n}(\gL_{\gG, \gF})+3c\sqrt{\frac{\ln(4/\delta)}{2n}}.
\end{align}
Since the above inequality holds for $\forall f\in \gF$, taking the supremum over $f\in \gF$ gives the lemma.
\end{proof}

\begin{lemma}\label{lemma:emp-dist}
Assuming the individual loss function $\ell:\gY\times\gY\to [0, c]$, given any distributions $\gD_S, \gD_T\in \gP_{\gX\times \gY}$ and $\forall \delta>0$, with probability $\geq 1-\delta$ we have
\begin{align}
    d_{\gF_A}(\gD_S, \gD_T)\leq d_{\gF_A}(\widehat\gD^n_S, \widehat\gD^n_T) + 2(\Rad_{\widehat\gD_S^n}(\gL_{\gG, \gF_A})+\Rad_{\widehat\gD_T^n}(\gL_{\gG, \gF_A}))+6c\sqrt{\frac{\ln(8/\delta)}{2n}}.
\end{align}
\end{lemma}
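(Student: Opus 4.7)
The plan is to derive the bound by combining the triangle inequality for the $(\gG,\gF)$-pseudometric (Proposition~\ref{prop:pseudometric}) with two applications of Lemma~\ref{lemma:emp-finite-dist}, one for each of the source and target distributions.

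First, by Proposition~\ref{prop:pseudometric}, I would insert $\widehat\gD_S^n$ and $\widehat\gD_T^n$ as intermediate distributions to obtain
\begin{align}
d_{\gF_A}(\gD_S, \gD_T) \leq d_{\gF_A}(\gD_S, \widehat\gD_S^n) + d_{\gF_A}(\widehat\gD_S^n, \widehat\gD_T^n) + d_{\gF_A}(\widehat\gD_T^n, \gD_T),
\end{align}
using symmetry on the last term. This reduces the task to controlling the two ``distribution-to-empirical'' terms.

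Next, I would invoke Lemma~\ref{lemma:emp-finite-dist} twice, once with $\gD=\gD_S$ and once with $\gD=\gD_T$, each time at confidence level $\delta/2$. This yields, with probability at least $1-\delta/2$ individually,
\begin{align}
d_{\gF_A}(\gD_S, \widehat\gD_S^n) &\leq 2\Rad_{\widehat\gD_S^n}(\gL_{\gG, \gF_A})+3c\sqrt{\tfrac{\ln(8/\delta)}{2n}},\\
d_{\gF_A}(\gD_T, \widehat\gD_T^n) &\leq 2\Rad_{\widehat\gD_T^n}(\gL_{\gG, \gF_A})+3c\sqrt{\tfrac{\ln(8/\delta)}{2n}}.
\end{align}
A union bound then guarantees both inequalities hold simultaneously with probability at least $1-\delta$.

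Finally, I would plug these two bounds back into the triangle-inequality decomposition, combining the two Rademacher terms and the two $\sqrt{\ln(8/\delta)/(2n)}$ terms, to obtain exactly the claimed inequality. There is no real obstacle here: the argument is essentially a bookkeeping exercise that reuses Proposition~\ref{prop:pseudometric} and Lemma~\ref{lemma:emp-finite-dist}. The only subtlety worth flagging is making sure to apply Lemma~\ref{lemma:emp-finite-dist} at confidence $\delta/2$ (so that the union bound recovers $1-\delta$), which explains why the logarithmic factor $\ln(4/\delta)$ in the earlier lemma upgrades to $\ln(8/\delta)$ here.
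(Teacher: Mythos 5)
Your proposal is correct and matches the paper's proof essentially step for step: the same triangle-inequality decomposition through $\widehat\gD_S^n$ and $\widehat\gD_T^n$, two applications of Lemma~\ref{lemma:emp-finite-dist} at confidence $\delta/2$ (which is exactly how the $\ln(4/\delta)$ factor becomes $\ln(8/\delta)$), and a union bound. Nothing further is needed.
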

\begin{proof}
By Proposition~\ref{prop:pseudometric}, we apply the triangle inequality to derive
\begin{align}
    d_{\gF_A}(\gD_S, \gD_T)&\leq d_{\gF_A}(\gD_S, \widehat\gD^n_T)+d_{\gF_A}(\widehat\gD^n_T, \gD_T)\\
    &\leq d_{\gF_A}( \widehat\gD^n_S, \widehat\gD^n_T)+d_{\gF_A}(\widehat\gD^n_T, \gD_T)+d_{\gF_A}(\widehat\gD^n_S, \gD_S).
\end{align}
By Lemma~\ref{lemma:emp-finite-dist}, we can apply the union bound argument (e.g., see the proof of Lemma~\ref{lemma:emp-finite-dist}) to bound $d_{\gF_A}(\widehat\gD^n_T, \gD_T)$ and $d_{\gF_A}(\widehat\gD^n_S, \gD_S)$. That being said, $\forall \delta'>0$ with probability $\geq 1-2\delta'$ we have
\begin{align}
    d_{\gF_A}(\widehat\gD^n_S, \gD_S)\leq 2\Rad_{\widehat\gD_S^n}(\gL_{\gG, \gF_A})+3c\sqrt{\frac{\ln(4/\delta')}{2n}}\\
    d_{\gF_A}(\widehat\gD^n_T, \gD_T)\leq 2\Rad_{\widehat\gD_T^n}(\gL_{\gG, \gF_A})+3c\sqrt{\frac{\ln(4/\delta')}{2n}}.
\end{align}
Therefore,
\begin{align}
    d_{\gF_A}(\widehat\gD^n_T, \gD_T)+d_{\gF_A}(\widehat\gD^n_S, \gD_S)\leq 2(\Rad_{\widehat\gD_S^n}(\gL_{\gG, \gF_A})+\Rad_{\widehat\gD_T^n}(\gL_{\gG, \gF_A}))+6c\sqrt{\frac{\ln(4/\delta')}{2n}}.
\end{align}
Denoting $\delta=2\delta'$ gives the lemma.
\end{proof}

\begin{theorem}[Theorem~\ref{thm:emp-bound} Restated]
Given $\forall \gD_S, \gD_T\in \gP_{\gX\times \gY}$, for $\forall \delta>0$ with probability $\geq 1-\delta$ we have
\begin{align}
    \tau(A; \widehat\gD^n_S, \gD_T)\leq d_{\gF_A}(\widehat\gD^n_S, \widehat\gD^n_T) +2\Rad_{\widehat\gD_T^n}(\gL_{\gG, \gF_A})+ 4\Rad_{\widehat\gD_S^n}(\gL_{\gG, \gF_A})+9c\sqrt{\frac{\ln(8/\delta)}{2n}}.
\end{align}
\end{theorem}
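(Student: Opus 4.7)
The plan is to reduce the empirical relative transferability loss to empirical pseudometric quantities via Theorem~\ref{thm-bound}, and then turn all population-vs-empirical pseudometric terms into Rademacher complexities using Lemma~\ref{lemma:emp-finite-dist}, combined by a careful union bound.

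First, view the empirical source distribution $\widehat\gD^n_S$ as an element of $\gP_{\gX\times \gY}$ in its own right and apply Theorem~\ref{thm-bound} to the pair $(\widehat\gD^n_S, \gD_T)$, which immediately yields $\tau(A; \widehat\gD^n_S, \gD_T) \leq d_{\gF_A}(\widehat\gD^n_S, \gD_T)$. Next, invoke the triangle inequality from Proposition~\ref{prop:pseudometric} twice, first routing through the population source distribution and then bouncing back through the empirical distributions:
\begin{align*}
d_{\gF_A}(\widehat\gD^n_S, \gD_T)
&\leq d_{\gF_A}(\widehat\gD^n_S, \gD_S) + d_{\gF_A}(\gD_S, \gD_T) \\
&\leq 2\, d_{\gF_A}(\widehat\gD^n_S, \gD_S) + d_{\gF_A}(\widehat\gD^n_S, \widehat\gD^n_T) + d_{\gF_A}(\widehat\gD^n_T, \gD_T),
\end{align*}
where the second step bounds $d_{\gF_A}(\gD_S, \gD_T)$ via the path $\gD_S \to \widehat\gD^n_S \to \widehat\gD^n_T \to \gD_T$ and uses symmetry. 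This doubled source term is exactly what will produce the coefficient $4$ in front of $\Rad_{\widehat\gD_S^n}$ in the final bound.

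Finally, control the two population-to-empirical pseudometrics with Lemma~\ref{lemma:emp-finite-dist}. Applying that lemma to source and target each with confidence parameter $\delta/2$ gives, with probability at least $1-\delta/2$ each,
\begin{align*}
d_{\gF_A}(\widehat\gD^n_S, \gD_S) &\leq 2\Rad_{\widehat\gD_S^n}(\gL_{\gG, \gF_A}) + 3c\sqrt{\tfrac{\ln(8/\delta)}{2n}}, \\
d_{\gF_A}(\widehat\gD^n_T, \gD_T) &\leq 2\Rad_{\widehat\gD_T^n}(\gL_{\gG, \gF_A}) + 3c\sqrt{\tfrac{\ln(8/\delta)}{2n}}.
\end{align*}
A union bound makes both hold simultaneously with probability at least $1-\delta$; substituting into the triangle-inequality decomposition collects the coefficients as $4\Rad_{\widehat\gD_S^n} + 2\Rad_{\widehat\gD_T^n} + (2\cdot 3 + 3)c\sqrt{\ln(8/\delta)/(2n)}$, matching the stated inequality. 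The only real subtlety is choosing the triangle inequality path carefully: routing through $\gD_S$ rather than taking the shorter path $\widehat\gD^n_S \to \widehat\gD^n_T \to \gD_T$ is what makes the bound explicit in terms of \emph{both} source and target Rademacher complexities while keeping the confidence factor inside the square root at $\ln(8/\delta)$ rather than $\ln(16/\delta)$ — this is a bookkeeping choice rather than a deep obstacle.
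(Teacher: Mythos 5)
Your proposal is correct and follows essentially the same route as the paper: the paper likewise reduces $\tau(A;\widehat\gD^n_S,\gD_T)$ to $d_{\gF_A}(\gD_S,\gD_T)+d_{\gF_A}(\widehat\gD^n_S,\gD_S)$ (it re-derives the Theorem~\ref{thm-bound} step inline rather than citing it for the empirical distribution, but the triangle-inequality path and the resulting terms $2\,d_{\gF_A}(\widehat\gD^n_S,\gD_S)+d_{\gF_A}(\widehat\gD^n_S,\widehat\gD^n_T)+d_{\gF_A}(\widehat\gD^n_T,\gD_T)$ are identical), and then applies Lemma~\ref{lemma:emp-finite-dist} with a union bound at level $\delta/2$ to get the $\ln(8/\delta)$ factor and the coefficients $4$, $2$, and $9$. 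Your observation that the doubled source leg is a bookkeeping choice to match the stated bound (a shorter path would give something tighter) is accurate and consistent with the paper's own derivation.
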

\begin{proof}
For $\forall \delta>0$, from the proof of Lemma~\ref{lemma:emp-dist} we can see that with probability $\geq 1-\delta$:
\begin{align}
    d_{\gF_A}(\widehat\gD^n_S, \gD_S)\leq 2\Rad_{\widehat\gD_S^n}(\gL_{\gG, \gF_A})+3c\sqrt{\frac{\ln(8/\delta)}{2n}}\label{eq:thm-emp-bound-1}\\
    d_{\gF_A}(\widehat\gD^n_T, \gD_T)\leq 2\Rad_{\widehat\gD_T^n}(\gL_{\gG, \gF_A})+3c\sqrt{\frac{\ln(8/\delta)}{2n}},
\end{align}
and Lemma~\ref{lemma:emp-dist} holds. Therefore
\begin{align}
    \tau(A; \widehat\gD^n_S, \gD_T)&=\inf_{g\in \gG} \ell_{\gD_T}(g\circ f_A^{\widehat\gD^n_S})- \ell_{\widehat\gD^n_S}(g_A^{\widehat\gD^n_S} \circ f_A^{\widehat\gD^n_S})\\
    &\leq \inf_{g\in \gG} \ell_{\gD_T}(g\circ f_A^{\widehat\gD^n_S})- \inf_{g\in \gG}\ell_{\widehat\gD^n_S}(g \circ f_A^{\widehat\gD^n_S})\\
    &=\inf_{g\in \gG} \ell_{\gD_T}(g\circ f_A^{\widehat\gD^n_S})- \inf_{g\in \gG}\ell_{\gD_S}(g \circ f_A^{\widehat\gD^n_S})+\inf_{g\in \gG}\ell_{\gD_S}(g \circ f_A^{\widehat\gD^n_S})-\inf_{g\in \gG}\ell_{\widehat\gD^n_S}(g \circ f_A^{\widehat\gD^n_S})\\
    &\leq d_{\gF_A}(\gD_S, \gD_T)+d_{\gF_A}(\widehat\gD^n_S, \gD_S)\\
    &\leq d_{\gF_A}(\gD_S, \gD_T)+ 2\Rad_{\widehat\gD_S^n}(\gL_{\gG, \gF_A})+3c\sqrt{\frac{\ln(8/\delta)}{2n}}\\
    &\leq d_{\gF_A}(\widehat\gD^n_S, \widehat\gD^n_T) +2\Rad_{\widehat\gD_T^n}(\gL_{\gG, \gF_A})+ 4\Rad_{\widehat\gD_S^n}(\gL_{\gG, \gF_A})+9c\sqrt{\frac{\ln(8/\delta)}{2n}},
\end{align}
where the first inequality is by definition of infimum, the second inequality is by the Definition~\ref{def:pseudometric}, the third inequality is by \eqref{eq:thm-emp-bound-1} and the last inequality is by Lemma~\ref{lemma:emp-dist}.
\end{proof}

\section{Data augmentation (DA) as Regularization}\label{sec:da}

In this section, we discuss data augmentation  (DA) as a concrete example of regularization for training  feature extractor $f_A^{\gD_S}$, and explore its impact on the function class $\gF_A$ discussed in {Section~\ref{sec:analysis}}.

Empirical research has shown evidence of the regularization effect of DA~\citep{hernandez2018data, hernandez2018further}. However, there is a lack of theoretical analysis, and thus we aim to construct a theoretical framework to understand under what sufficient conditions  DA can be viewed as regularization on the feature extractor function class $\gF_A$. We categorize DA into \underline{\textit{feature-level DA}} and \underline{\textit{data-level DA}}, and for each category, we analyze different DA algorithms to characterize the sufficient conditions under which DA regularizes the function class $\gF_A$. Combined with analysis in {Theorem~\ref{thm:emp-bound}}, we also provide concrete sufficient conditions to tighten the upper bound of  relative transferability $\tau(A;\widehat{\gD}^n,\gD_T)$.

\textbf{General Settings.} For the following discussion  we apply a general DA setting of affine transformation~\citep{perez2017effectiveness}, taking the form of $ x_\star= W_\star^\top x + b_\star$, where $(x,x_\star)$ is a pair of the original  and  augmented samples, $(W_\star,b_\star)$ are parameters representing  specific DA policies. 
We set $g:\mathbb{R}^d\rightarrow \mathbb{R}$ as the linear layer corresponding to the weight matrix $W_g$, which will be composed with the feature extractor $f:\mathbb{R}^m\rightarrow \mathbb{R}^d$. We use squared loss for $\ell:\mathbb{R}\times \mathbb{R} \rightarrow \mathbb{R}$, and let $\ell_{\widehat{\gD}^n, A}(g\circ f)$  be the objective function given by  training algorithm $A$ from {Theorem~\ref{thm:emp-bound}}.

\subsection{Feature-level DA ($A^{FL}$)}
Feature-level DA~\citep{wong2016understanding, devries2017dataset} requires the transformation to be performed in the learned feature space, which gives us an augmented feature $W_\star f(x)+b_\star$. 
We use \underline{Loss-Averaging} algorithm where we take an average of the loss
over augmented features for training. Denote the training algorithm based on feature-level DA as $A^{FL}$, the objective function is as below.
\begin{align}
\label{eq:FLLA}
  \ell_{\widehat{\gD}^n, A^{FL}}(g\circ f)  = \frac{1}{n}\sum_{i=1}^{n}\E_{W_\star, b_\star}\ell\Big(g\circ \big(W_\star f(x_i)+ b_\star\big),y_i\Big).
\end{align}

\begin{theorem}\label{apdx:FLDA}
Apply feature-level DA with affine transformation parameters $(W_\star,b_\star)$ s.t. 1) $\E_{W_\star}[W_\star] = \mathbb{I}_m$; 2) $W_\star\not\equiv \mathbb{I}_m$ (i.e., $W_\star$ is not an identity matrix); 3) $\E_{b_\star}[b_\star] = \vec{0}_m$; 4) $W_\star$ and $b_\star$ are independent. Set $\ell:\mathbb{R}\times \mathbb{R} \rightarrow \mathbb{R}$ as squared loss; Define $\Delta_{W_\star} := W_\star -\mathbb{I}_m$, then we have
\begin{align}
    \ell_{\widehat{\gD}^n, A^{FL}}(g\circ f) = \ell_{\widehat{\gD}^n, A}(g\circ f) + \Omega_{A^{FL}},
\end{align}
where $\Omega_{A^{FL}} = \frac{1}{n}\sum_{i=1}^{n} \E_{W_\star}\Big[ \big\|f(x_i)^\top\Delta_{W_\star}^{}W_g\big\|_2^2\Big] +  \E_{b^{}_\star}\Big[ \big\|b_\star^\top W_g\big\|_2^2\Big]$.
\end{theorem}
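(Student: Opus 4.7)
The plan is to expand the augmented squared loss by substituting $W_\star = \mathbb{I} + \Delta_{W_\star}$ and then pushing the expectation inside using the independence and zero-mean assumptions. Concretely, because $g$ is linear with weight $W_g$, we may write
\begin{align*}
g\circ(W_\star f(x_i)+b_\star) \;=\; g\circ f(x_i) \;+\; f(x_i)^\top \Delta_{W_\star} W_g \;+\; b_\star^\top W_g.
\end{align*}
Denoting the ``clean residual'' $r_i := g\circ f(x_i) - y_i$ and the ``augmentation perturbation'' $s_i := f(x_i)^\top \Delta_{W_\star} W_g + b_\star^\top W_g$, the squared loss on the augmented feature is $(r_i + s_i)^2 = r_i^2 + 2 r_i s_i + s_i^2$, so summing and averaging gives
\begin{align*}
\ell_{\widehat{\gD}^n, A^{FL}}(g\circ f) \;=\; \ell_{\widehat{\gD}^n, A}(g\circ f) \;+\; \frac{2}{n}\sum_{i=1}^n r_i \, \E[s_i] \;+\; \frac{1}{n}\sum_{i=1}^n \E[s_i^2].
\end{align*}

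Next I would kill the cross term: by linearity and the assumptions $\E[\Delta_{W_\star}] = \E[W_\star] - \mathbb{I} = 0$ and $\E[b_\star] = 0$, we get $\E[s_i] = 0$, hence the middle sum vanishes identically. For the quadratic term, expand
\begin{align*}
\E[s_i^2] \;=\; \E\!\left[\bigl(f(x_i)^\top \Delta_{W_\star} W_g\bigr)^2\right] + 2\,\E\!\left[\bigl(f(x_i)^\top \Delta_{W_\star} W_g\bigr)\bigl(b_\star^\top W_g\bigr)\right] + \E\!\left[\bigl(b_\star^\top W_g\bigr)^2\right].
\end{align*}
The middle cross term factors as $f(x_i)^\top \E[\Delta_{W_\star}] W_g \cdot W_g^\top \E[b_\star]$ by the independence assumption (condition 4), and both factors vanish by conditions 1 and 3. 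Rewriting the remaining squared scalars as squared $\ell_2$-norms (which is immediate since the output of $g$ is one-dimensional, so each scalar equals its own Frobenius norm) reproduces exactly the expression
\begin{align*}
\Omega_{A^{FL}} \;=\; \frac{1}{n}\sum_{i=1}^{n} \E_{W_\star}\!\left[\bigl\|f(x_i)^\top \Delta_{W_\star} W_g\bigr\|_2^2\right] + \E_{b_\star}\!\left[\bigl\|b_\star^\top W_g\bigr\|_2^2\right],
\end{align*}
yielding the claimed identity.

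The proof is mostly careful bookkeeping rather than a deep argument — there is no real obstacle since the assumptions are tailor-made to make each potentially troublesome term vanish (zero-mean for the cross terms in $s_i$, independence for the cross term between $\Delta_{W_\star}$ and $b_\star$). Condition 2 ($W_\star \not\equiv \mathbb{I}$) is not used to derive the identity itself but rather guarantees that $\Omega_{A^{FL}}$ is strictly positive (on generic features), which is what justifies calling the feature-level DA a genuine, nontrivial regularizer on $\gF_A$; I would note this explicitly at the end so the connection to Theorem~\ref{thm:emp-bound} and the interpretation in Section~\ref{sec:DA} is transparent.
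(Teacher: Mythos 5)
Your proposal is correct and follows essentially the same route as the paper: the paper phrases the expansion as an exact second-order Taylor expansion of the squared loss around $f(x_i)$ (with $\ell''=2$ and all higher terms vanishing), which for a linear $g$ is precisely your direct algebraic expansion $(r_i+s_i)^2=r_i^2+2r_is_i+s_i^2$, after which both arguments kill the linear term via $\E[\Delta_{W_\star}]=0$, $\E[b_\star]=0$ and the remaining cross term via independence. Your closing remark that condition 2) is not needed for the identity itself but only to make $\Omega_{A^{FL}}$ a nontrivial regularizer is accurate and a worthwhile clarification the paper leaves implicit.
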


\begin{proof}
$\ell''(W_g^\top\circ f(x_i)) = 2$ for $\ell$ as squared loss. Apply Taylor expansion to $\ell\Big(g\circ \big(W_\star f(x_i)+ b_\star\big),y_i\Big)$ around $f(x_i)$, all higher-than-two order terms will vanish:
\begin{align}
    & \E_{W_\star, b_\star}\bigg[\ell\Big(g\circ \big(W_\star f(x_i)+ b_\star\big),y_i\Big)\bigg]\\
    = &\E_{W_\star, b_\star}\bigg[\ell\Big(W_g^\top\circ f(x_i),y_i\Big)+ W_g^\top(\Delta_{W_\star}f(x_i)+b_\star)\ell'(W_g^\top\circ f(x_i),y_i) +\\
    &\frac{1}{2} W_g^\top(\Delta_{W_\star}f(x_i)+b_\star)(\Delta_{W_\star}f(x_i)+b_\star)^\top\ell''(W_g^\top\circ f(x_i),y_i) W_g\bigg]\\
    = &\ell\Big(W_g^\top\circ f(x_i),y_i\Big) + \E_{W_\star,b_\star} \Big[W_g^\top(\Delta_{W_\star}f(x_i)+b_\star)(\Delta_{W_\star}f(x_i)+b_\star)^\top W_g\Big]\\
    =& \ell\Big(W_g^\top\circ f(x_i),y_i\Big) + \E_{W_\star}\Big[ \big\|f(x_i)^\top\Delta_{W_\star}^{}W_g\big\|_2^2\Big] +  \E_{b^{}_\star}\Big[ \big\|b_\star^\top W_g\big\|_2^2\Big];
\end{align}
The second equality holds since $\E_{\Delta_{W_\star}} =\E_{W_\star}[W_\star -\mathbb{I}_m] = 0_{(m,m)}$ and $\E_{b_\star} = \vec{0}_{m}$; The third equality holds since $W_i$ and $b_i$ are independent.
Therefore, $\ell_{\widehat{\gD}^n, A^{FL}}(g\circ f)
    := \frac{1}{n}\sum_{i=1}^{n}\bigg[\E_{W_\star, b_\star}\ell\Big(g\circ \big(W_\star f(x_i)+ b_\star\big),y_i\Big)\bigg]
    = \ell_{\widehat{\gD}^n, A}(g\circ f) + \Omega_{A^{FL}}$.
\end{proof}

\textbf{\textit{Interpretation}}. $\Omega_{A^{FL}}$ is composed of two segments: 1) $l_2$ regularization to an $f$-dependent scalar averaged over $W_\star$ and $x_i$; 2) $l_2$ regularization to an $f$-independent scalar averaged over $b_\star$. Due to the regularization effect on $f$ from the first segment of $\Omega_{A^{FL}}$, we can reasonably expect the function class $\gF_{A'}$ enabled by $A^{FL}$ to be a subset of that enabled by a general training algorithm $A$.

\textbf{\textit{Sufficient conditions}}. Combined with {Theorem~\ref{thm:emp-bound}}, the \textit{sufficient conditions} to tighten the upper bound $d_{\gF_A}(\widehat{\gD}_S^n,\widehat{\gD}_T^n)$ for the relative transferability $\tau(A;\widehat{\gD}_S,\gD_T)$ are: feature-level DA ($A^{FL}$) with parameters satisfying: 
1) $\E_{W_\star}[W_\star] = \mathbb{I}_m$; 
2) $W_\star\not\equiv \mathbb{I}_m$; 
3) $\E_{b_\star}[b_\star] = \vec{0}_m$; 
4) $W_\star$ and $b_\star$ are independent. 

\subsection{Data-level DA ($A^{DL}$)}
Data-level DA requires that the transformation to be performed in the input space to generate augmented samples $W_\star x + b_\star$. We cover analysis on two ubiquitous algorithms for data-level DA training: \underline{\textit{Prediction-Averaging ($A_P^{DL}$)}}~\citep{lyle2019analysis} and \underline{\textit{Loss-Averaging ($A_L^{DL}$)}}~\citep{wong2016understanding}. The difference between $A_P^{DL}$ and $A_L^{DL}$ lies in whether we take the average of the prediction or the losses: 
    \begin{align}
    \label{eq:PA_def}
    \ell_{\widehat{\gD}^n,A_P^{DL}}(g\circ f) &:= \frac{1}{n}\sum_{i=1}^n \ell\big(\E_{W_\star,b_\star}\big[g \circ f(W_\star x_i +b_\star)\big],y_i\big); \\
    \label{eq:LA_def}
        \ell_{\widehat{\gD}^n, A_L^{DL}}(g\circ f)
        &:= \frac{1}{n}\sum_{i=1}^n\E_{W_\star,b_\star}\big[\ell\big(g\circ {f}(W_\star x_i +b_\star),y_i\big)\big].
    \end{align}

\begin{theorem}\label{apdx:DLDA} Define the data-level deviation caused by data-level DA $A^{DL}\in\{A_P^{DL},A_L^{DL}\}$ with parameters $(W_\star,b_\star)$ from the original data sample as $\Delta_{x_i}:=(W_\star-\mathbb{I}_m)x_i +b_\star$, and  define $\Delta_{x}^3:=\E_{x_i,W_\star,b_\star}\Big[\big\|\Delta_{x_i}\big\|_2^3\Big]$.
Suppose  we apply data-level DA s.t. 
1) $\E_{W_\star}[W_\star] = \mathbb{I}_m$;
2) $\E_{b_\star}[b_\star] = \vec{0}_m$; 
3) $\mathcal{O}(\Delta_{x}^j)\approx 0,\forall j\in \mathbb{N}_+, j\geq 3$; 
4) $W_\star$ and $b_\star$ are independent.
Define $\Delta_{W_\star} := W_\star -\mathbb{I}_m \in \mathbb{R}^{m\times m}$, $\Delta_{\widehat{y}_i} := W_g^\top f(x_i) - y_i \in\mathbb{R}$. Let $W_g^{(k)}\in \mathbb{R}$ be the $k^{th}$ dimension component  of $W_g$ and then define $w_{i,(k)}:=W_g^{(k)}\Delta_{\widehat{y}_i}\in \mathbb{R}$;
Denote the Hessian matrix of the $k^{th}$ dimension component in $f(x_i)$ as $\mathcal{H}_f^{(k),i}$; 
Let $\nabla f$ be the Jacobian matrix of $f$, then we have
\begin{align}
\ell_{\widehat{\gD}^n, A_{}^{DL}}(g\circ f) = \ell_{\widehat{\gD}^n, A}(g\circ f) + \Omega_{A_{}^{DL}} + \mathcal{O}(\Delta_x^3),
\end{align}
where 
$
\Omega_{A_P^{DL}} 
    = \frac{1}{n}\sum_{i=1}^n\sum_{k=1}^d
    w_{i,(k)}\Big[tr\Big(\E_{W_\star}[\Delta_{x_i}\Delta_{x_i}^\top ] \mathcal{H}_f^{(k),i}\Big) \Big]$, where $\Delta_{x_i} = (W_\star-\mathbb{I})^\top x_i+b_\star
$; $
\Omega_{A_L^{DL}} = \Omega_{A_P^{DL}} + \frac{1}{n}\sum_{i=1}^n \Big[\E_{W_\star} \big\| x_i^\top\Delta_{W_\star}\nabla f(x_i) W_g\big\|_2^2 + \E_{b_\star} \big\| b_\star^\top\nabla f(x_i) W_g\big\|_2^2 \Big]. 
$
\end{theorem}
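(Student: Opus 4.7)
The plan is a second-order Taylor expansion of the augmented objective around the clean input $x_i$. Writing the input perturbation as $\Delta_{x_i} = (W_\star - \mathbb{I}_m)^\top x_i + b_\star$, assumptions~1, 2, and 4 give $\E[\Delta_{x_i}] = \vec{0}_m$ and decorrelate the $W_\star$ and $b_\star$ contributions, while assumption~3 licenses dropping every $\mathcal{O}(\|\Delta_{x_i}\|^j)$ term with $j \geq 3$. The two cases $A_P^{DL}$ and $A_L^{DL}$ differ in what is being Taylor-expanded (the prediction versus the loss), and I would handle them in that order since $A_P^{DL}$ is the simpler building block.

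For $A_P^{DL}$, I would use linearity of $g$ to pull the expectation inside and write $\E[g\circ f(x_i+\Delta_{x_i})] = W_g^\top \E[f(x_i+\Delta_{x_i})]$. Expanding each coordinate $f_k$ to second order and taking expectation, the first-order term vanishes and the quadratic term contributes $\tfrac{1}{2}\mathrm{tr}(\E[\Delta_{x_i}\Delta_{x_i}^\top]\mathcal{H}_f^{(k),i})$ per coordinate. Substituting into the squared loss $(\cdot - y_i)^2$ around $\widehat{y}_i := W_g^\top f(x_i)$ and noting that the square of the $\mathcal{O}(\Delta_x^2)$ correction is itself $\mathcal{O}(\Delta_x^4)$, the surviving contribution is exactly the cross term, which after collecting into $w_{i,(k)} = W_g^{(k)}\Delta_{\widehat{y}_i}$ yields the claimed $\Omega_{A_P^{DL}}$.

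For $A_L^{DL}$, I would expand $\ell(g\circ f(x_i+\Delta_{x_i}), y_i)$ directly in $x$. The input-space Hessian splits, via the chain/product rule on the squared loss, into a \emph{Gauss--Newton} part $2(\nabla f(x_i)W_g)(\nabla f(x_i)W_g)^\top$ and a \emph{curvature-of-$f$} part $2\Delta_{\widehat{y}_i}\sum_k W_g^{(k)}\mathcal{H}_f^{(k),i}$. Averaging $\E[\Delta_{x_i}\Delta_{x_i}^\top]$ against the second part reproduces $\Omega_{A_P^{DL}}$ term-for-term, while the first part contributes $\E\bigl[(\Delta_{x_i}^\top \nabla f(x_i) W_g)^2\bigr]$. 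Substituting $\Delta_{x_i} = \Delta_{W_\star}^\top x_i + b_\star$ and invoking independence of $W_\star, b_\star$ together with $\E[b_\star] = \vec{0}_m$, the cross term vanishes and only the two squared-norm terms $\E_{W_\star}\|x_i^\top \Delta_{W_\star} \nabla f(x_i) W_g\|_2^2$ and $\E_{b_\star}\|b_\star^\top \nabla f(x_i) W_g\|_2^2$ remain, giving exactly the extra piece in $\Omega_{A_L^{DL}}$.

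The main obstacle I anticipate is bookkeeping rather than mathematical depth: keeping the Jacobian and Hessian conventions consistent across the two expansions, separating cleanly the Gauss--Newton and remainder pieces in the loss-averaging Hessian so that the remainder aligns exactly with $\Omega_{A_P^{DL}}$, and verifying that every leftover cross term vanishes either by $\E[\Delta_{x_i}] = \vec{0}_m$ (for linear terms and for $W_\star$--$b_\star$ cross terms via independence) or by absorbing into $\mathcal{O}(\Delta_x^3)$. Once these are tracked carefully, assumption~3 closes the argument and delivers both decompositions simultaneously.
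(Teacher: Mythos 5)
Your proposal is correct and follows essentially the same route as the paper: a second-order Taylor expansion in which the first-order terms vanish by $\E[\Delta_{x_i}]=\vec{0}_m$, third-and-higher orders are absorbed into $\mathcal{O}(\Delta_x^3)$, prediction-averaging retains only the Hessian trace term, and loss-averaging additionally picks up the Gauss--Newton terms whose $W_\star$--$b_\star$ cross term dies by independence. The only cosmetic difference is that for $A_L^{DL}$ you expand the composite loss directly in input space and split its Hessian, whereas the paper expands $f$ in $x$ and then $\ell$ in feature space; the two organizations produce identical terms.
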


\begin{proof}
Let $\Delta_{f_i,A_P^{DL}} := \E_{W_\star,b_\star}f(W_\star^\top x_i+b_\star) - f(x_i)$, then
\begin{align}
    \Delta_{f_i,A_P^{DL}} :=& \E_{W_\star,b_\star}f(W_\star^\top x_i+b_\star) - f(x_i)\\
    =& \E_{W_\star,b_\star}\Big[\nabla f(x_i)^\top (\Delta_{x_i})\Big] + 
    \frac{1}{2}\E_{W_\star,b_\star} \Big[\Delta_{x_i}^\top\mathcal{H}_f^{(k),i}(x_i)\Delta_{x_i}\Big]_d + \mathcal{O}(\E_{W_\star,b_\star}\|\Delta_{x_i}\|_2^3)\\
    =&\frac{1}{2}\E_{W_\star,b_\star} \Big[\Delta_{x_i}^\top\mathcal{H}_f^{(k),i}(x_i)\Delta_{x_i}\Big]_d + \mathcal{O}(\E_{W_\star,b_\star}\|\Delta_{x_i}\|_2^3),\label{eq:dfiP}
\end{align}
where $[\cdot^{(k)}]_d$ denotes a d-dimensional vector and $k$ denotes the $k^{th}$ dimension element. Since $\ell$ is squared loss, the third-and-higher derivative are 0, therefore, the third-and-higher order terms in Taylor expansion to $\ell\big(\E_{W_\star,b_\star}\big[g \circ f(W_\star x_i +b_\star)\big],y_i\big)$ around $f(x_i)$ will vanish:
\begin{align}
  &\ell\big(\E_{W_\star,b_\star}\big[g \circ f(W_\star x_i +b_\star)\big],y_i\big) \\
  =& \ell\big(g \circ f( x_i),y_i\big) + W_g^\top(\Delta_{f_i,A_{P}^{DL}})\ell'\big(g \circ f( x_i),y_i\big) + \\
  &\frac{1}{2}W_g^\top(\Delta_{f_i,A_{P}^{DL}})(\Delta_{f_i,A_{P}^{DL}})^\top W_g\ell''\big(g \circ f( x_i),y_i\big)\\
  = & \ell\big(g \circ f( x_i),y_i\big) + W_g^\top(\Delta_{f_i,A_{P}^{DL}})\ell'\big(g \circ f( x_i),y_i\big) + \mathcal{O}(\E_{W_\star,b_\star}\|\Delta_{x_i}\|_2^4)\label{eq:DLP0}
\end{align}

Substitute Eq. (\ref{eq:dfiP}) into the first-order term in Eq. (\ref{eq:DLP0}), we have 
\begin{align}
    W_g^\top(\Delta_{f_i,A_{P}^{DL}})\ell'\big(g \circ f( x_i),y_i\big) 
    = &W_g^\top\E_{W_\star,b_\star} \Big[\Delta_{x_i}^\top\mathcal{H}_f^{(k),i}\Delta_{x_i}\Big]_d\Delta_{\widehat{y}_i} +\mathcal{O}(\E_{W_\star,b_\star}\|\Delta_{x_i}\|_2^3)\\
    =& \Delta_{\widehat{y}_i}\sum_{k=1}^{d}W_g^{(k)}\E_{W_\star,b_\star} \Big[\Delta_{x_i}^\top\mathcal{H}_f^{(k),i}\Delta_{x_i}\Big] +\mathcal{O}(\E_{W_\star,b_\star}\|\Delta_{x_i}\|_2^3)\\
    =&\sum_{k=1}^{d}w_{i,(k)} tr\big(\E_{W_\star,b_\star}[\Delta_{x_i}\Delta_{x_i}^\top]\mathcal{H}_f^{(k),i}\big) +\mathcal{O}(\E_{W_\star,b_\star}\|\Delta_{x_i}\|_2^3). \label{eq:DLP01}
\end{align}

Substitute Eq. (\ref{eq:DLP01}) into  Eq. (\ref{eq:DLP0}), we have
\begin{align}
  \ell\big(\E_{W_\star,b_\star}\big[g \circ f(W_\star x_i +b_\star)\big],y_i\big) 
   =& \ell\big(g \circ f( x_i),y_i\big) + \sum_{k=1}^{d}w_{i,(k)} tr\big(\E_{W_\star,b_\star}[\Delta_{x_i}\Delta_{x_i}^\top]\mathcal{H}_f^{(k),i}\big) +\\ &\mathcal{O}(\E_{W_\star,b_\star}\|\Delta_{x_i}\|_2^3).\label{eq:DLP03}
\end{align}

Substitute Eq. (\ref{eq:DLP03}) into Eq. (\ref{eq:PA_def}) which is the definition of $\ell_{\widehat{\gD}^n, A_{P}^{DL}}(g\circ f)$, and recall that $\Delta_{x}^3:=\E_{x_i,W_\star,b_\star}\Big[\big\|\Delta_{x_i}\big\|_2^3\Big]$, we have
\begin{align}
\ell_{\widehat{\gD}^n, A_{P}^{DL}}(g\circ f) := \frac{1}{n}\sum_{i=1}^n \ell\big(\E_{W_\star,b_\star}\big[g \circ f(W_\star x_i +b_\star)\big],y_i\big) =   \ell_{\widehat{\gD}^n, A}(g\circ f) + \Omega_{A_{P}^{DL}} + \mathcal{O}(\Delta_{x}^3).\label{eq:DLPfin}
\end{align}

Let $\Delta_{f_i,A_L^{DL}} := f(W_\star^\top x_i+b_\star) - f(x_i)
    = \nabla f(x_i)^\top (\Delta_{W_\star}x_i+b_\star) + \mathcal{O}(\|\Delta_{x_i}\|_2^2)$.

Applying Taylor expansion to $\E_{W_\star,b_\star}\big[\ell\big(g \circ f(W_\star x_i +b_\star),y_i\big)\big]$ around $f(x_i)$ will give us
\begin{align}
  \E_{W_\star,b_\star}\big[\ell\big(g \circ f(W_\star x_i +b_\star),y_i\big)\big]
  =& \ell\big(g \circ f( x_i),y_i\big) + W_g^\top\E_{W_\star,b_\star}\big[\Delta_{f_i,A_{L}^{DL}}\big]\ell'\big(g \circ f( x_i),y_i\big) + \\
  &\frac{1}{2}W_g^\top\E_{W_\star,b_\star}\big[(\Delta_{f_i,A_{L}^{DL}})(\Delta_{f_i,A_{L}^{DL}})^\top\big] W_g\ell''\big(g \circ f( x_i),y_i\big)\label{eq:DLL0}
\end{align}

Since $\E_{W_\star,b_\star}\Delta_{f_i,A_{L}^{DL}} = \Delta_{f_i,A_{P}^{DL}}$, the first-order term in Eq. (\ref{eq:DLL0}) is exactly Eq. (\ref{eq:DLP01}):
\begin{align}
    &W_g^\top\E_{W_\star,b_\star}\big[\Delta_{f_i,A_{L}^{DL}}\big]\ell'\big(g \circ f( x_i),y_i\big) \\
    =& W_g^\top\Delta_{f_i,A_{P}^{DL}}\ell'\big(g \circ f( x_i),y_i\big)\\
    =&\sum_{k=1}^{d}w_{i,(k)} tr\big(\E_{W_\star,b_\star}[\Delta_{x_i}\Delta_{x_i}^\top]\mathcal{H}_f^{(k),i}\big)+\mathcal{O}(\E_{W_\star,b_\star}\|\Delta_{x_i}\|_2^3)\label{eq:DLL1}
\end{align}
The second-order term in Eq. (\ref{eq:DLL0}) is
\begin{align}
&\frac{1}{2}W_g^\top\E_{W_\star,b_\star}\big[(\Delta_{f_i,A_{L}^{DL}})(\Delta_{f_i,A_{L}^{DL}})^\top\big] W_g\ell''\big(g \circ f( x_i),y_i\big)\\
    = &W_g^\top \E_{W_\star,b_\star}\big[(\nabla f(x_i)^\top (\Delta_{W_\star}x_i+b_\star) (\Delta_{W_\star}x_i+b_\star)^\top \nabla f(x_i)\big]W_g +\mathcal{O}(\E_{W_\star,b_\star}\|\Delta_{x_i}\|_2^4)\\
    =&\E_{W_\star} \big\| x_i^\top\Delta_{W_\star}^\top\nabla f(x_i) W_g\big\|_2^2 + \E_{b_\star} \big\| b_\star^\top\nabla f(x_i) W_g\big\|_2^2 +\mathcal{O}(\E_{W_\star,b_\star}\|\Delta_{x_i}\|_2^4)\label{eq:DLL2}
\end{align}

Substituting Eq. (\ref{eq:DLL1}) and  Eq. (\ref{eq:DLL2}) into  Eq. (\ref{eq:DLL0}), we have
\begin{align}
  &\E_{W_\star,b_\star}\big[\ell\big(g \circ f(W_\star x_i +b_\star),y_i\big)\big] \\
   =& \ell\big(g \circ f( x_i),y_i\big) + 
   \sum_{k=1}^{d}w_{i,(k)} tr\big(\E_{W_\star,b_\star}[\Delta_{x_i}\Delta_{x_i}^\top]\mathcal{H}_f^{(k),i}\big) + \\
   &\E_{W_\star} \big\| x_i^\top\Delta_{W_\star}\nabla f(x_i) W_g\big\|_2^2 + \E_{b_\star} \big\| b_\star^\top\nabla f(x_i) W_g\big\|_2^2+\mathcal{O}(\E_{W_\star,b_\star}\|\Delta_{x_i}\|_2^4)\label{eq:DLL3}
\end{align}

Substitute Eq. (\ref{eq:DLL3}) into the definition of $\ell_{\widehat{\gD}^n, A_L^{DL}}(g\circ f)$, then
\begin{align}
\ell_{\widehat{\gD}^n, A_L^{DL}}(g\circ f)
:= \frac{1}{n}\sum_{i=1}^n\E_{W_\star,b_\star}\big[\ell\big(g\circ {f}(W_\star x_i +b_\star),y_i\big)\big] =\ell_{\widehat{\gD}^n, A}(g\circ f) + \Omega_{A_{L}^{DL}} +\mathcal{O}(\Delta_{x}^3)  \label{eq:DLLfin}
\end{align}

The proof is complete by  Eq. (\ref{eq:DLPfin}) and Eq. (\ref{eq:DLLfin}).
\end{proof}

\textbf{\textit{Interpretation}}. $\Omega_{A_P^{DL}}$ and $\Omega_{A_L^{DL}}$ turn out to be:
1) $\Omega_{A_P^{DL}}$ is a weighted trace expectation dependent on the Hessian matrix of $f$; 
2) $\Omega_{A_L^{DL}}$ is equivalent to $\Omega_{A_P^{DL}}$ together with the summation of two norm expectations dependent on $\nabla f$. Therefore, the data-level DA algorithms $A_P^{DL}$ and $A_L^{DL}$ are expected to regularize $f$ so that the $f$ function class $\gF_{A}^{DL}$ enabled by $A^{DL}\in\{A_P^{DL},A_L^{DL}\}$ would be reasonably expected as a subset of $\gF_A$ enabled by general training algorithm $A$.

\textbf{\textit{Sufficient conditions}}. 
Combined with {Theorem~\ref{thm:emp-bound}}, the \textit{sufficient conditions} indicated here to tighten the upper bound
$d_{\gF_A}(\widehat{\gD}_S^n,\widehat{\gD}_T^n)$
of the relative transferability
$\tau(A;\widehat{\gD}_S,\gD_T)$ 
are: data-level DA ($A^{DL}$) with DA parameters satisfying that 
1) $\E_{W_\star}[W_\star] = \mathbb{I}_m$;
2) $\E_{b_\star}[b_\star] = \vec{0}_m$; 
3) $\mathcal{O}(\Delta_{x}^j)\approx 0,\forall j\in \mathbb{N}_+, j\geq 3$;
4) $W_\star$ and $b_\star$ are independent.

\textbf{\textit{Empirical verification}}. We further provide empirical verification  in Section~\ref{sec:exp} for the sufficient conditions above, investigating the concrete cases of DA methods:
\textbf{\textit{1) Gaussian noise}} \underline{satisfies} the sufficient conditions, then we empirically show that Gaussian noise improves domain transferability while robustness decreases a bit (Figure~\ref{fig:result-da}); \textbf{\textit{2) Rotation}}, which rotates input image with a predefined fixed angle with predefined fixed probability, \underline{violates} $\E_{W_\star}[W_\star] = \mathbb{I}_m$, and we empirically show that rotation barely affect domain transferability (Figure~\ref{fig:app-result-rotate} in Appendix~\ref{sec:app-badda}); 
\textbf{\textit{Translation}}, which moves the input image for a predefined  distance along a pre-selected axis with  fixed probability, \underline{violates} $\E_{b_\star}[b_\star] = \vec{0}_m$ (Figure~\ref{fig:app-result-rotate} in Appendix~\ref{sec:app-badda}).

\begin{corollary}
\label{apdx:DLDA2}
If the neural network in Theorem \ref{apdx:DLDA} is activated by Relu or Max-pooling, then Theorem \ref{apdx:DLDA} becomes
\begin{align}
\ell_{\widehat{\gD}^n, A_{}^{DL}}(g\circ f) = \ell_{\widehat{\gD}^n, A}(g\circ f) + \Omega_{A_{}^{DL}} + \mathcal{O}(\Delta_x^3),
\end{align}
where 
$\Omega_{A_P^{DL}}  =0$; $ \Omega_{A_L^{DL}} = \frac{1}{n}\sum_{i=1}^n \Big[\E_{W_\star} \big\| x_i^\top\Delta_{W_\star}\nabla f(x_i) W_g\big\|_2^2 + \E_{b_\star} \big\| b_\star^\top\nabla f(x_i) W_g\big\|_2^2 \Big].$
\end{corollary}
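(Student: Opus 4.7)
The plan is to obtain the corollary as a direct specialization of Theorem~\ref{apdx:DLDA} by exploiting the piecewise-linear structure of ReLU/max-pooling networks. The key observation is that, for such a network, the feature map $f:\mathbb{R}^m\to\mathbb{R}^d$ is continuous and piecewise affine: its input domain decomposes into finitely many (or countably many, in the case of a deep net) polyhedral regions on each of which every coordinate $f^{(k)}$ coincides with an affine function of $x$. Consequently, on the interior of each region the second-order partial derivatives of $f^{(k)}$ vanish identically, so the Hessian $\mathcal{H}_f^{(k),i}$ is the zero matrix at every point $x_i$ lying in the interior of some region (which is a full-measure set). At the measure-zero boundaries the Hessian can be taken to be zero in the distributional-weak sense relevant for the second-order Taylor expansion used to derive Theorem~\ref{apdx:DLDA}.

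From there, the proof is essentially a substitution. First I would recall the statement of Theorem~\ref{apdx:DLDA}, write out $\Omega_{A_P^{DL}}$ and $\Omega_{A_L^{DL}}$ verbatim, and then plug in $\mathcal{H}_f^{(k),i}=\mathbf{0}$ for every index $i$ and coordinate $k$. This immediately kills every summand in
\[
\Omega_{A_P^{DL}}=\frac{1}{n}\sum_{i=1}^n\sum_{k=1}^d w_{i,(k)}\,\mathrm{tr}\!\left(\mathbb{E}_{W_\star}[\Delta_{x_i}\Delta_{x_i}^\top]\,\mathcal{H}_f^{(k),i}\right),
\]
yielding $\Omega_{A_P^{DL}}=0$. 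Substituting this into the expression $\Omega_{A_L^{DL}}=\Omega_{A_P^{DL}}+(\text{Jacobian terms})$ from Theorem~\ref{apdx:DLDA} leaves precisely the claimed formula
\[
\Omega_{A_L^{DL}}=\frac{1}{n}\sum_{i=1}^n\Big[\mathbb{E}_{W_\star}\,\|x_i^\top\Delta_{W_\star}\nabla f(x_i)W_g\|_2^2+\mathbb{E}_{b_\star}\,\|b_\star^\top\nabla f(x_i)W_g\|_2^2\Big].
\]
The remainder term $\mathcal{O}(\Delta_x^3)$ is inherited unchanged from Theorem~\ref{apdx:DLDA}.

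The main (and really only) subtlety is justifying $\mathcal{H}_f^{(k),i}=\mathbf{0}$ rigorously, because ReLU and max-pooling networks are only piecewise $C^2$: their first derivative $\nabla f(x_i)$ exists a.e.\ but jumps across the activation-pattern boundaries, so strictly speaking the Hessian does not exist at those boundaries. I would handle this by noting that (i) the set of non-differentiable points has Lebesgue measure zero, so for data distributions absolutely continuous with respect to Lebesgue measure the Hessian vanishes almost surely, and (ii) the second-order Taylor expansion underlying Theorem~\ref{apdx:DLDA} should be interpreted in the sense of the weak/distributional Hessian, which for a piecewise-affine map is supported only on the codimension-one activation-pattern boundaries and integrates to zero against the absolutely continuous measure $\mathbb{E}_{W_\star}[\Delta_{x_i}\Delta_{x_i}^\top]$-weighting. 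Thus the trace term vanishes and the corollary follows.
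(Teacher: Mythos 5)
Your proof is correct and follows essentially the same route as the paper's: both arguments reduce to observing that a ReLU/max-pooling network is piecewise affine so the Hessian $\mathcal{H}_f^{(k),i}$ vanishes, which kills the trace term in $\Omega_{A_P^{DL}}$ and leaves only the Jacobian-dependent terms in $\Omega_{A_L^{DL}}$. The paper simply cites an external result for $\nabla^2(g\circ f(x))=0$ and deduces $\nabla^2 f(x)=0$ from $W_g^\top\nabla^2 f(x)=0$, whereas you additionally (and commendably) address the measure-zero non-differentiability at activation-pattern boundaries, a point the paper glosses over.
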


\begin{proof}
Denote an $L-$layer NN $g\circ f(x) := W_g^\top \cdot z^{[L-1]}$, where $z^{[l]} := \sigma^{[l-1]}(W_{[l-1]}^\top \cdot z^{[l-1]})$, $l = 1, 2,3,..., L-1$; Define that $\sigma^{[0]}(W_{[0]}^\top \cdot z^{[0]}) := x$, then $\nabla^2\big(g\circ f(x)\big) = 0$ (B.2 of \cite{zhang2020does}). Since $\nabla^2\big(g\circ f(x)\big) = W_g^\top \cdot \nabla^2 f(x) $, we have $\nabla^2 f(x) = 0$. 

Combine this with Theorem \ref{apdx:DLDA}, we have 
\begin{align}
    \Omega_{A_P^{DL}}
    &= \frac{1}{n}\sum_{i=1}^n\sum_{k=1}^d
    w_{i,(k)}\Big[tr\Big(\E_{W_\star}[\Delta_{x_i}\Delta_{x_i}^\top ] \mathcal{H}_f^{(k),i}\Big) \Big] = 0;\\
    \Omega_{A_L^{DL}} &= \Omega_{A_P^{DL}} + \frac{1}{n}\sum_{i=1}^n \Big[\E_{W_\star} \big\| x_i^\top\Delta_{W_\star}\nabla f(x_i) W_g\big\|_2^2 + \E_{b_\star} \big\| b_\star^\top\nabla f(x_i) W_g\big\|_2^2 \Big] \\
    &= \frac{1}{n}\sum_{i=1}^n \Big[\E_{W_\star} \big\| x_i^\top\Delta_{W_\star}\nabla f(x_i) W_g\big\|_2^2 + \E_{b_\star} \big\| b_\star^\top\nabla f(x_i) W_g\big\|_2^2 \Big].
\end{align}
\end{proof}
\textbf{\textit{Remark.}} Corollary \ref{apdx:DLDA2} analyzes special cases (Relu/ Max-pooling activation) of Theorem \ref{apdx:DLDA}, giving notably different regularization effect: in these cases the $A_P^{DL}$ (average the prediction) fails as a regularizer, therefore, doesn't fulfill our sufficient conditions for improving domain transferability (Theorem \ref{thm:emp-bound}); $A_L^{DL}$ (average the loss) only reserves the regularization on $\nabla f$-dependent norms, but no longer regularizes $\mathcal{H}_f(x)$. Since $A_L^{DL}$ still induces regularization, the induced sufficient conditions analyzed after Theorem \ref{apdx:DLDA} for promoting domain transferability won't be affected.

\textbf{Comparison and connections with related work.} 
On the empirical end, recent work uncovers that heuristic DA can replace explicit regularization mechanisms (dropout, weight decay, etc.)~\cite{hernandez2018data, hernandez2018further, zhang2021understanding}. 
On the theoretical end, there has been a line of work on understanding the DA-induced regularization, including the branches of 1) regularization from specific methods such as mixup~\cite{carratino2020mixup,zhang2020does, greenewald2021k}, random noise~\cite{bishop1995training}, adversarial examples~\cite{szegedy2013intriguing}, etc.; 2) DA-induced regularization on the variance at the feature or output level~\cite{leen1995data,van2018learning,dao2019kernel, chen2020group}; 3) regularization on Hessian-based complexity~\cite{lejeune2019implicit}. 
Our analysis in this section contributes uniquely in that: 1) we consider a general DA family of linear transformation~\cite{perez2017effectiveness}, which can be extended to most of the previously analyzed specific DA mechanisms; 2) Besides data-level DA, we also investigate feature-level DA~\cite{devries2017dataset} which shows advantages in empirical performance but lacks theoretical support. 3) Rather than demonstrating the regularization effect abstractly at the feature or output level, our results indicates data-dependent $l_2$ regularization on the DA transformation parameter and Jacobian of the model under concrete applicable sufficient conditions. 

\section{Adversarial Training as a Regularizer}\label{sec:adv-reg}
In this section, we show, under certain conditions, why adversarial training may improve domain generalization by viewing adversarial training as a function class regularizer.

We first provide some notation. Let 
\[
\mathcal{F} = \{f_\theta(\cdot) = W^L\phi^{L-1}(W^{L-1}\phi^{L-2}(\dots) +b^{L-1}) + b^L  \}
\]
where $\phi^j$ are activations, $W^j,b^j$ are weight matrix and bias vector, $\theta$ is the collection of parameters (i.e. $\theta = (W^1,b^1,\dots,W^L,b^L)$. For the rest of the article, assume that $\phi^j$ are just ReLUs.

Now fix $x \in \gX$. Define the preactivation as
\begin{align}
 \widetilde{x}^1 &:= W^1x + b^1 \\
    \widetilde{x}^j &:= W^j\phi^{j-1}(\widetilde{x}^{j-1}) + b^j \, , \, j \geq 2
\end{align}

Define the activation pattern $\phi_x := (\phi^1_x,\dots,\phi_x^{L-1}) \in \{0,1\}^m$ such that
for each $j \in [L-1]$
\begin{equation}
   \phi^j_x =  \1(\widetilde{x}^j > 0)
\end{equation}
where $\1$ is applied elementwise.

Now, given an activation pattern $\phi \in \{0,1\}^m$, we define the preimage $X(\phi) := \{ x \in \sR^{d} : \phi_x = \phi\}$

\begin{theorem}\label{th:equiv}
(In the proof of theorem 1 in \citep{roth2020reg})

Let $\epsilon > 0$ s.t. $B^p_\epsilon(x) \subset X(\phi_x)$ where $B^p_\epsilon(x)$ denotes the $l_p$ ball centered at $x$ with radius $\epsilon$. Let $p = \{1,2,\infty\}$ and $q$ be the Holder conjugate of $p$ (i.e. $\frac{1}{p} + \frac{1}{q} = 1$). Then
\begin{align}
    \E_{(x,y)\sim P}[\l(y,f(x)) + \lambda \max_{x^* \in B^p_\epsilon(x)} \norm{f(x) - f(x^*)}_q] = \E_{(x,y)\sim P}[\l(y,f(x)) + \lambda \cdot \epsilon  \max_{v^* \, : \, \norm{v^*}_p \leq 1} \norm{J_{f(x)}v}_q]
\end{align}
\end{theorem}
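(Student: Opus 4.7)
The key observation is that on the preimage $X(\phi_x)$ of a fixed ReLU activation pattern, the network $f$ reduces to an affine map. Concretely, for any $x' \in X(\phi_x)$ one can write $\phi^j(\widetilde{x}'^{j}) = D_j \widetilde{x}'^{j}$, where $D_j = \mathrm{diag}(\phi_x^j)$ is a diagonal $0/1$ matrix that does not depend on $x'$ (only on $\phi_x$). Substituting into the recursion for $\widetilde{x}'^{j}$ and unrolling, one obtains an affine representation $f(x') = A_{\phi_x} x' + c_{\phi_x}$ for some matrix $A_{\phi_x}$ and vector $c_{\phi_x}$ depending only on the pattern $\phi_x$ and the network weights. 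In particular, the Jacobian $J_{f(\cdot)}$ is constant on $X(\phi_x)$ and equals $A_{\phi_x} = J_{f(x)}$.

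Given this, the first step is to use the hypothesis $B^p_\epsilon(x) \subset X(\phi_x)$ to conclude that for every $x^* \in B^p_\epsilon(x)$,
\begin{equation}
f(x^*) - f(x) = J_{f(x)}(x^* - x).
\end{equation}
The second step is to change variables $v := x^* - x$, so that as $x^*$ ranges over $B^p_\epsilon(x)$, the vector $v$ ranges over $\{v : \|v\|_p \leq \epsilon\}$. Thus
\begin{equation}
\max_{x^* \in B^p_\epsilon(x)} \|f(x) - f(x^*)\|_q = \max_{\|v\|_p \leq \epsilon} \|J_{f(x)} v\|_q.
\end{equation}
The third step uses positive homogeneity of the $\ell_p$ and $\ell_q$ norms: writing $v = \epsilon v^*$ with $\|v^*\|_p \leq 1$, we get
\begin{equation}
\max_{\|v\|_p \leq \epsilon} \|J_{f(x)} v\|_q = \epsilon \max_{\|v^*\|_p \leq 1} \|J_{f(x)} v^*\|_q.
\end{equation}

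The fourth step is simply to note that the pointwise identity just derived holds for almost every $(x,y)$ in the support of $P$ under the stated assumption on $\epsilon$, so taking expectations on both sides and adding the common term $\ell(y, f(x))$ yields the claim.

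\textbf{Main obstacle.} The only delicate point is the affineness claim: it relies on $B^p_\epsilon(x)$ being contained in a single activation cell $X(\phi_x)$, so that every perturbed point $x^*$ in the ball traverses the network through the same pattern of active/inactive ReLUs. Once this is established, the rest of the argument is a one-line change of variables plus positive homogeneity. There is no measurability or integrability subtlety beyond the standing assumption that $\ell(y,f(x))$ is integrable under $P$.
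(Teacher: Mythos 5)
Your proposal is correct. Note that the paper itself does not prove this statement --- it imports it from the cited reference (Roth et al.) and only follows it with an informal discussion of how large $\epsilon$ can be; so there is no in-paper proof to compare against. Your argument is the standard one underlying the cited result: on the activation cell $X(\phi_x)$ the ReLU network is affine with constant Jacobian $J_{f(x)}$, so the containment $B^p_\epsilon(x)\subset X(\phi_x)$ gives $f(x^*)-f(x)=J_{f(x)}(x^*-x)$ exactly, and the change of variables $v=x^*-x$ together with positive homogeneity of the norms yields the pointwise identity, which then passes through the expectation. The only caveat worth flagging is one you already note: the hypothesis on $\epsilon$ must hold for ($P$-almost) every $x$ in the support, not just a single point, for the expectation step to be licensed; with that reading your proof is complete.
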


\textbf{Interpretation:}
This theorem provides an equivalence between the objective functions for adversarial training (left term) and jacobian regularization (right term). 
We give some intuition on the size of $\epsilon$.
Let us first consider a shallow 2 layer network $f(x) = W^2\phi(W^1x+b^1)$. Suppose $W^2 \in \sR^{m_2 \times m_1}$ and $W^1 \in \sR^{m_1 \times d}$. Given a matrix M, let $M_j$ denote the $j$th row of $M$. We study the activation pattern $\phi_x$ which equals
\begin{align}
    \phi_x = (\phi^1_x) = \begin{pmatrix}
    \1 \{ W^1_1x + b^1_1 \} \\
    \vdots \\
    \1 \{ W^1_{m_1}x + b^1_{m_1} \}
    \end{pmatrix}
\end{align}
We wish to compute the largest radius $\epsilon$ such that the activation pattern $\phi_x$ is constant within $B^2_\epsilon(x)$. This is simply the distance from $x$ to the closest hyperplane of the form $H_{W^1_j, b^1_j} = \{x \in \sR^d \, : \, W^1_{j}x + b^1_{j} = 0\}$ where $j = 0,\dots,m_1$ (i.e. $ \epsilon = \min_j \textrm{dist}(x, H_{W^1_j, b^1_j})$). In particular, if $W^1 = I_{d \times d}$ and $b^1 = \bold{0}$, $\epsilon = \min_{j \in d} \lvert x_j \rvert$.  

Furthermore, we note that $\epsilon$ is nondecreasing as a function of the number of layers. However, it has been observed empirically in \citep{roth2020reg} that approximate correspondence holds in a much larger ball.

\begin{definition} (source and target function class)
Let $\gG^S, \gG^T$ be fine tuning function classes for source and target domains, respectively. We define the class of source models as 
\[
 \gH^S = \gG^S \circ \gF = \{g^S \circ f_\theta \, : \, g^S \in \gG^S, f_\theta \in \gF \}  
\]
and the class of target models as
\[
 \gH^T = \gG^T \circ \gF =  \{g^T \circ f_\theta \, : \, g^T \in \gG^T, f_\theta \in \gF \}  
\]
\end{definition}

\begin{definition} (empirical training objective with jacobian regularization)
Let $\lambda, \epsilon > 0$. Take any hypothesis $h_\theta = g^S \circ f_\theta \in \gH^S$. Let $\hat{R}(h_\theta) = \frac{1}{n}\sum_{i=1}^n\ell(h_{\theta}(x_i),y_i)$ denote the empirical risk where $l(\hat{y},y) = \norm{\hat{y}-y}^2$. We define the empirical training objective with jacobian regularization as
\begin{equation}
    \aobj(h_\theta) = \hat{R}(h_\theta)  + \frac{\lambda \cdot \epsilon}{n} \sum_{i=1}^n \norm{J_{h_\theta}(x_i)}_2 
\end{equation}
\end{definition}

\begin{theorem}\label{th:adv}
Fix regularization strength $\lambda > 0$.
Define 
\[
\mathcal{F}^{A}_\lambda = \{f^A_\theta \in \mathcal{F} : \exists g^S \in \gG^S \, \mathrm{s.t.} \,~ \aobj(g^S \circ f^A_\theta) \leq \aobj(\bold{0}) \}
\]
where $\bm{0}$ denotes the zero function (i.e. the class of feature extractors that outperform the zero function). Suppose $(x,y) \in \gX \times \gY$ is bounded such that  $\max{(\norm{x}_{\infty} ,\norm{y}_{2} )} \leq R$. Fix $\delta > 0$.
Suppose we additionally restrict our fine tuning class models to linear models where 
\[
G^S = \{ W \, : \, W \in \R^{d \times n}, n \geq 1, \min_j \norm{W_j}_{2} \geq  \delta \}
\]
(where $W_j$ is the jth column of $W$) and 
\[
G^T = \{ W \, : \, W \in \R^{d \times n}, n \geq 1 \}
\]
(Here we are abusing notation to let $g^S \in G^S$ to denote the last linear layer as well as the fine tuning function).

Then for $0 \leq \lambda_1 < \lambda_2$
 \begin{equation}
     \gF^A_{\lambda_2} \subsetneq \gF^A_{\lambda_1}  \subsetneq \gF
 \end{equation}
 (where $\subsetneq$ denotes proper subset). In particular, if $H^{A,T}_\lambda = \gG^T \circ \gF^A_\lambda$, we have
  \begin{equation}
     \gH^{A,T}_{\lambda_2} \subsetneq \gH^{A,T}_{\lambda_1} \subsetneq \gH^T
 \end{equation}
\end{theorem}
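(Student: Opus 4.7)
}
The plan splits into four pieces: (i) the soft monotone inclusion $\gF^A_{\lambda_2}\subseteq\gF^A_{\lambda_1}$; (ii) non-triviality $\gF^A_{\lambda_1}\subsetneq\gF$; (iii) the main strict gap $\gF^A_{\lambda_2}\subsetneq\gF^A_{\lambda_1}$; and (iv) transfer of (i)--(iii) to the $\gH$-classes. Step (i) is immediate: since $J_{\bm{0}}\equiv\bm{0}$, $\aobj(\bm{0})=\hat{R}(\bm{0})$ does not depend on $\lambda$, so for $f\in\gF^A_{\lambda_2}$ with witness $g^S$, nonnegativity of the Jacobian-regularizer gives $\aobj[\lambda_1](g^S\circ f)\leq \aobj[\lambda_2](g^S\circ f)\leq \hat{R}(\bm{0})$, and the same $g^S$ witnesses $f\in\gF^A_{\lambda_1}$.

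For (ii) and (iii), I would study the one-parameter scale family $f_c(x)=cx$, which lies in $\gF$ via the identity $\mathrm{ReLU}(z)-\mathrm{ReLU}(-z)=z$, together with the value function
\[
M(c,\lambda):=\inf_{W\in\gG^S}\aobj[\lambda](W\circ f_c).
\]
Under the change of variables $V=cW$, the constraint $\min_j\|W_j\|_2\geq\delta$ becomes $\min_j\|V_j\|_2\geq c\delta$ and the objective becomes $\tfrac{1}{n}\sum_i\|Vx_i-y_i\|^2+\lambda\epsilon\|V\|_2$, independent of $c$. Hence $M(\cdot,\lambda)$ is continuous, nondecreasing in $c$, bounded above by $\hat{R}(\bm{0})$ at $c=0$ (plug $V=\bm{0}$), and (for $\lambda>0$) diverges as $c\to\infty$ since $\|V\|_2\geq\max_j\|V_j\|_2\geq c\delta$ forces the regularizer to blow up. Thus $c^*_\lambda:=\sup\{c\geq 0:M(c,\lambda)\leq \hat{R}(\bm{0})\}$ is finite, and taking any $C>c^*_{\lambda_1}$ exhibits $f_C\in\gF\setminus\gF^A_{\lambda_1}$, proving (ii).

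For (iii), the column-norm constraint yields a uniform separation: for every $W\in\gG^S$,
\[
\aobj[\lambda_2](W\circ f_c)-\aobj[\lambda_1](W\circ f_c)
=(\lambda_2-\lambda_1)\tfrac{\epsilon}{n}\sum_i\|cW\|_2
\geq(\lambda_2-\lambda_1)\epsilon c\delta,
\]
so $M(c,\lambda_2)\geq M(c,\lambda_1)+(\lambda_2-\lambda_1)\epsilon c\delta$. Evaluating at $c=c^*_{\lambda_1}$ forces $M(c^*_{\lambda_1},\lambda_2)>\hat{R}(\bm{0})$ provided $c^*_{\lambda_1}>0$ (this holds in any non-degenerate regime where some admissible last layer strictly beats the zero predictor), and continuity of $M(\cdot,\lambda_2)$ then forces $c^*_{\lambda_2}<c^*_{\lambda_1}$ strictly. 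Any $c$ in the resulting nonempty interval $(c^*_{\lambda_2},c^*_{\lambda_1})$ delivers $f_c\in\gF^A_{\lambda_1}\setminus\gF^A_{\lambda_2}$.

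The lift (iv) to $\gH^{A,T}$ is where the only genuine difficulty lies: given the feature extractors from (ii) and (iii), one would set $h^\star:=g^T\circ f^\star$ for any $g^T\in\gG^T$ to place $h^\star$ inside $\gH^{A,T}_{\lambda_1}$, but must then rule out alternative factorizations $h^\star=g^{T\prime}\circ f'$ with $f'\in\gF^A_{\lambda_2}$. The hard case is an invertible linear re-parameterization $f'=A f^\star$ that tries to absorb scale into $g^{T\prime}$; since $\gG^T$ carries no column-norm constraint, such an $A$ is not obviously blocked. My plan is to pick $f^\star$ so that the Jacobian profile $\{\|J_{f^\star}(x_i)\|_2\}_i$ forces any left-multiplication $A f^\star$ to either inflate the Jacobian penalty back past $\hat R(\bm{0})$ or violate admissibility of the accompanying $g^{T\prime}$ in a composition computing $h^\star$; this is the delicate step and I expect it to be the main obstacle, while steps (i)--(iii) reduce entirely to monotonicity together with the column-norm lower bound $\|W\|_2\geq\delta$.
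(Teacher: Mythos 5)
Your steps (i)--(iii) are sound in substance, and step (iii) takes a genuinely different route from the paper. The paper proves $\gF^A_{\lambda_2}\subsetneq\gF^A_{\lambda_1}$ by hand-building a rank-one affine feature extractor (first layer $\alpha e_1e_1^\top$ with bias $ce_1$, identity-like later layers, so the composite predictor is $(\alpha x_{i,1}+c)g_1$), solving the constrained last-layer problem explicitly via Lagrange multipliers (the minimizer is $\delta\bar y/\norm{\bar y}$ once $c\geq\norm{\bar y}_2/\delta$), and then tuning $(c,\alpha)$ along a ray with the intermediate value theorem so that $\inf_{g}\aobj[\lambda_1](g\circ f_\theta)$ equals $\aobj(\bm 0)$ exactly while the optimal Jacobian term $\alpha\delta$ stays strictly positive; the $\lambda_2$-objective at that $f_\theta$ then strictly exceeds $\aobj(\bm 0)$. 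Your value-function argument on the scale family reaches the same conclusion by a cleaner mechanism --- the uniform separation $M(c,\lambda_2)\geq M(c,\lambda_1)+(\lambda_2-\lambda_1)\epsilon c\delta$ coming from $\norm{cW}_2\geq c\max_j\norm{W_j}_2\geq c\delta$, plus monotonicity and continuity of $M(\cdot,\lambda)$ --- and it yields a whole interval of witnesses rather than one tuned point. Both arguments rest on the same two facts: the column-norm floor $\delta$ on $\gG^S$ forces a nonvanishing Jacobian penalty, and the objective varies continuously along a one-parameter deformation crossing the level $\aobj(\bm 0)$. Two caveats on your version. First, your non-degeneracy condition $c^*_{\lambda_1}>0$ asks that an admissible \emph{homogeneous linear} predictor strictly beat zero, which is not implied by the paper's (also unstated in the theorem) hypothesis $\bar y\neq 0$, since $f_c(x)=cx$ has no bias; replacing the family by $f_c(x)=c(x+R\1)$, which stays in $\gF$ because $x+R\1\geq 0$ neutralizes the ReLUs, restores a constant component and makes $\bar y\neq 0$ suffice, as in the paper's Lemma. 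Second, your proof of $\gF^A_{\lambda_1}\subsetneq\gF$ blows up the regularizer and so needs $\lambda_1>0$, whereas the theorem permits $\lambda_1=0$ (where $c^*_0$ can be $+\infty$); the paper's own computation has the same defect, as it silently drops the factor $\lambda_1\epsilon$ in front of the Jacobian term.

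On step (iv), you have correctly identified a gap, but it is a gap in the theorem's statement rather than something your proof is expected to repair with the Jacobian-profile construction you sketch. The paper's proof stops after the $\gF$-inclusions and never addresses alternative factorizations $h=g^{T\prime}\circ f'$ with $f'\in\gF^A_{\lambda_2}$; it implicitly reads $\gG^T\circ\gF^A_\lambda$ as the class indexed by pairs $(g^T,f)$, under which the $\gH$-chain is an immediate corollary of (i)--(iii). Under the set-of-functions reading the claim is in fact delicate: since $\gG^T$ is unconstrained, $g^T=\bm 0$ makes every composite the zero function, so the classes overlap heavily and strictness must be witnessed by a composite that is only factorable through ``large'' feature extractors --- your plan of controlling the Jacobian profile of a single $f^\star$ does not yet do this, because you would have to rule out factorizations through \emph{every} $f'\in\gF^A_{\lambda_2}$, not just linear reparameterizations of $f^\star$. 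State which reading you adopt; under the paper's implicit one, your steps (i)--(iii) already complete the proof.
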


\textbf{Interpretation:}

At the high level, this theorem captures the idea that minimizing the empirical risk with jacobian regularization puts a constraint on the set of feature extractors. In particular, $\gF^A_{\lambda_1}$ represents the potential class of feature extractors we select after training with jacobian regularization. Therefore, the class of fine tuned models $H^{A,T}_{\lambda_1}$ with feature extractors trained with jacobian regularization for the target domain is smaller than the class of fine tuned models $\gH$ with feature extractors trained without any regularization. Furthermore, we show that the space of feature extractors shrinks as we increase the regularization stength $\lambda$. Since we showed in section \ref{sec:up} that smaller function classes have smaller $d_{\gF_A}$, this theorem shows that jacobian regularization reduces  $d_{\gF_A}$. To connect back to adversarial training, if $\epsilon$ satisfies the hypothesis in theorem $\ref{th:equiv}$, we have that 
\begin{align}
    \E_{(x,y)\sim P}[\l(y,f(x)) + \lambda \max_{x^* \in B^p_\epsilon(x)} \norm{f(x) - f(x^*)}_q] = \E_{(x,y)\sim P}[\l(y,f(x)) + \lambda \cdot \epsilon  \max_{v^* \, : \, \norm{v^*}_p \leq 1} \norm{J_{f(x)}v}_q]
\end{align}
Therefore, minimizing the training objective with jacobian regularization is equivalent to minimizing the adversarial training objective. Using this connection, this theorem essentially shows that, given sufficient number of samples, adversarial training reduces the class of feature extractors which in turn reduces $d_{\gF_A}$.

Finally, we comment on the assumption that $\norm{g^S} > \delta$. Since $\delta > 0$ is arbitrary, we can make it as small as we like and thus we are essentially excluding the $\bold{0}$ last layer which is hardly a constraint on the function class. This assumption is necessary as we are considering regularization on the whole model $g \circ f$ as opposed to regularization on just the feature extractor. Thus, this assumption prevents the scenario where only the last linear layer is regularized.

\begin{proof}
We first show that if $0 \leq \lambda_1 < \lambda_2$, we have that
 \begin{equation}
     \gF^A_{\lambda_2} \subsetneq \gF^A_{\lambda_1}  \subsetneq \gF
 \end{equation}
 
 We first prove the following lemma
 \begin{lemma}\label{lem:eq}
 Suppose the conditions of theorem $\ref{th:adv}$ are satisfied. Suppose additionally we have that $\overline{y} := \frac{1}{n} \sum_{i=1}^d y_i \neq 0$ (note this occurs with probability 1 if marginal distribution  over $\gY$ is continuous). Then for every $\lambda \geq 0$, there exists a function $f_{\theta} \in \gF^A_{\lambda}$ and a fine tuning layer $g^* \in \gG^S$ such that 
 \[
 \aobj(g^* \circ f_\theta) = \inf_{g \in G^S} \aobj(g \circ f_\theta) = \aobj(\bold{0})
 \]
 
Choose another $\lambda' \geq 0$ (can equal $\lambda$). Then there exists a ${g^*}' \in \gG^S$ be the fine tuning layer  such that $\inf_{g \in \gG^S} \aobj[\lambda'](g \circ f_\theta) =  \aobj[\lambda']({g^*}' \circ f_\theta) $ and
 \[
\frac{1}{n} \sum_{i=1}^n\norm{J_{{g^*}' \circ f_\theta}(x_i)}_2 > 0 
 \]
 \end{lemma}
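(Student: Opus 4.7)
The plan is to construct a single feature extractor $f_\theta \in \gF$ whose objective landscape is ``borderline'': the constrained minimum of $\aobj$ over $\gG^S$ lands exactly on the $\gG^S$-boundary at the value $\aobj(\bm{0})$, while the gradient of $f_\theta$ remains nonzero at every data point.

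Concretely, I let $h(x) := \alpha + \beta x_1$ for scalars $\alpha, \beta$ to be chosen. Using the identity $\phi(u) - \phi(-u) = u$, I can realize $h$ as a two-layer ReLU network, so $f_\theta := h \in \gF$ has a single output feature. Every fine-tuning $g \in \gG^S$ is then a column vector $W \in \R^{d \times 1}$ with $\|W\|_2 \geq \delta$, and a direct computation gives
\begin{align}
\aobj(W \circ f_\theta) = a \|W\|_2^2 - 2 W^{\top} \vec b + \overline{\|y\|^2} + \lambda \epsilon c \|W\|_2,
\end{align}
where $a := \tfrac{1}{n}\sum_i h(x_i)^2$, $\vec b := \tfrac{1}{n}\sum_i h(x_i) y_i \in \R^d$, and $c := |\beta|$. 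For each fixed radius $u := \|W\|_2$, minimizing over direction aligns $W$ with $\vec b$ (valid because $\vec b \to \alpha \overline{y}$ as $\beta \to 0$ and $\overline{y} \neq 0$), reducing the problem to the scalar program $\min_{u \geq \delta} a u^2 - 2\|\vec b\| u + \overline{\|y\|^2} + \lambda \epsilon c u$.

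Next I will choose $(\alpha, \beta)$ so the unconstrained optimum of this scalar problem equals $\delta/2$, which forces the constraint to bind at $u = \delta$, and simultaneously makes the value at $u = \delta$ equal $\overline{\|y\|^2} = \aobj(\bm{0})$. Both requirements collapse into the single identity $2\|\vec b\| = \delta a + \lambda \epsilon c$. At $\beta = 0$ this reduces to $|\alpha| \|\overline{y}\| = \delta \alpha^2 / 2$, whose nontrivial root is $\alpha_0 = 2\|\overline{y}\|/\delta \neq 0$; the partial derivative of the identity in $\alpha$ equals $-2\|\overline{y}\|$ at $(\alpha_0, 0)$, so the implicit function theorem yields a continuous branch $\alpha(\beta)$ for $\beta$ in a neighborhood of $0$. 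I pick any such $\beta > 0$ and set $\alpha = \alpha(\beta)$, which makes $\nabla h \equiv (\beta, 0, \ldots, 0)$ nonzero.

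The two conclusions then follow. For the first, $g^* := \delta \vec b / \|\vec b\|$ lies in $\gG^S$ (its single column has norm $\delta$) and $\aobj(g^* \circ f_\theta) = \inf_{g \in \gG^S} \aobj(g \circ f_\theta) = \aobj(\bm{0})$ by construction. For the second, for any $\lambda' \geq 0$ the same radius-direction decomposition shows that any constrained minimizer ${g^*}'$ of $\aobj[\lambda'](g \circ f_\theta)$ satisfies $\|{g^*}'\|_2 \geq \delta$, so $\tfrac{1}{n}\sum_i \|J_{{g^*}' \circ f_\theta}(x_i)\|_2 = \|{g^*}'\|_2 \cdot |\beta| \geq \delta |\beta| > 0$. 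The main obstacle is establishing that $2\|\vec b\| = \delta a + \lambda \epsilon c$ admits a solution with $\beta \neq 0$; this is precisely what the implicit function argument together with the hypothesis $\overline{y} \neq 0$ guarantees.
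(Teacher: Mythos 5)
Your proof is correct, and at its core it follows the same strategy as the paper's: both constructions use a feature extractor that is affine in $x_1$ (yours via the identity $\phi(u)-\phi(-u)=u$, the paper's via a large positive bias that keeps the ReLU active), both reduce the fine-tuning problem to a norm-constrained least-squares over a single column that aligns with a perturbation of $\overline{y}$, and both perturb away from the degenerate constant-feature case to force a nonzero Jacobian while the constraint $\|W\|_2\geq\delta$ binds. Where you genuinely diverge is the tuning step: the paper finds the constrained minimizer by Lagrange multipliers and then runs a two-stage continuity argument (first perturb $\alpha$ from $0$ to some $\alpha_0>0$ keeping the value below $\aobj(\bm{0})$, then scale $(c,\alpha)$ along a ray and invoke the intermediate value theorem to hit $\aobj(\bm{0})$ exactly), whereas you observe that the two requirements --- the vertex of the scalar quadratic sitting at $\delta/2$ so the constraint binds, and the value at $u=\delta$ equalling $\aobj(\bm{0})$ --- collapse into the single algebraic identity $2\|\vec b\|=\delta a+\lambda\epsilon c$, which you solve by an implicit-function argument around $(\alpha_0,0)$ with $\alpha_0=2\|\overline{y}\|/\delta$. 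Your version is more explicit and makes the role of $\overline{y}\neq 0$ transparent (it is exactly what gives the nonvanishing derivative $-2\|\overline{y}\|$). The only wrinkle is that $c=|\beta|$ is not differentiable at $\beta=0$, so the implicit function theorem should be applied one-sidedly on $\{\beta\geq 0\}$, or replaced by a sign-change argument in $\alpha$ for each small fixed $\beta>0$; this is cosmetic. Your argument for the second conclusion (any $\lambda'$-minimizer has column norm at least $\delta$, hence Jacobian average $\geq\delta|\beta|>0$) matches the paper's in substance.
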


 \begin{proof}
 
 Fix $\alpha \geq 0$ and $c > \alpha \cdot R$. Set biases
 \begin{align}
     b^1 = \begin{pmatrix}
     c \\
     0\\
     \vdots \\
     0
     \end{pmatrix}
\, b^j = \bold{0}
 \, , \, j \geq 2
 \end{align}
 and weights
  \begin{align}
     W^1 =\begin{pmatrix}
    \alpha & 0 & \dots & 0 \\
    0 & 0 & \dots & 0 \\
    \vdots & \vdots & \vdots & \vdots \\
    0 & \dots & \dots & 0 \\
    \end{pmatrix}
\, W^j =\begin{pmatrix}
    1 & 0 & \dots & 0 \\
    0 & 0 & \dots & 0 \\
    \vdots & \vdots & \vdots & \vdots \\
    0 & \dots & \dots & 0 \\
    \end{pmatrix}
 \, , \, j \geq 2
 \end{align}
 
 Define $x_{i,j}$ be the $j$th entry of the data point $x_i$. Define 
 \begin{align}
     \alpha_i &:=  \alpha \cdot x_{i,1}  \\
     \overline{\alpha} &:= \frac{1}{n} \sum_{i=1}^d\alpha_i \\
    \overline{y} &:= \frac{1}{n} \sum_{i=1}^d y_i
 \end{align}
 
 Now we observe that for a fixed $\lambda \geq 0$ and any $g \in \gG^S$, we have that 
 \begin{align}
      \aobj(g\circ f_{\theta}) &= \hat{R}(g \circ f_{\theta}) + \lambda \cdot \epsilon \frac{1}{n} \sum_{i=1}^n\norm{J_{g \circ f_{\theta}}(x_i)}_2 \\
       &= \frac{1}{n}\sum_{i=1}^n \norm{(\alpha x_{i,1} + c)\begin{pmatrix}
       g_{11} \\
       \vdots \\
       g_{d1}
       \end{pmatrix} - y_i}^2_2 + \lambda \cdot \epsilon \frac{1}{n} \sum_{i=1}^n\norm{g \begin{pmatrix}
    \alpha & 0 & \dots & 0 \\
    0 & 0 & \dots & 0 \\
    \vdots & \vdots & \vdots & \vdots \\
    0 & \dots & \dots & 0 \\
    \end{pmatrix}}_2 \\
     &= \frac{1}{n}\sum_{i=1}^n  \norm{( \alpha_i + c) g_1 - y_i}^2_2 + \lambda \cdot \epsilon \frac{1}{n} \sum_{i=1}^n \alpha \norm{g_1}_2 \\
      &= \frac{1}{n}\sum_{i=1}^n  \norm{( \alpha_i + c) g_1 - y_i}^2_2 + \lambda \cdot \epsilon \alpha \norm{g_1}_2 
 \end{align}
 Therefore, 
 \begin{align}
     \inf_{g \in \gG^S}   \aobj(g\circ f_{\theta})
 \end{align}
 is equivalent to solving
  \begin{align}
     \inf_{w \in \R^d \, : \, \norm{w}_2 \geq \delta }  \frac{1}{n}\sum_{i=1}^n  \norm{( \alpha_i + c) w - y_i}^2_2 + \lambda \cdot \epsilon \alpha \norm{w}_2 \label{minprob}
 \end{align}
 Utilizing lagrange multipliers, we find the minimizer is  
 \begin{align}
     w = \delta \cdot \frac{\overline{y}}{\norm{\overline{y}}} \label{minimizer}
 \end{align}
 when $c \geq \frac{\norm{\overline{y}}_2}{\delta}$. 
 
 Now consider the function 
 \[
 S(c,\alpha) =  \frac{1}{n}\sum_{i=1}^n  \norm{( \alpha \cdot x_{i,1} + c) g_1 - y_i}^2_2 + \lambda \cdot \epsilon \alpha \norm{g_1}_2 
 \]
 Note that this function is continuous with respect to the input $(c,\alpha)$. Now fix $\alpha = 0, c = \frac{\norm{\overline{y}}_2}{\delta}$. Set $w = \delta \cdot \frac{\overline{y}}{\norm{\overline{y}}}$. Then we have that 
\begin{align}
    S(\frac{\norm{\overline{y}}_2}{\delta},0) =  \frac{1}{n}\sum_{i=1}^n  \norm{\overline{y} - y_i}^2_2 < \frac{1}{n}\sum_{i=1}^n  \norm{y_i}^2_2 &= \aobj(\bold{0})
\end{align}
The inequality comes from the fact that we assumed $\overline{y} \neq 0$ and noting that $\overline{y}$ is the minimizer of the function $p(z) = \frac{1}{n}\norm{z - y_i}^2_2 $. Continuity of $S$ ensures that there exists $\alpha_0 > 0$ such that 
\begin{align}
    S(\frac{\norm{\overline{y}}_2}{\delta},\alpha_0) < \frac{1}{n}\sum_{i=1}^n  \norm{y_i}^2_2 &= \aobj(\bold{0})
\end{align}
 Now consider $U(t) = S((1+t)\frac{\norm{\overline{y}}_2}{\delta},(1+t)\alpha_0 )$ for $t \geq 0$. Note that $U$ is continuous with respect to $t$. Furthermore, we note that $t \rightarrow \infty$ implies $U(t) \rightarrow \infty$ which implies there exists some time $t = T_f$ such that $U(T_f) > \aobj(\bold{0})$. Therefore, by the intermediate value theorem, there exists a time $t = T$ such that $U(T) =  \aobj(\bold{0})$. Finally, set $c = (T+1)\frac{\norm{\overline{y}}_2}{\delta}$, $\alpha = (T+1)\alpha_0$, and $g^*$ as the matrix where $g^*_1 = \delta \cdot \frac{\overline{y}}{\norm{\overline{y}}}$ and $\bold{0}$ for the other columns. By equation $\ref{minprob}$ and equation $\ref{minimizer}$  we have
 \begin{align}
      \aobj(g^* \circ f_{\theta}) = \inf_{g \in \gG^S}   \aobj(g\circ f_{\theta}) = U(T) = \aobj(\bold{0}) 
 \end{align}
 Furthermore, if we choose another $\lambda' \geq 0$, since $c = (T+1)\frac{\norm{\overline{y}}_2}{\delta} > \frac{\norm{\overline{y}}_2}{\delta}$ by equation $\ref{minimizer}$, we have that 
  \begin{align}
      \aobj[\lambda']({g^*}' \circ f_{\theta}) = \inf_{g \in \gG^S}   \aobj[\lambda'](g\circ f_{\theta}) 
 \end{align}
 and 
   \begin{align}
    \frac{1}{n} \sum_{i=1}^n\norm{J_{{g^*}' \circ f_\theta}(x_i)}_2 = \alpha \norm{{g^*}'}_2 = \alpha \delta 
 \end{align}
 which is nonzero as $\delta > 0$ and $\alpha = (T+1)\alpha_0 > 0$.
 
 \end{proof}

\textbf{Proof of \ref{th:adv}} \\

Clearly, we have $\gF^A_{\lambda_2} \subset \gF^A_{\lambda_1}$. If we can show that $f_{\theta_1} \not\in \gF^A_{\lambda_2}$ then we have 
 $\gF^A_{\lambda_2} \subsetneq \gF^A_{\lambda_1}$. 
 
 Using lemma $\ref{lem:eq}$ we can find $f_{\theta_1} \in \gF^A_{\lambda_1}$ such that 
 \[
 \inf_{g \in G^S} \aobj[\lambda_1](g \circ f_{\theta_1}) = \aobj[\lambda_1](\bold{0})
 \]
 In addition lemma $\ref{lem:eq}$ guarantees  minimizers $g^*_1$ and $g^*_2$ such that 
 \begin{align}
     \aobj[\lambda_1](g^*_1 \circ f_{\theta_1}) &= \inf_{g \in G^S} \aobj[\lambda_1](g \circ f_{\theta_1}) \, \textrm{and} \, \frac{1}{n} \sum_{i=1}^n\norm{J_{g^*_1 \circ f_\theta}(x_i)}_2 > 0  \\
      \aobj[\lambda_2](g^*_2 \circ f_{\theta_1}) &= \inf_{g \in G^S} \aobj[\lambda_2](g \circ f_{\theta_1}) \, \textrm{and} \, \frac{1}{n} \sum_{i=1}^n\norm{J_{g^*_2 \circ f_{\theta_1}}(x_i)}_2 > 0  
 \end{align}
 Thus, we have that
 \begin{align}
     \aobj[\lambda_1](g^*_2 \circ f_{\theta_1}) &= \hat{R}(g^*_2 \circ f_{\theta_1}) + \lambda_2 \cdot \epsilon \frac{1}{n} \sum_{i=1}^n\norm{J_{g^*_2 \circ f_{\theta_1}}(x_i)}_2 \\
     &> \hat{R}(g^*_2  \circ f_{\theta_1}) + \lambda_1 \cdot \epsilon \frac{1}{n} \sum_{i=1}^n\norm{J_{g^*_2 \circ f_{\theta_1}}(x_i)}_2  &\textrm{since}~ \lambda_2 > \lambda_1\\
     &\geq \hat{R}(g^*_1  \circ f_{\theta_1}) + \lambda_1 \cdot \epsilon \frac{1}{n} \sum_{i=1}^n\norm{J_{g^*_1 \circ f_{\theta_1}}(x_i)}_2  &\textrm{def of }~ g^*_1\\
     &= \aobj[\lambda_1](\bold{0}) &\textrm{lemma $\ref{lem:eq}$}\\
 \end{align}
 
Thus $f_{\theta_1} \not\in \gF^A_{\lambda_2}$ which implies $\gF^A_{\lambda_2} \subsetneq \gF^A_{\lambda_1}$. It remains to show for $\lambda_1 \geq 0$, we have that $\gF^A_{\lambda_1}  \subsetneq \gF$.

Consider any $g \in \gG^S$. For $j \in [L]$, define $W^j$ as the weight matrix where $W^j = I_{d \times d}$ (identity matrix) for $j \in [L-1]$ and let the final weight $W^L = B\cdot I_{d \times d}$ for some constant $B > 0$. Set the bias vectors $b^j = \bold{0}$ for $j \geq 2$. Let the first bias equal $b^1 = R \cdot \1$ where $\1$ is the vector of all 1's and $R$ is the upper bound such that $\norm{x}_\infty \leq R$. Set $\theta = (W^1,b^1,\dots,W^L,b^L)$ and let $h_\theta = g \circ f_\theta$

We compute 
\begin{align}
    \aobj[\lambda_1](h _{\theta}) &= \hat{R}(h_\theta) + \lambda_1 \cdot \epsilon \frac{1}{n} \sum_{i=1}^n\norm{J_{h_{\theta}}(x_i)}_2 \\
    &= \frac{1}{n} \sum_{i=1}^n \norm{B(x_i+R\1) - y_i}^2  + \frac{1}{n} \sum_{i=1}^n\norm{J_{h_{\theta}}(x_i)}_2  \\
    &= \frac{1}{n} \sum_{i=1}^n \norm{B(x_i+R\1) - y_i}^2 +   B \norm{g}_2  \\
    &\geq \frac{1}{n} \sum_{i=1}^n \norm{B(x_i+R\1) - y_i}^2 +   B \delta  
\end{align}

We note that sending $B \rightarrow \infty$ we get $\aobj[\lambda_1](h _{\theta}) \rightarrow \infty$ which implies that there exists a $B = B'$ such that $\aobj[\lambda_1](h _{\theta}) > \aobj[\lambda_1](\bold{0})$. Setting $B = B'$ implies $f_\theta \not\in \gF^A_{\lambda_1}$.

\end{proof}

\section{Extra Experiment Results}
\subsection{Absolute Transferability vs. Model Robustness}
\label{sec:app-abs-result}
We show the absolute transferability versus robustness of different models in Figure~\ref{fig:app-result-ll},\ref{fig:app-result-jr},\ref{fig:app-result-da} and \ref{fig:app-result-resolution} respectively.

\begin{figure}[htb]
    \centering
    \includegraphics[height=1.4in]{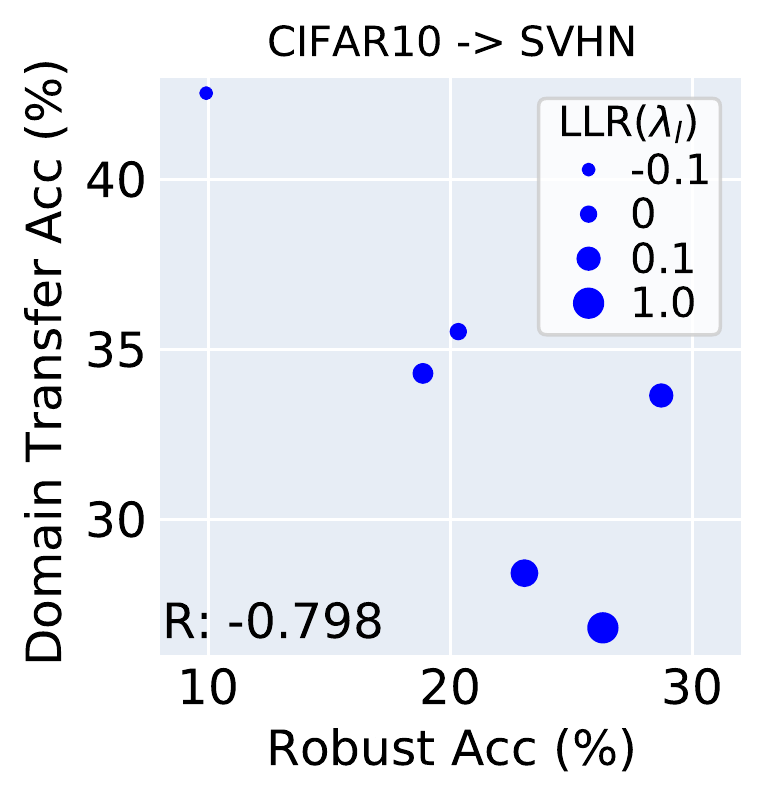}
    \includegraphics[height=1.4in]{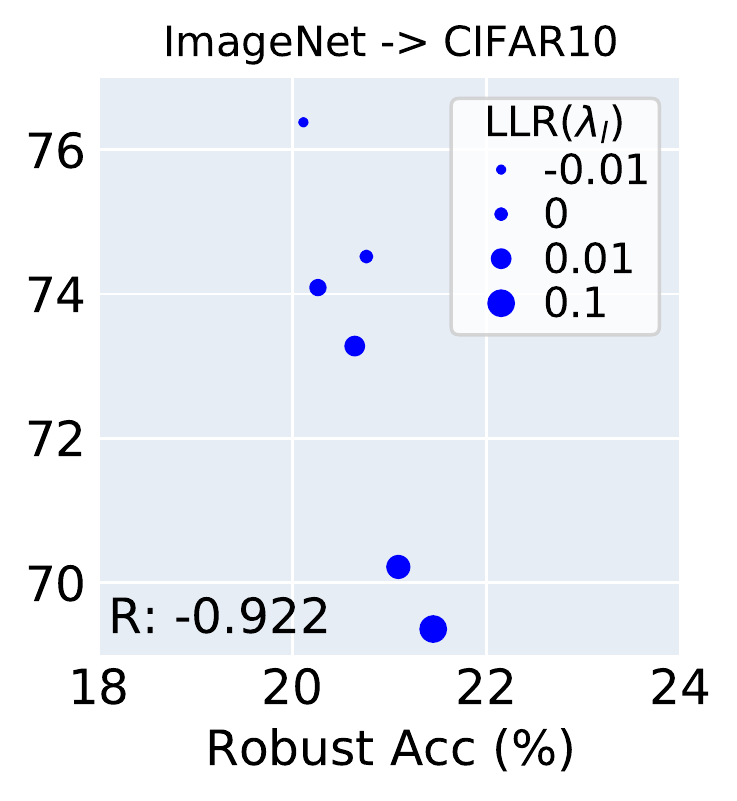}
    \includegraphics[height=1.4in]{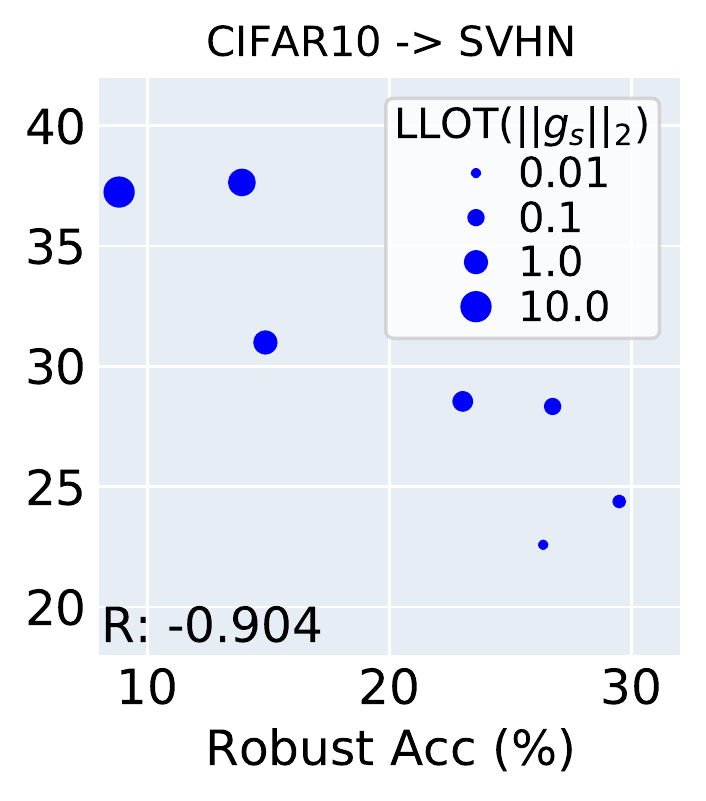}
    \includegraphics[height=1.4in]{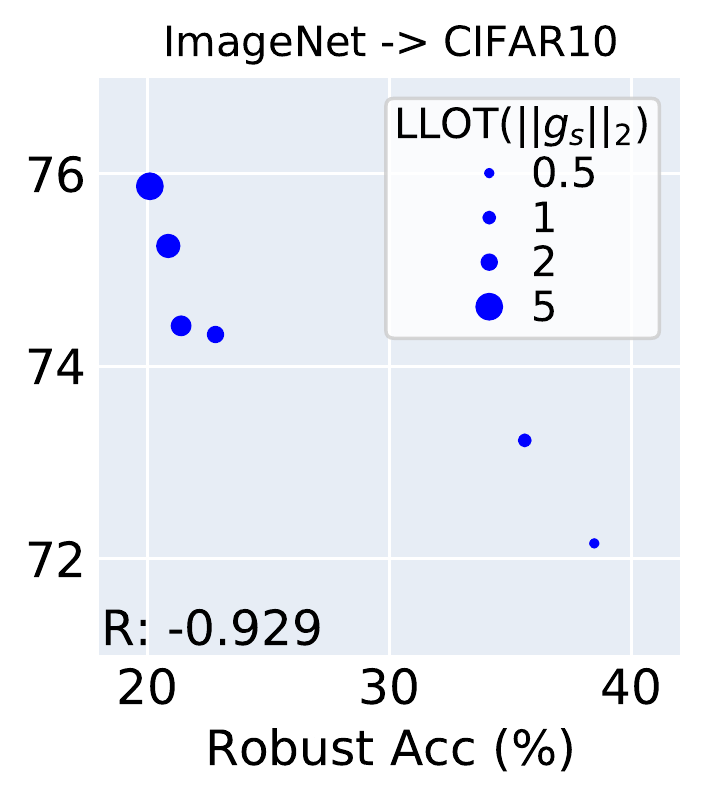}
    \caption{Robustness and absolute transferability when we control the norm of last layer with last-layer regularization (LLR) and last-layer orthogonal training (LLOT) with different parameters.}
    \label{fig:app-result-ll}
\end{figure}

\begin{figure}[htb]
    \centering
    \includegraphics[height=1.4in]{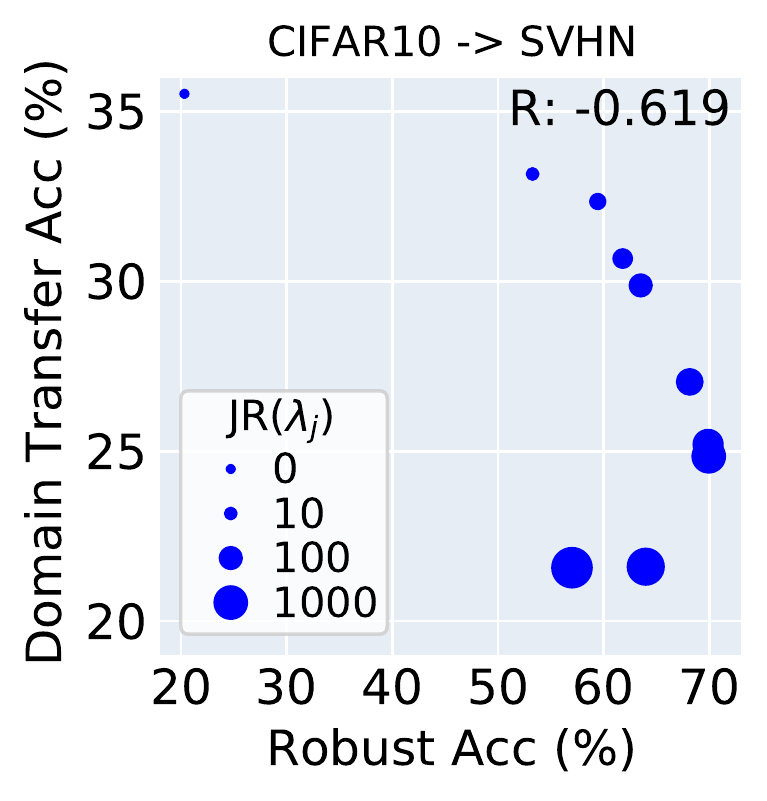}
    \includegraphics[height=1.4in]{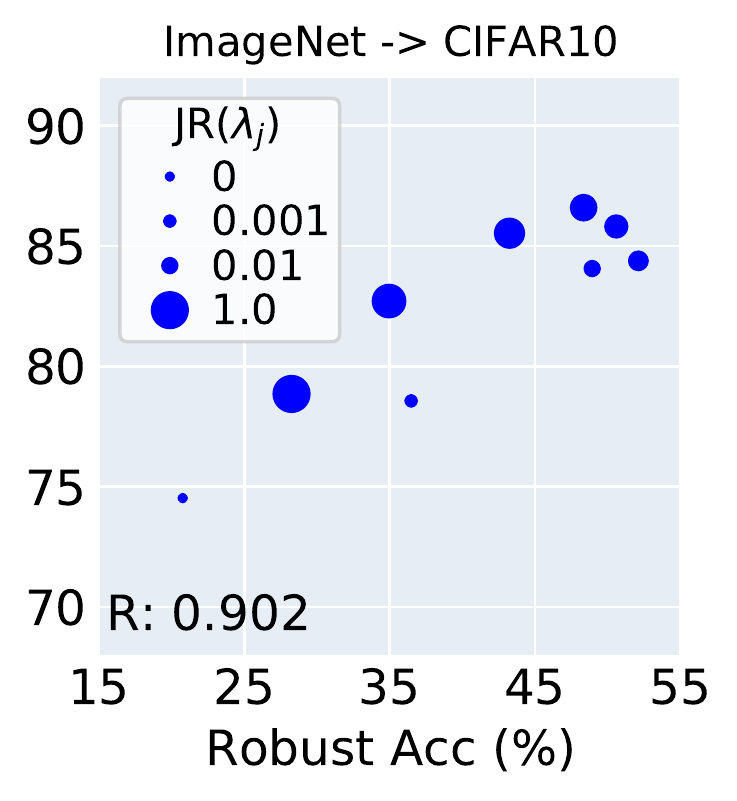}
    \includegraphics[height=1.4in]{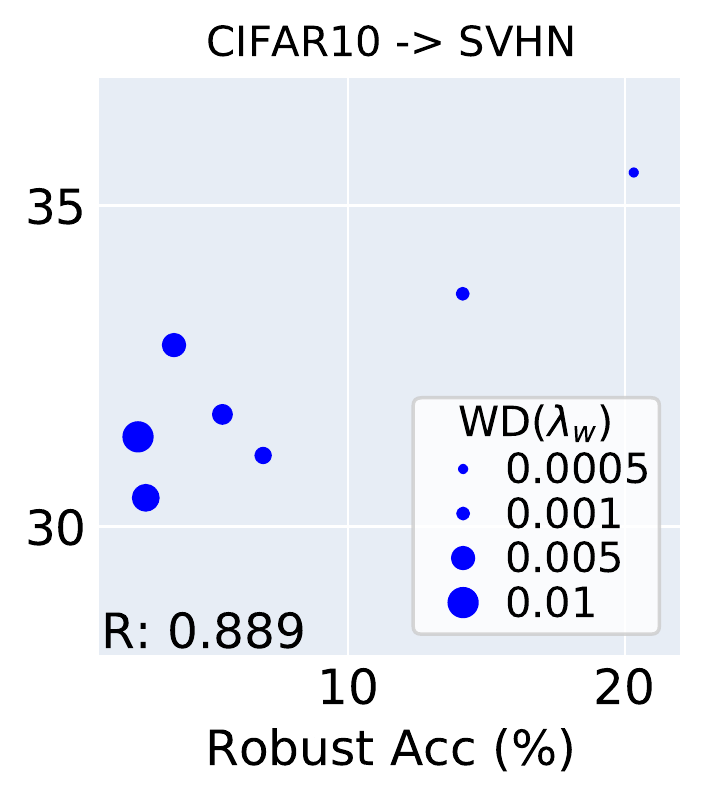}
    \includegraphics[height=1.4in]{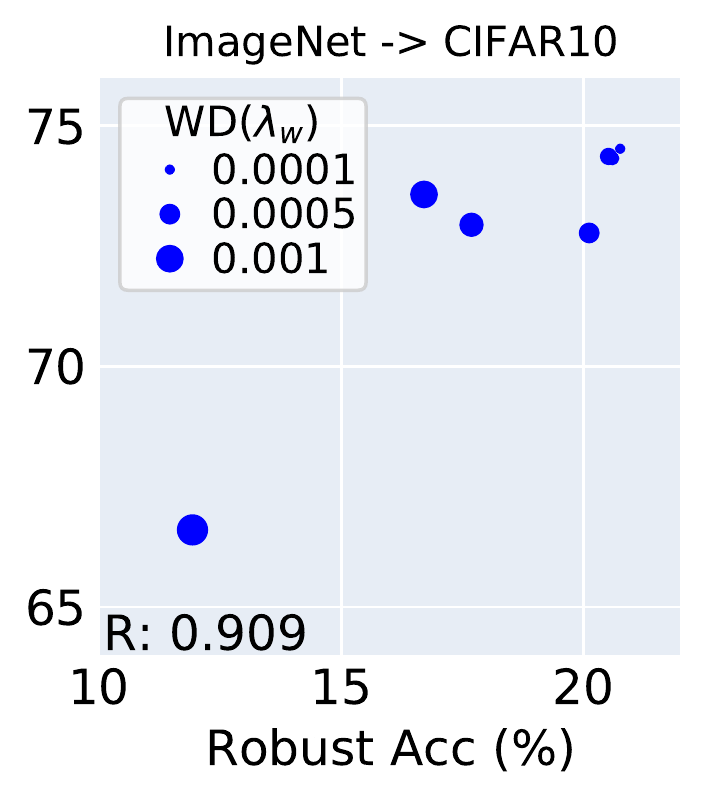}
    \caption{Robustness and absolute transferability when we regularize the feature extractor with Jacobian Regularization (JR) and weight decay (WD) with different parameters.}
    \label{fig:app-result-jr}
\end{figure}

\begin{figure}[htb]
    \centering
    \includegraphics[height=1.4in]{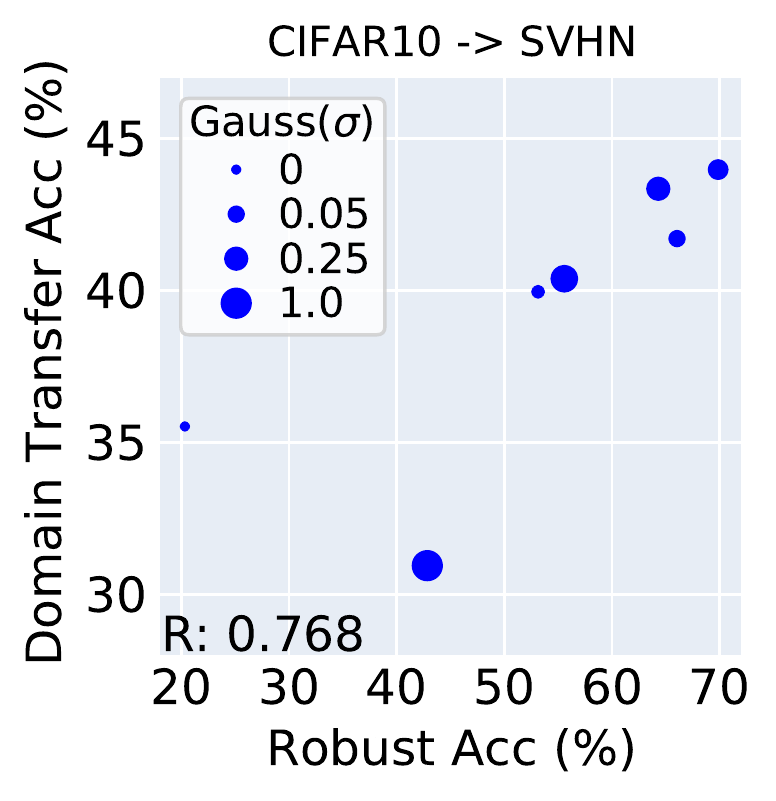}
    \includegraphics[height=1.4in]{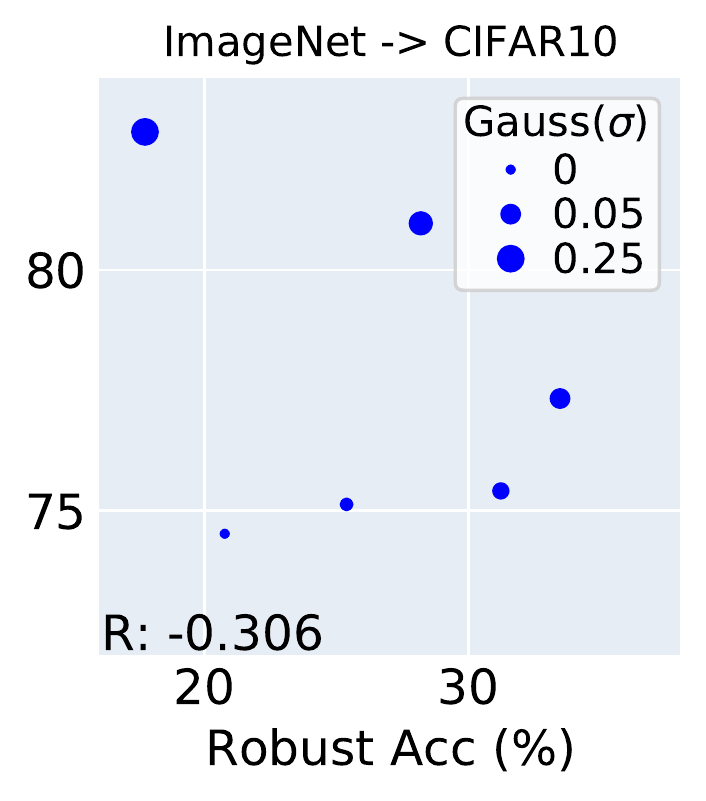}
    \includegraphics[height=1.4in]{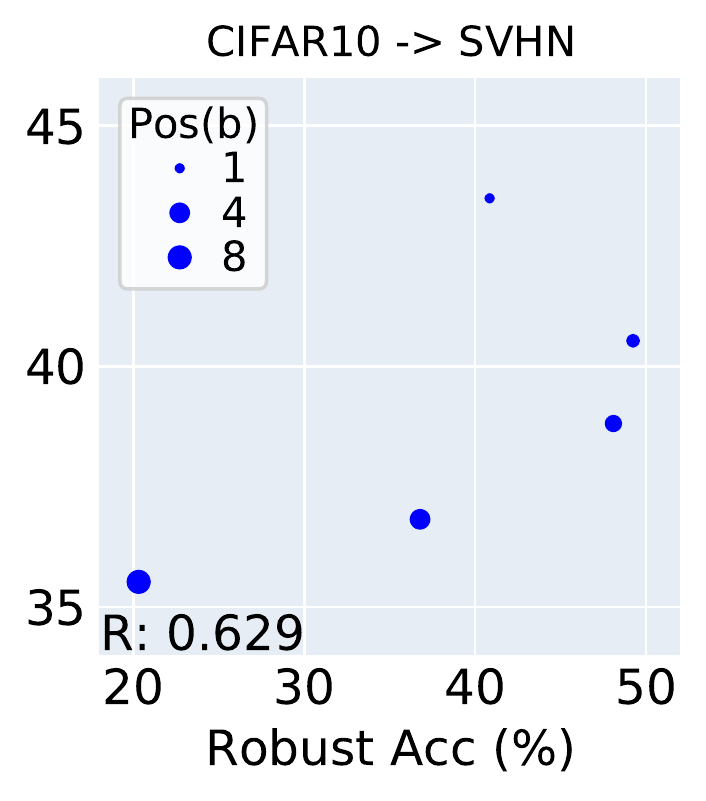}
    \includegraphics[height=1.4in]{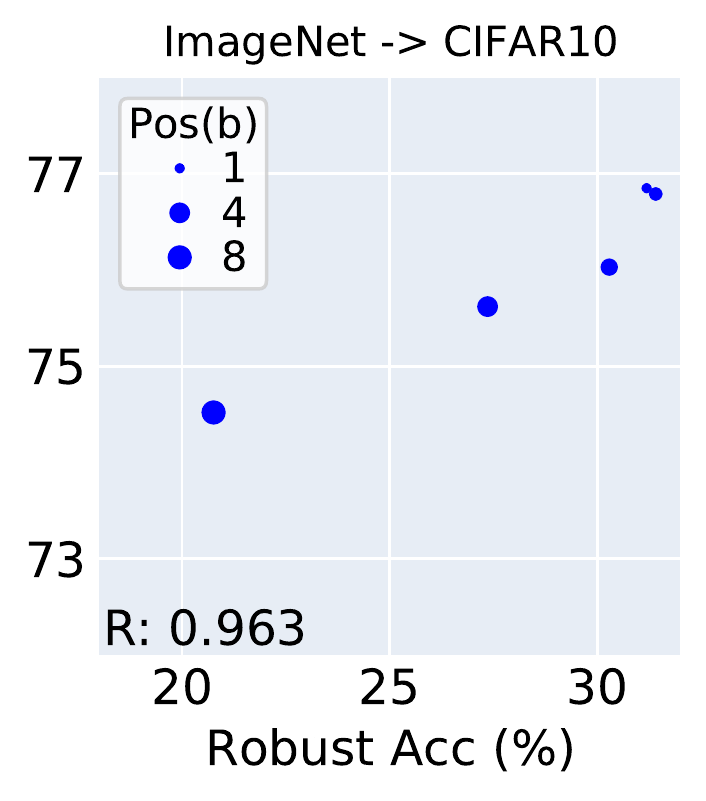}
    \caption{Robustness and absolute transferability when we use Gaussian noise (Gauss) and posterize (Pos) as data augmentations with different parameters.}
    \label{fig:app-result-da}
\end{figure}

\begin{figure}[htb]
    \centering
    \includegraphics[height=1.4in]{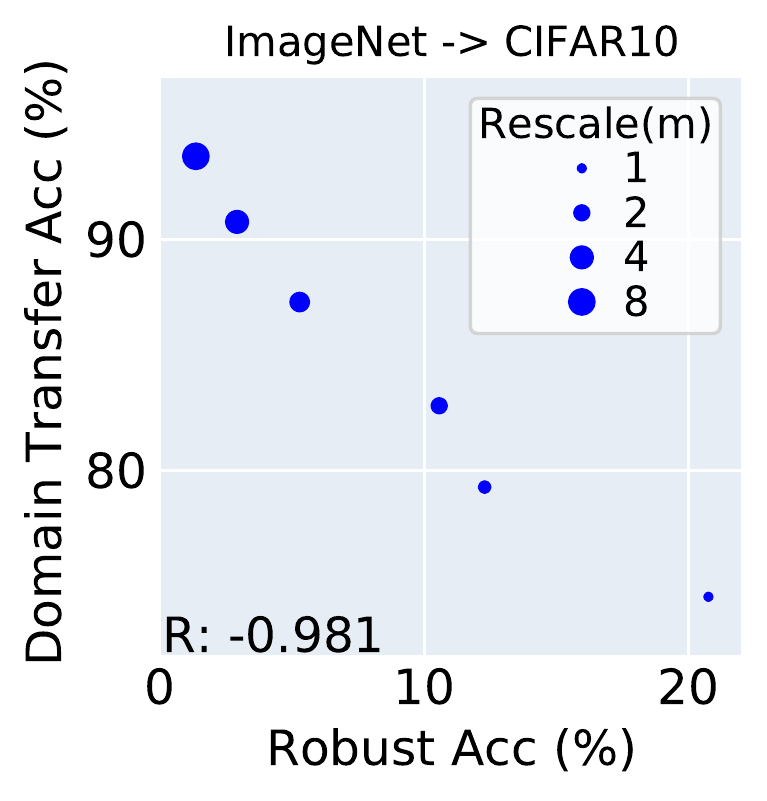}
    \includegraphics[height=1.4in]{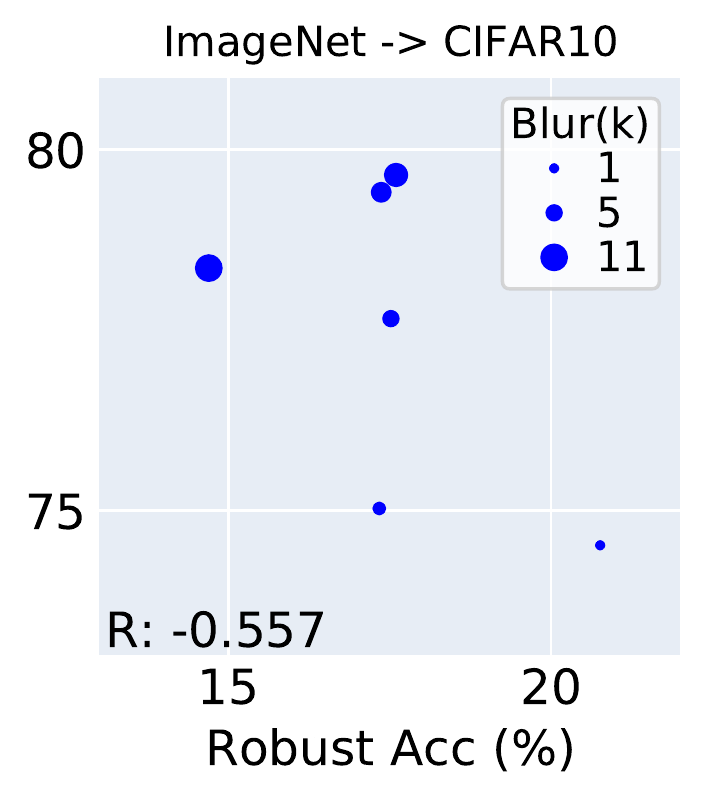}
    \caption{Robustness and absolute transferability when we use rescale and blur as data augmentations with different parameters.}
    \label{fig:app-result-resolution}
\end{figure}

\subsection{Absolute Transferability vs. Regularization Magnitude}
\label{sec:app-cmp-result}
We show the absolute transferability w.r.t. different regularization magnitude in Fig~\ref{fig:app-result-coef-ll},\ref{fig:app-result-coef-jr},\ref{fig:app-result-coef-da} and \ref{fig:app-result-coef-resolution} respectively. The green dashed line is the transferability of vanilla trained model and the red dashed line is that of adversarially trained model. We can observe that with most single regularization or augmentation, the model transferability can be improved compared with vanilla trained model and sometimes even outperform the adversarially trained ones. In some cases (e.g. jacobian regularize), the performance drops because the larger regularization hurts benign accuracy and therefore the absolute transferability drops.

\begin{figure}[htb]
    \centering
    \includegraphics[height=1.4in]{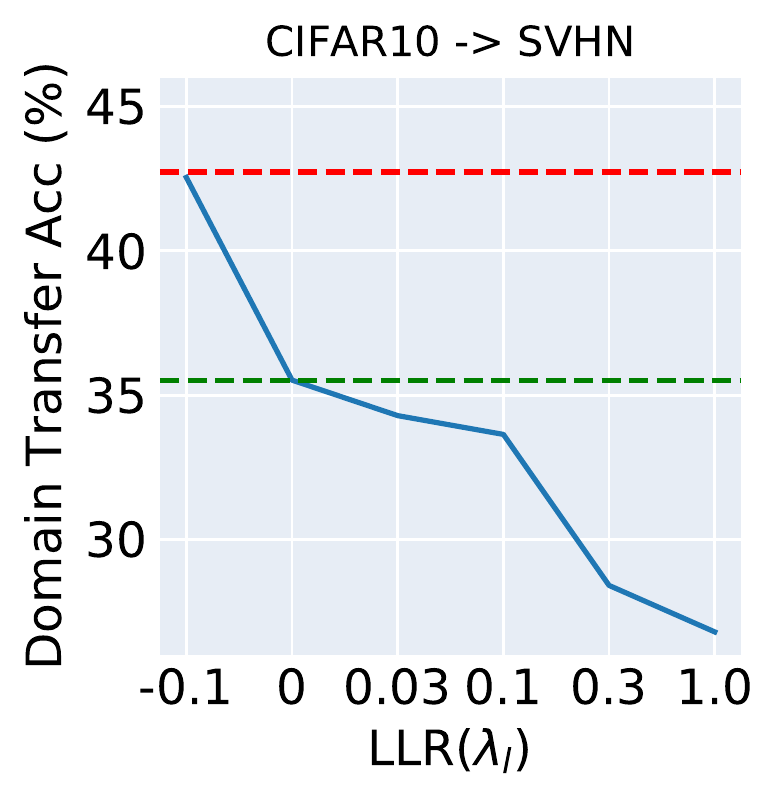}
    \includegraphics[height=1.4in]{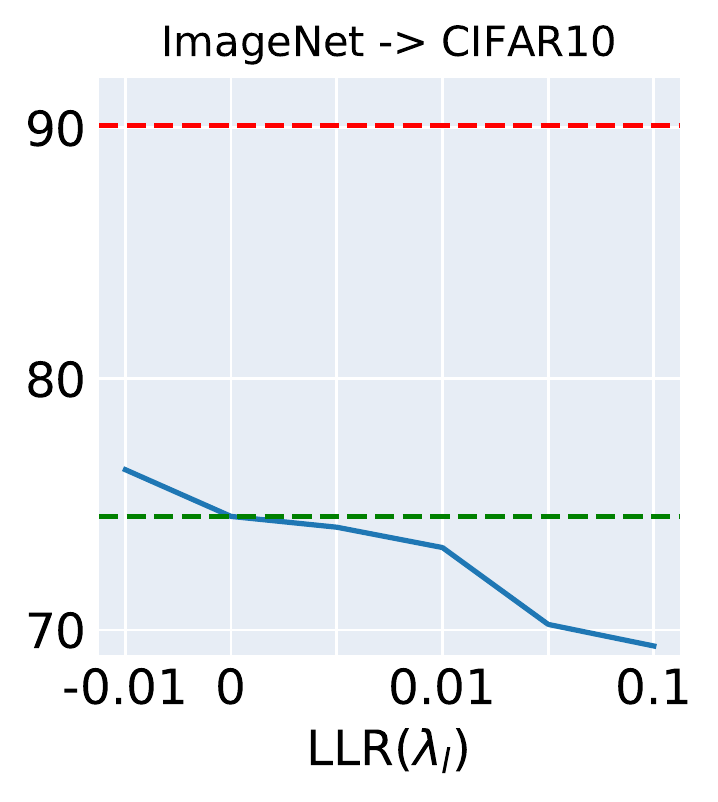}
    \includegraphics[height=1.4in]{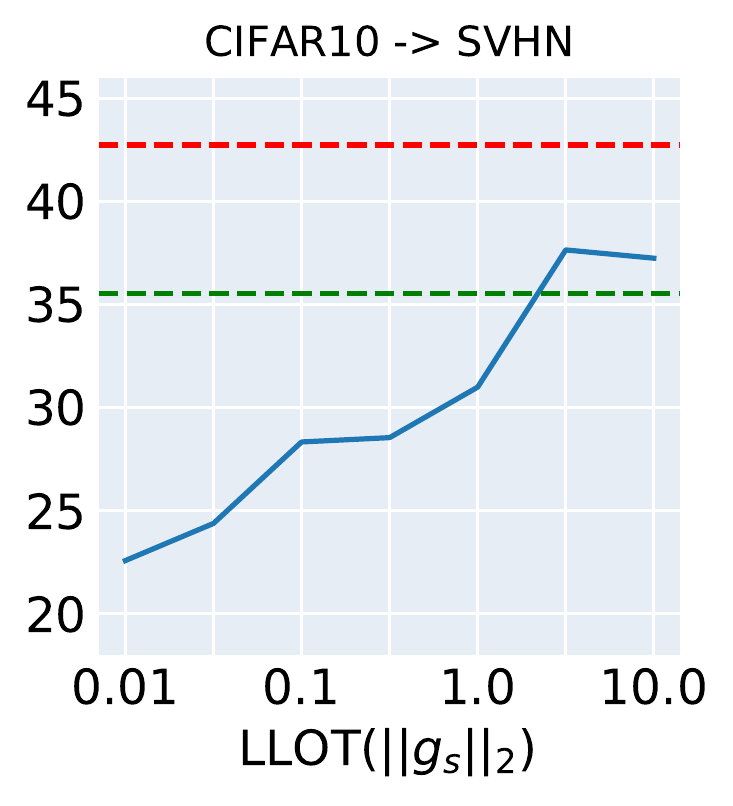}
    \includegraphics[height=1.4in]{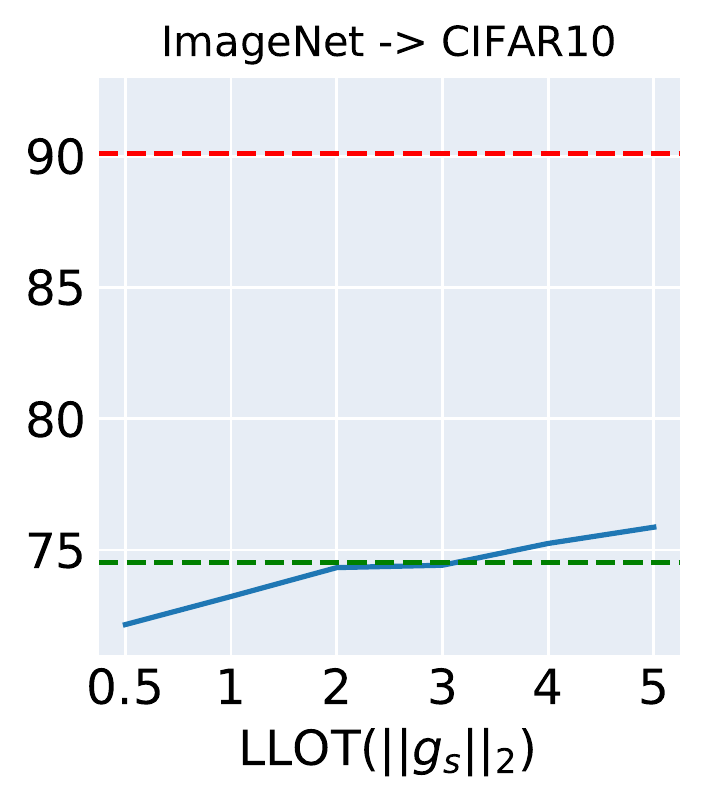}
    \caption{Absolute transferability when we control the norm of last layer with last-layer regularization (LLR) and last-layer orthogonal training (LLOT) with different parameters. Green dashed line is the transferability of vanilla trained model and red dashed line is that of adversarially trained model.}
    \label{fig:app-result-coef-ll}
\end{figure}

\begin{figure}[htb]
    \centering
    \includegraphics[height=1.4in]{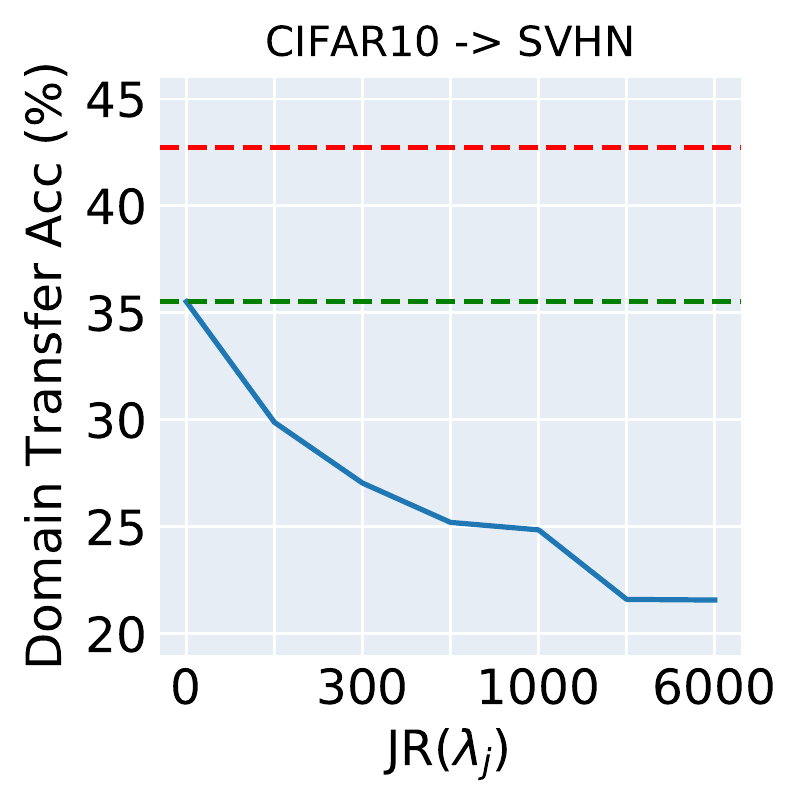}
    \includegraphics[height=1.4in]{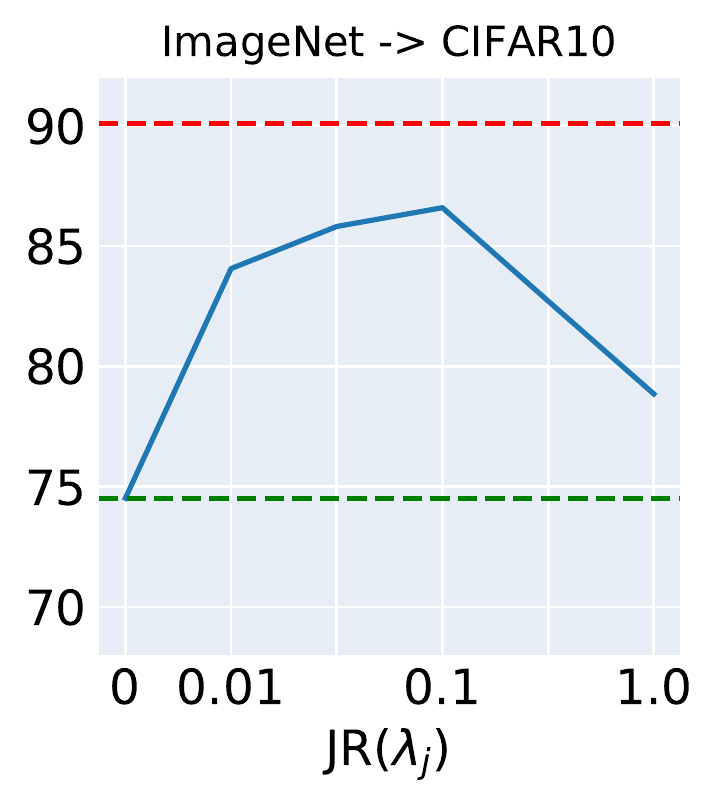}
    \includegraphics[height=1.4in]{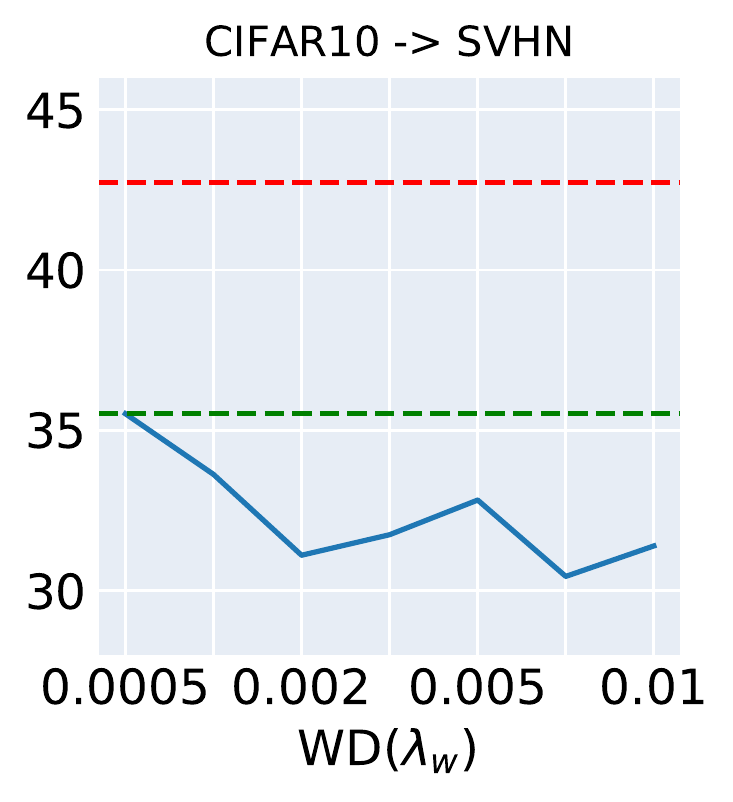}
    \includegraphics[height=1.4in]{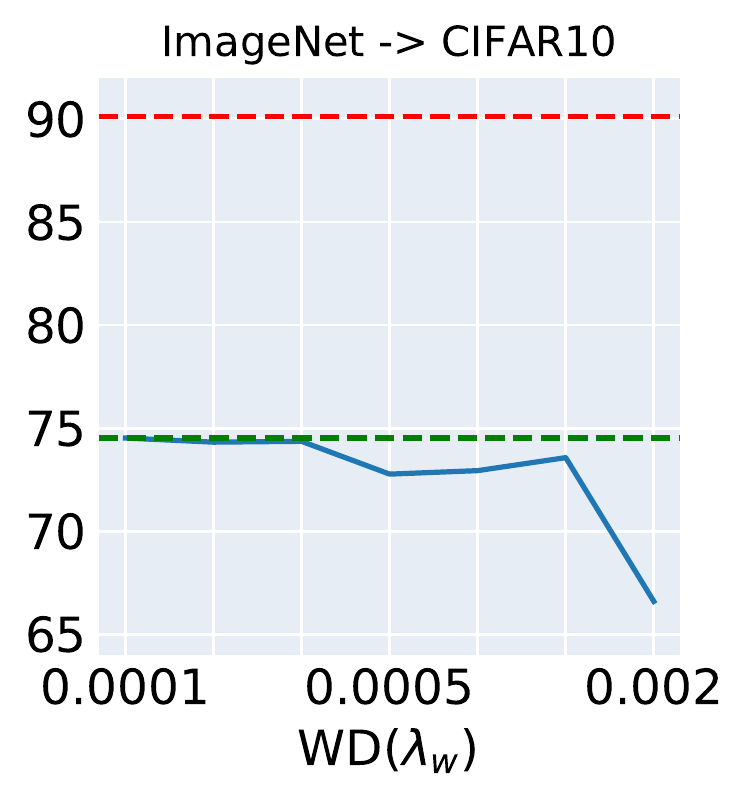}
    \caption{Absolute transferability when we regularize the feature extractor with Jacobian Regularization (JR) and weight decay (WD) with different parameters. Green dashed line is the transferability of vanilla trained model and red dashed line is that of adversarially trained model.}
    \label{fig:app-result-coef-jr}
\end{figure}

\begin{figure}[htb]
    \centering
    \includegraphics[height=1.4in]{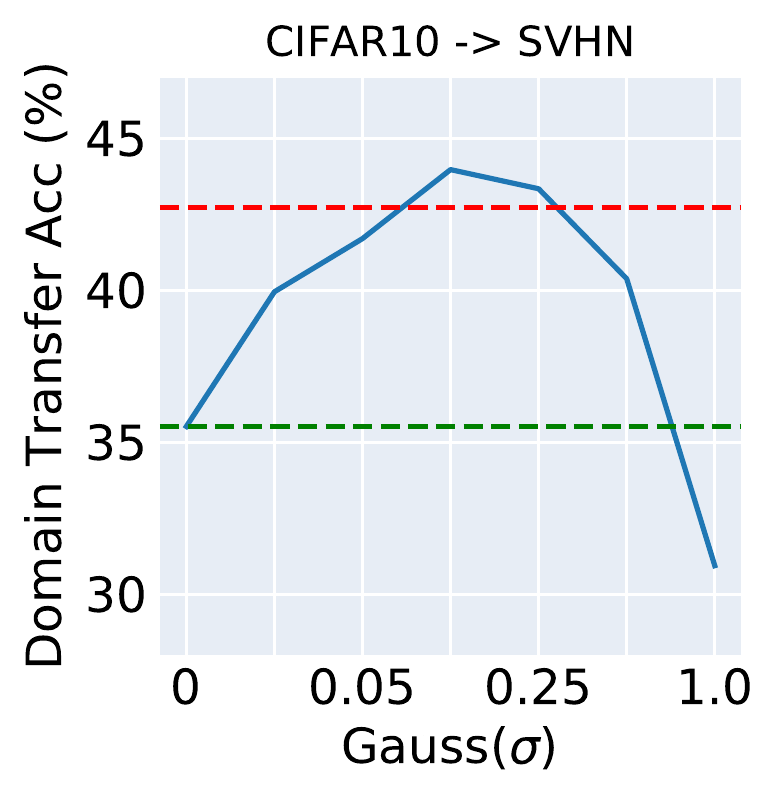}
    \includegraphics[height=1.4in]{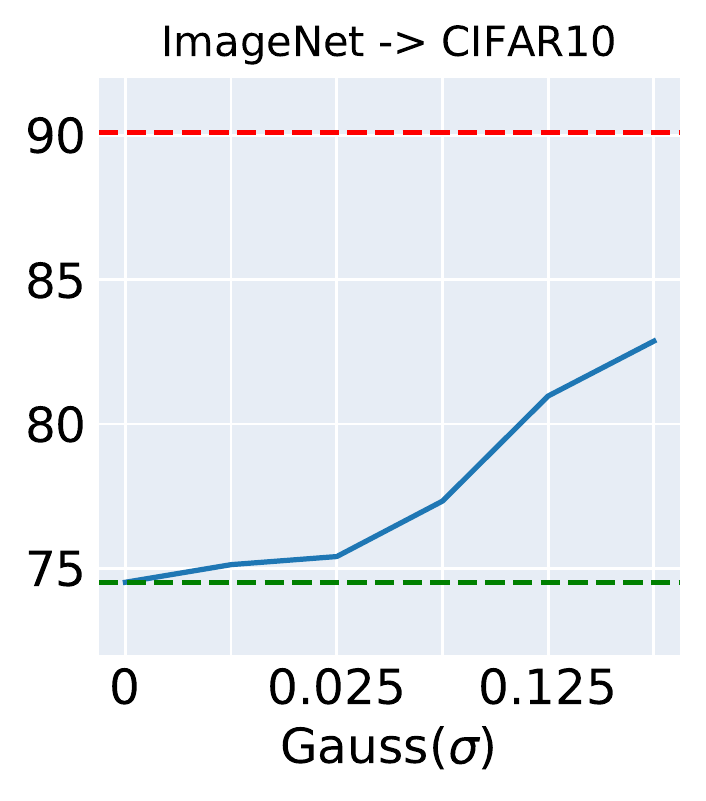}
    \includegraphics[height=1.4in]{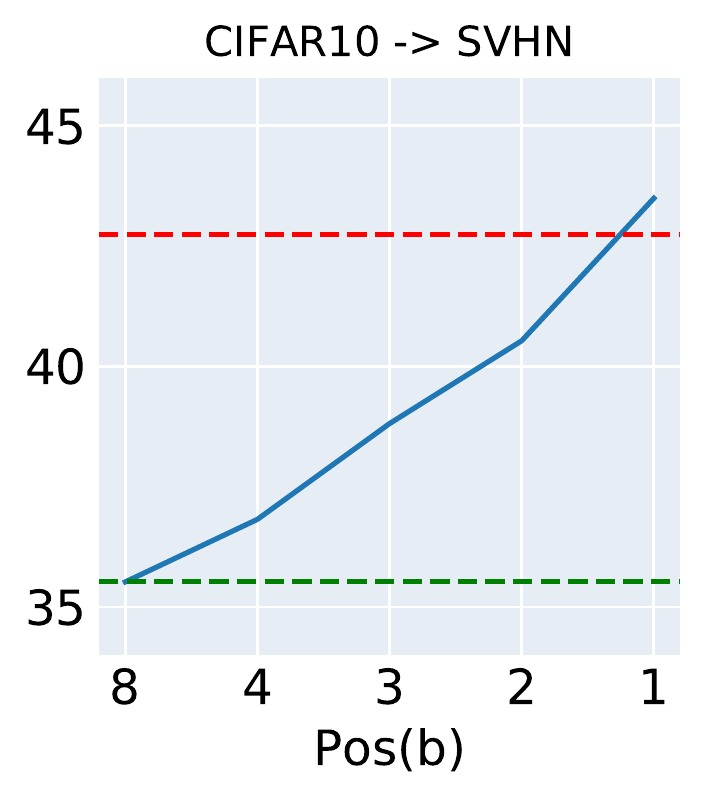}
    \includegraphics[height=1.4in]{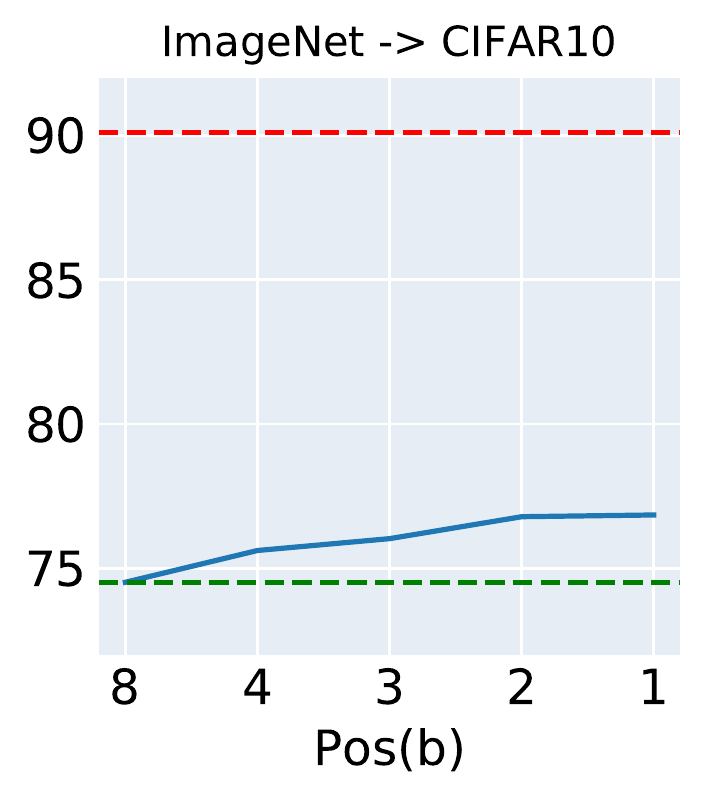}
    \caption{Absolute transferability when we use Gaussian noise (Gauss) and posterize (Pos) as data augmentations with different parameters. Green dashed line is the transferability of vanilla trained model and red dashed line is that of adversarially trained model.}
    \label{fig:app-result-coef-da}
\end{figure}

\begin{figure}[htb]
    \centering
    \includegraphics[height=1.4in]{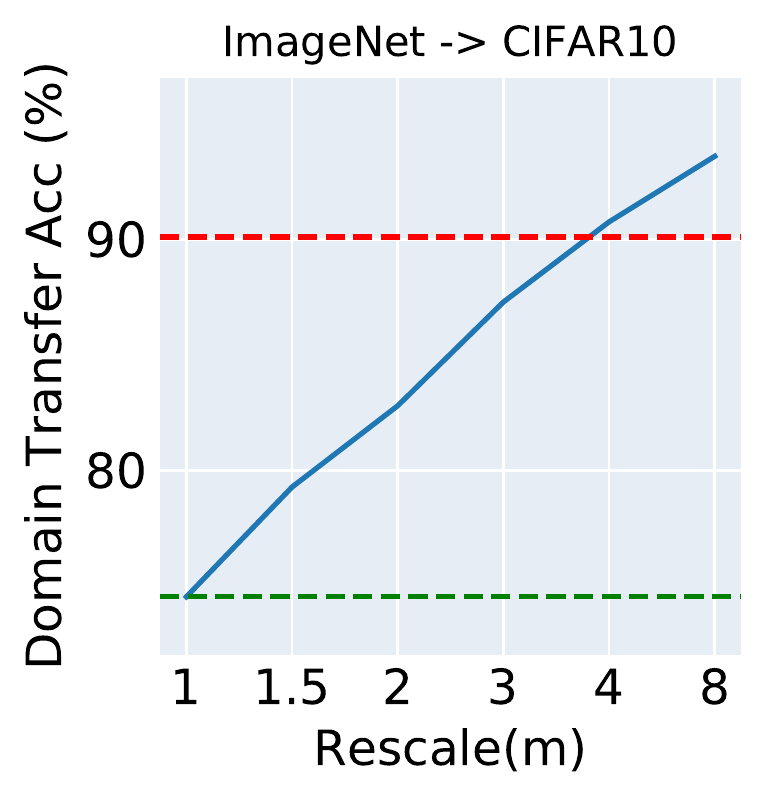}
    \includegraphics[height=1.4in]{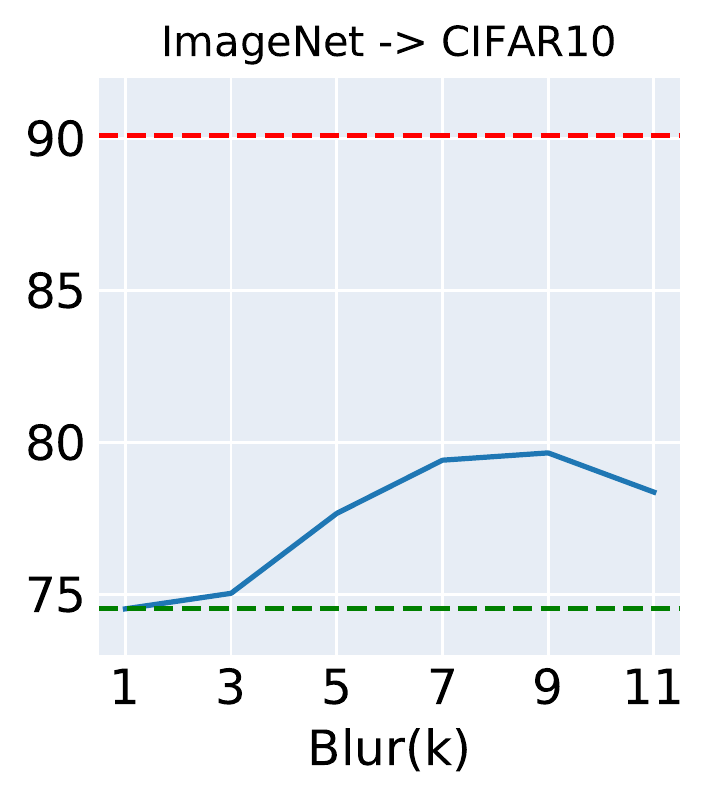}
    \caption{Absolute transferability when we use rescale and blur as data augmentations with different parameters for ImageNet. Green dashed line is the transferability of vanilla trained model and red dashed line is that of adversarially trained model.}
    \label{fig:app-result-coef-resolution}
\end{figure}

\subsection{Results of Other Model Structures}
\label{sec:app-wrncnn-result}
To further validate our evaluation results, we evaluate the experiments on another model structure. We use a simpler CNN model for CIFAR-10 to SVHN and a more complicated WideResNet-50 for ImageNet to CIFAR-10. The CNN model consists of four convolutional layer with $3 \times 3$ kernels and 32,32,64,64 channels respectively, followed by two hidden layer with size 256. A $2\times 2$ max pooling is calculated after the second and fourth layer. Other settings are the same as in the main text. Note that in some settings the new model cannot converge, and therefore we will omit the result. In addition, Jacobian regularization cannot be applied on WideResNet-50 because of the large memory cost, so we do not include it in the figures. The results are shown in Figure~\ref{fig:app-wrncnn-result-ll}, \ref{fig:app-wrncnn-result-jr} and \ref{fig:app-wrncnn-result-da}.

\begin{figure}[htb]
    \centering
    \includegraphics[height=1.4in]{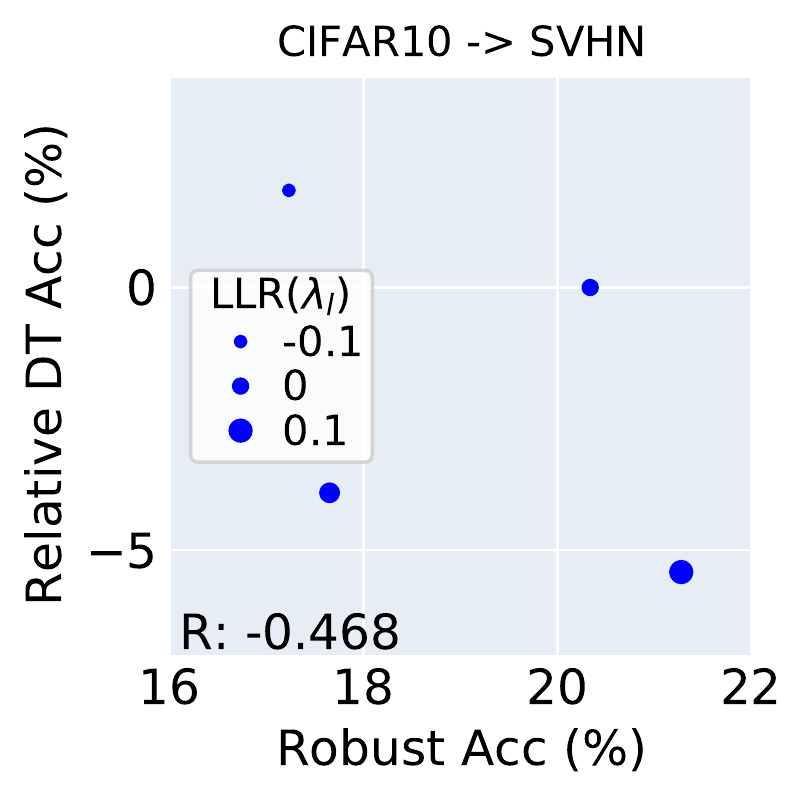}
    \includegraphics[height=1.4in]{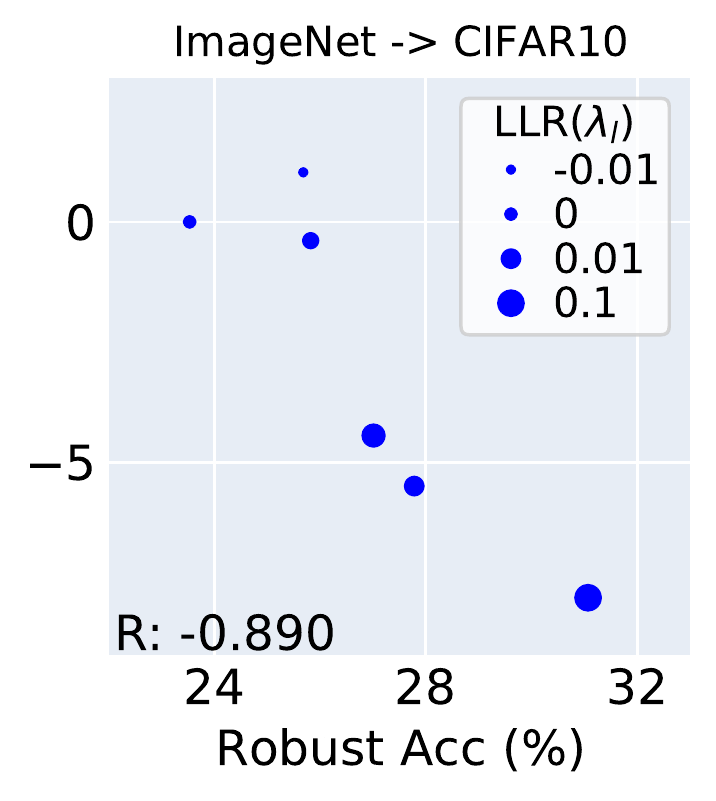}
    \includegraphics[height=1.4in]{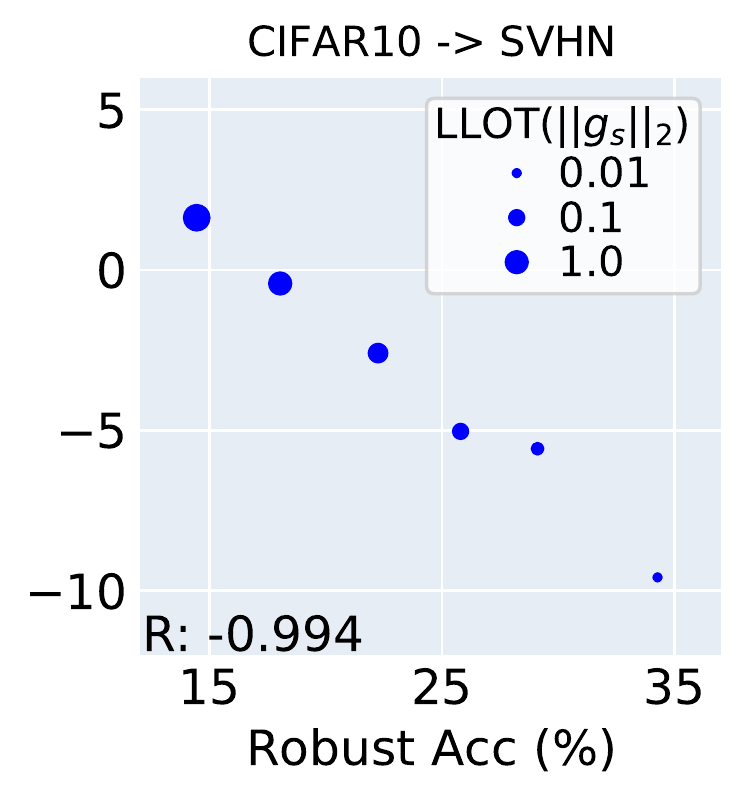}
    \includegraphics[height=1.4in]{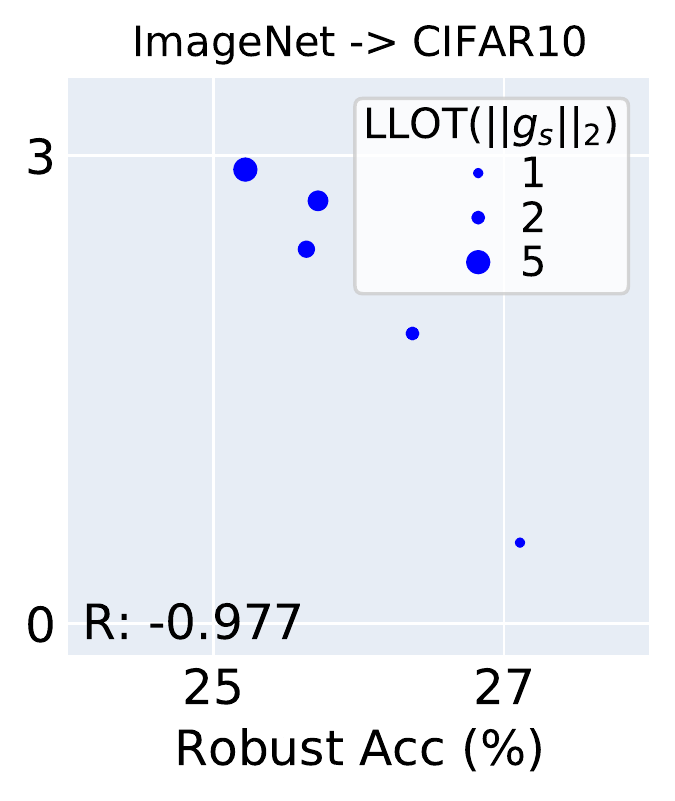}
    \caption{Robustness and transferability for the other model structure when we control the norm of last layer with last-layer regularization (LLR) and last-layer orthogonal training (LLOT) with different parameters.}
    \label{fig:app-wrncnn-result-ll}
\end{figure}
\begin{figure}[htb]
    \centering
    \includegraphics[height=1.4in]{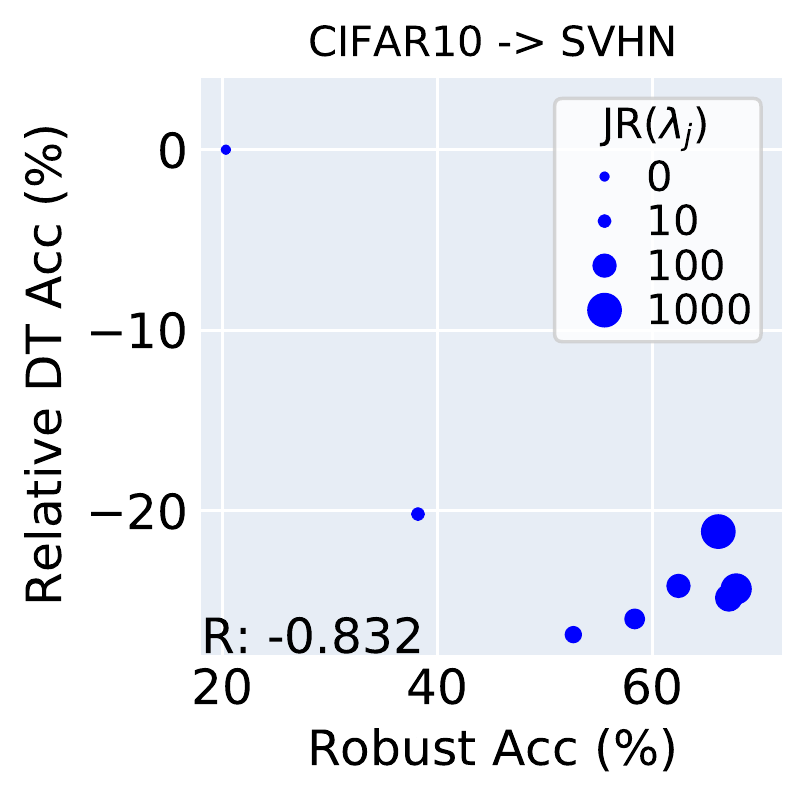}
    \includegraphics[height=1.4in]{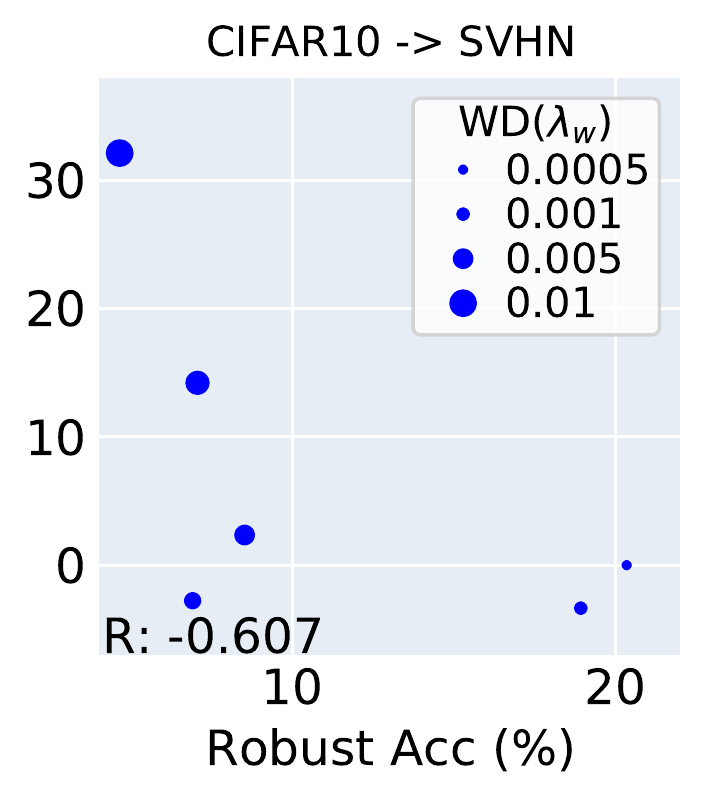}
    \includegraphics[height=1.4in]{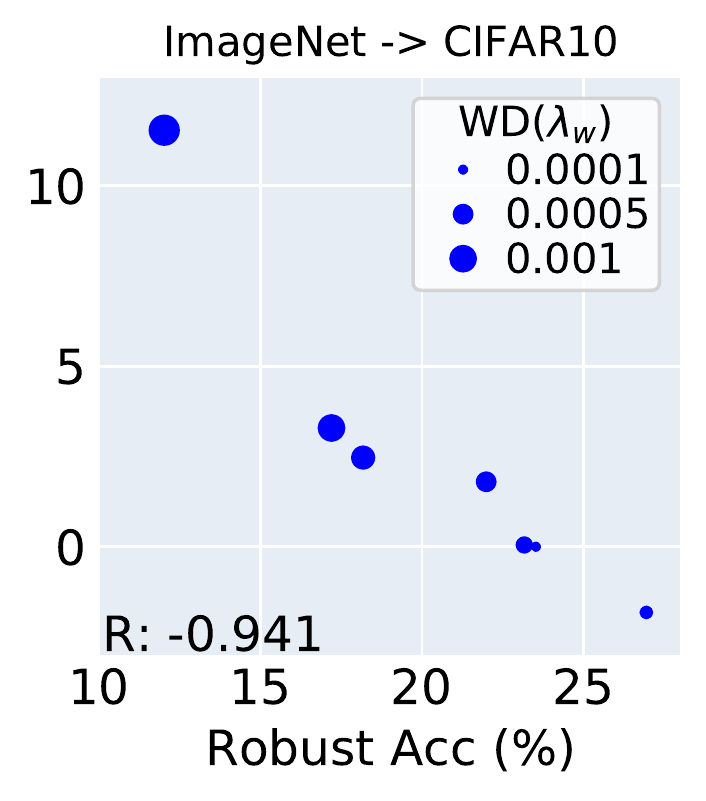}
    \caption{Robustness and transferability for the other model structure when we regularize the feature extractor with Jacobian Regularization (JR) and weight decay (WD) with different parameters.}
    \label{fig:app-wrncnn-result-jr}
\end{figure}
\begin{figure}[htb]
    \centering
    \includegraphics[height=1.4in]{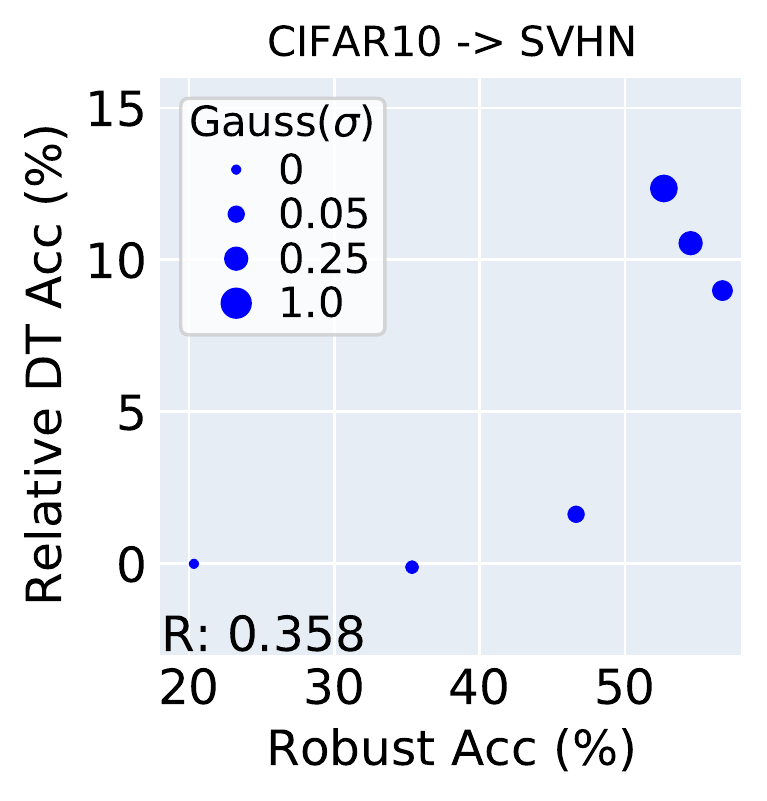}
    \includegraphics[height=1.4in]{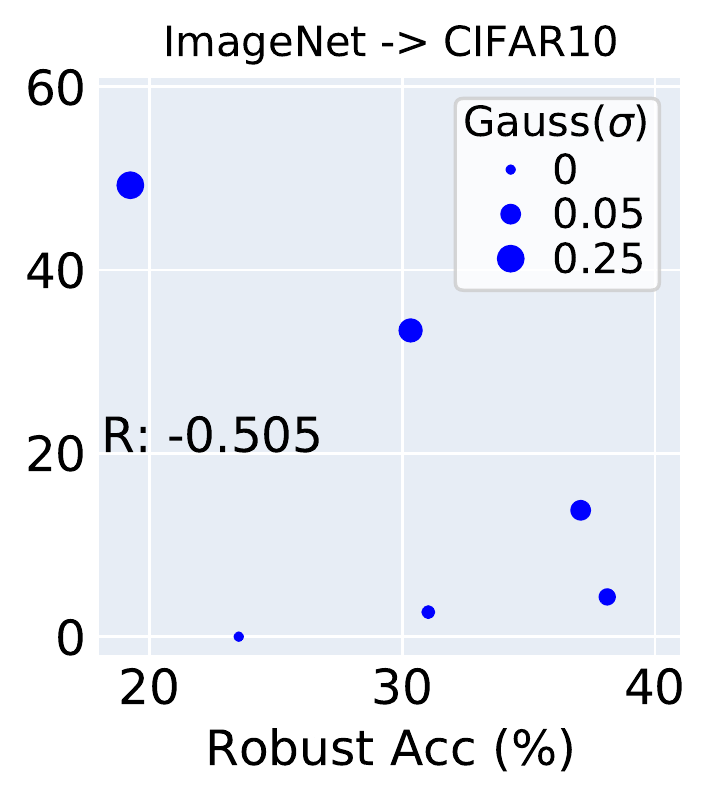}
    \includegraphics[height=1.4in]{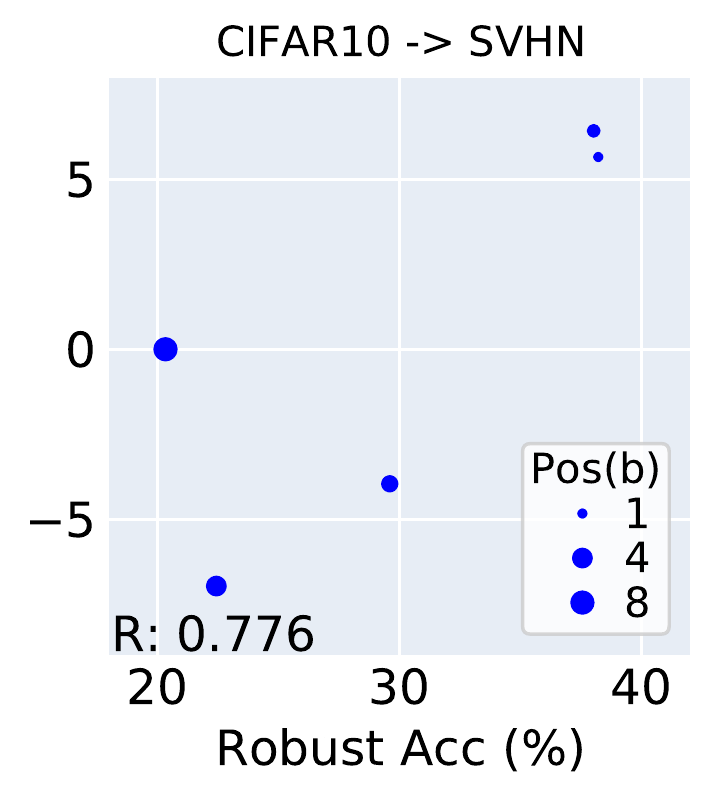}
    \includegraphics[height=1.4in]{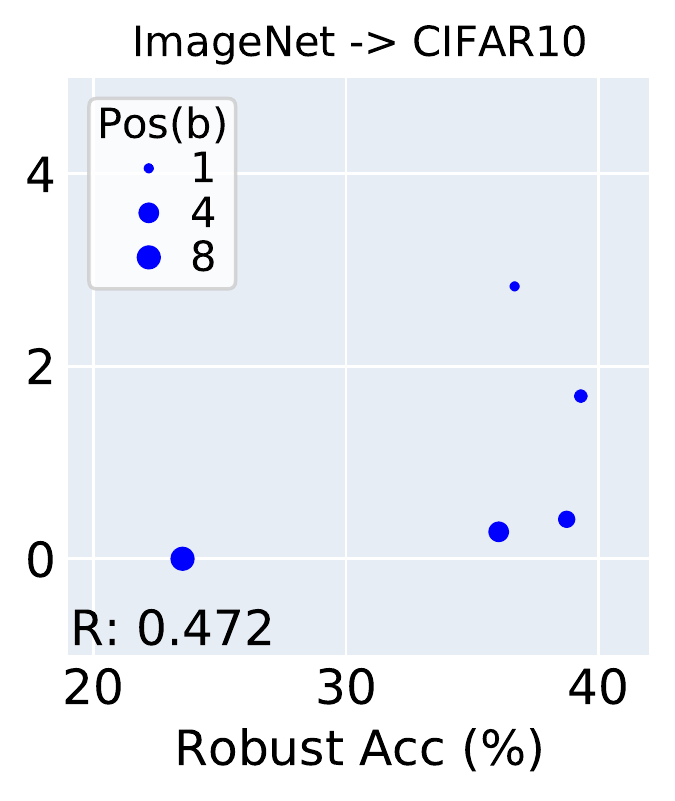}
    \caption{Robustness and transferability for the other model structure when we use Gaussian noise (Gauss) and posterize (Pos) as data augmentations with different parameters.}
    \label{fig:app-wrncnn-result-da}
\end{figure}

{
\subsection{Robustness Evaluation with AutoAttack}
\label{sec:app-autoatk}
Besides PGD attack, we also evaluate the model robustness using the stronger AutoAttack. We use APGD-CE, APGD-T and FAB-T as the sub-attacks in AutoAttack with 100 steps. Since the accuracy will decrease after the stronger attack, we use a slightly smaller $\eps=0.2$ to better visualize the trend. The results are shown in Fig.~\ref{fig:app-cifar-autoatk}. We can observe that the trend is similar with what we observed before when we used the PGD attack - domain generalization is an effect of regularization and data augmentation, and it is sometimes negatively correlated with model robustness. Also, augmentations like rotation and translation, which violates the sufficient condition, do not improve the domain generalization.}
\begin{figure}
    \centering
    \includegraphics[height=1.4in]{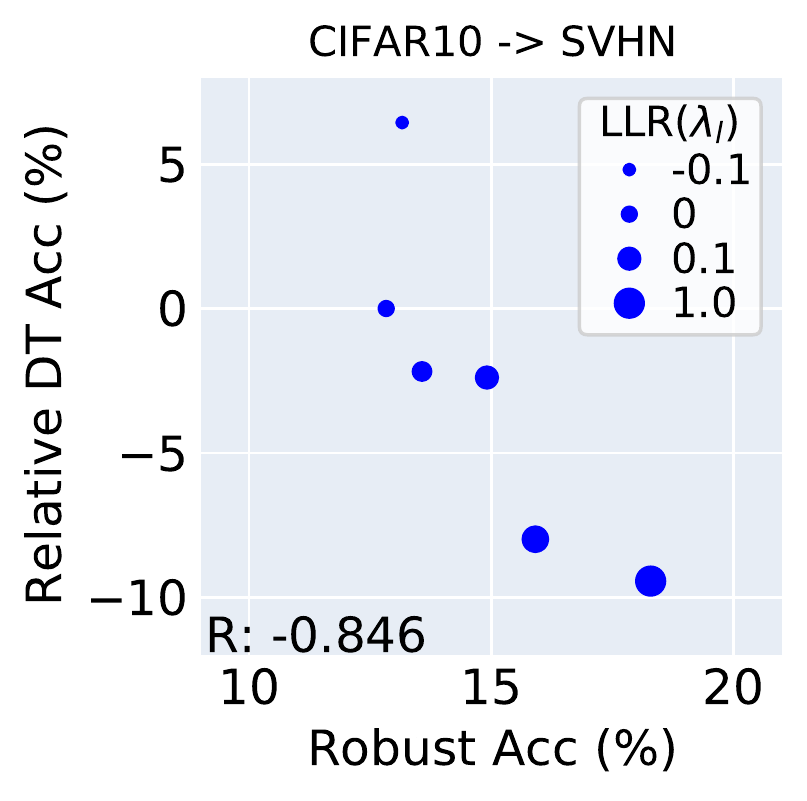}
    \includegraphics[height=1.4in]{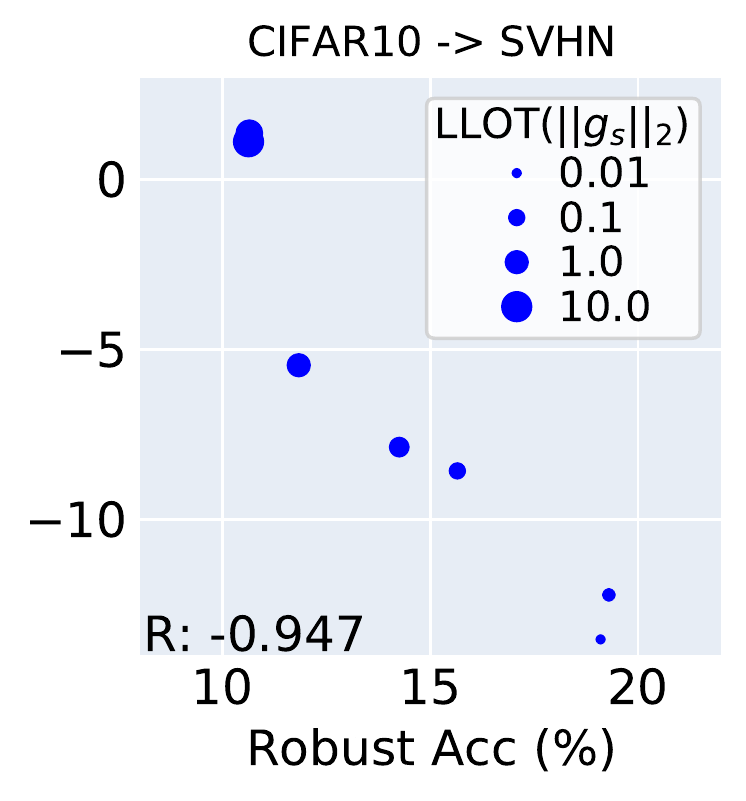}
    \includegraphics[height=1.4in]{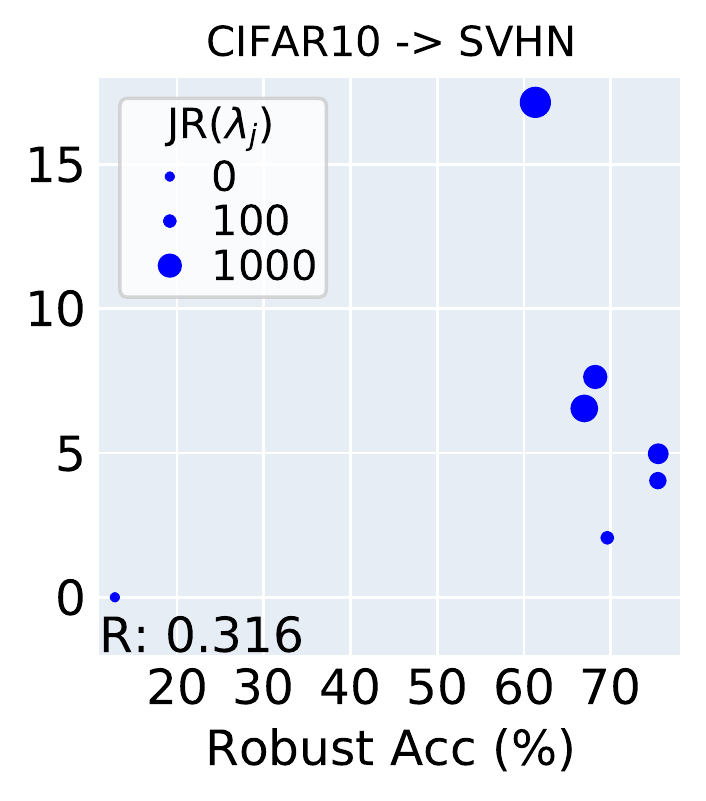}
    \includegraphics[height=1.4in]{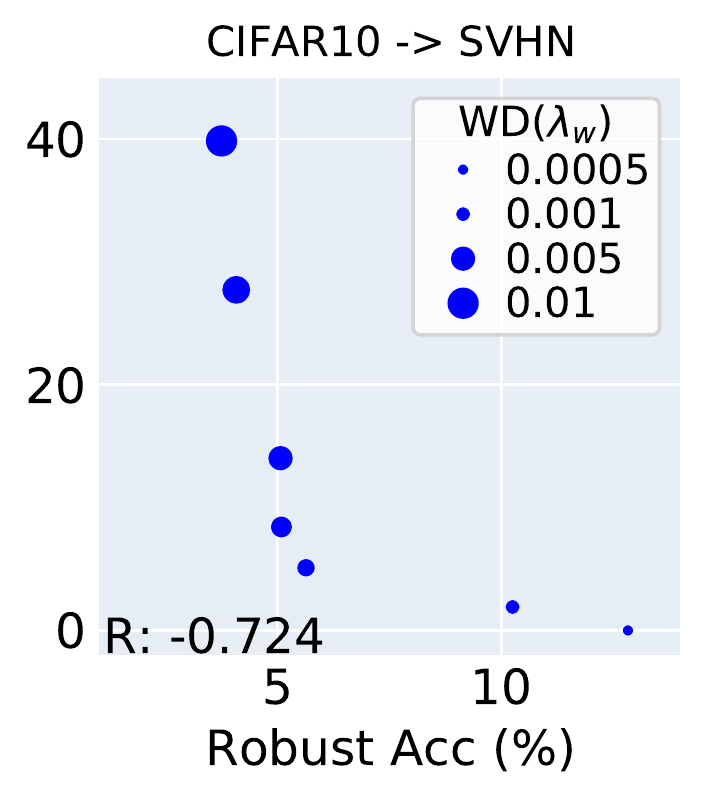}
    
    \includegraphics[height=1.4in]{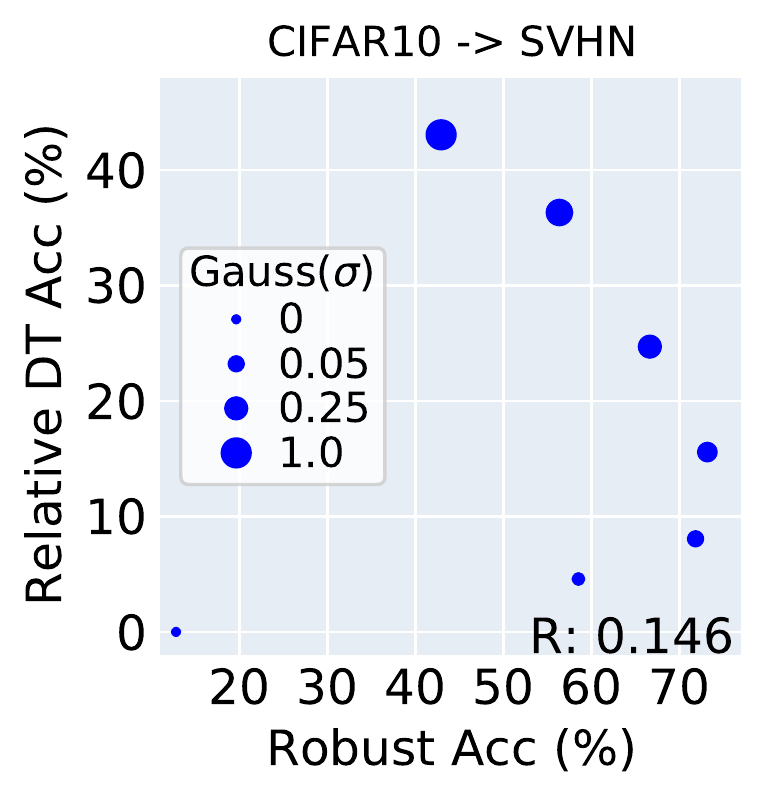}
    \includegraphics[height=1.4in]{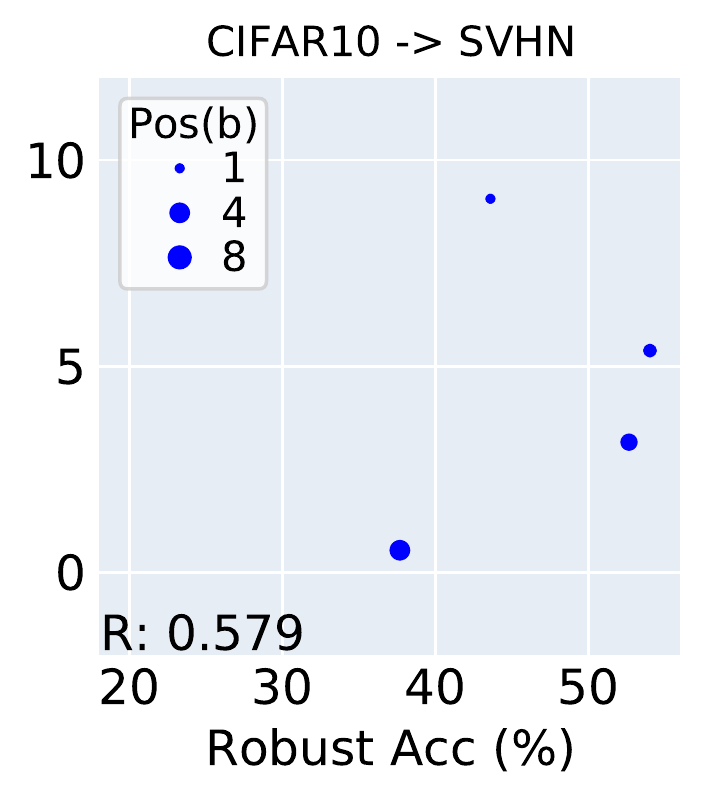}
    \includegraphics[height=1.4in]{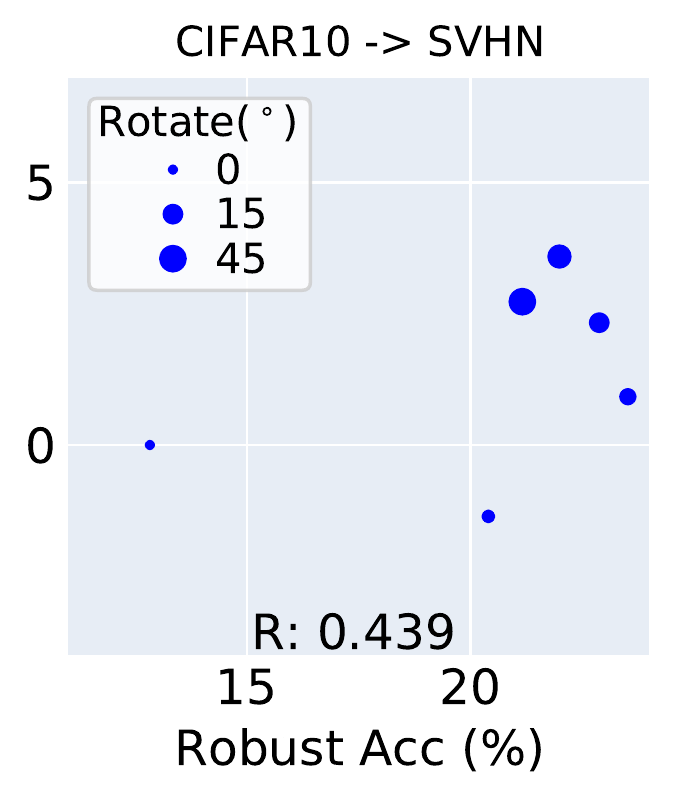}
    \includegraphics[height=1.4in]{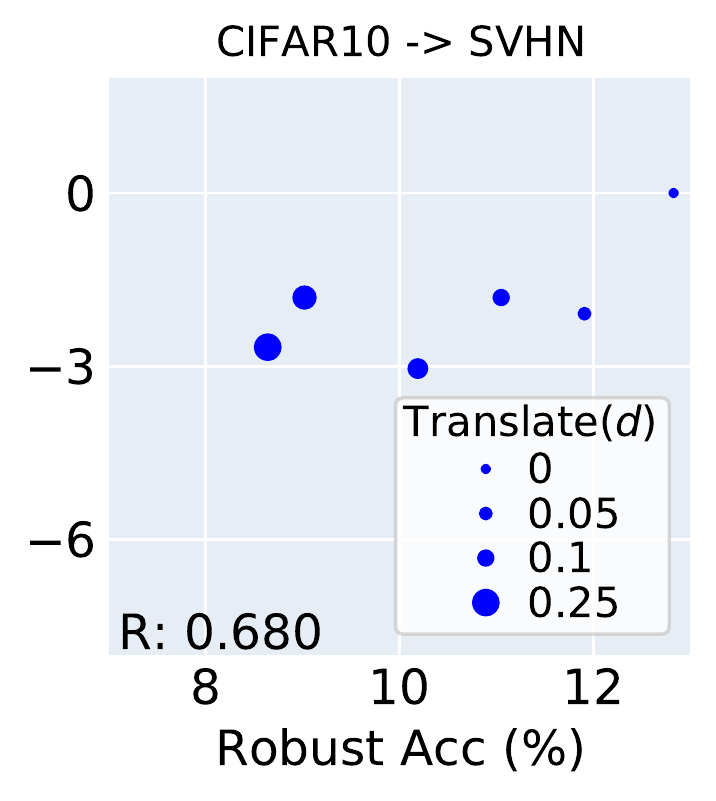}
    \caption{{Relationship between robustness and transferability on CIFAR-10 when we use AutoAttack to evaluate model robustness.}}
    \label{fig:app-cifar-autoatk}
\end{figure}

\end{document}